\tikzset{auto, >=stealth}
\tikzset{every edge/.append style={shorten >= 1pt}}
\tikzset{main node/.style={circle,draw,minimum size=1cm,inner sep=0pt},
            }
\newtheorem{theorem}{Theorem}
\newtheorem{corollary}{Corollary}
\newtheorem{definition}{Definition}
\newtheorem{lemma}{Lemma}
\newtheorem{proof}{Proof}
\newtheorem{remark}{Remark}
\newtheorem{task}{Task}
\newtcbox{\remind}{%
  enhanced jigsaw,nobeforeafter,size=fbox,sharp corners,
  shrink tight,
  extrude by=3pt,
  tcbox raise base,
  borderline={0.5pt}{-1pt}{red,opacity=0.75},
  opacityframe=0.75,
  opacityback=0.5,
}
\newcommand{\setOfQFunctions}{Q}
\newcommand{\setOfQFunctionsNew}{Q_{\textrm{new}}}            
\newcommand{\set}[1]{\{ #1 \}}
\newcommand{\reals}{\mathbb{R}}
\newcommand{\methodA}{JIRP}
\newcommand{\methodB}{QAS}
\newcommand{\methodC}{HRL}
\newcommand{\Traffic}{Autonomous Vehicle Scenario}
\newcommand{\traffic}{autonomous vehicle scenario}
\newcommand{\Office}{Office World Scenario}
\newcommand{\office}{office world scenario}
\newcommand{\OfficeA}{Task 2.1}
\newcommand{\officeA}{task 2.1}
\newcommand{\OfficeB}{Task 2.2}
\newcommand{\officeB}{task 2.2}
\newcommand{\OfficeC}{Task 2.3}
\newcommand{\officeC}{task 2.3} 
\newcommand{\OfficeD}{Task 2.4}
\newcommand{\officeD}{task 2.4}
\newcommand{\Craft}{Minecraft World Scenario}
\newcommand{\craft}{Minecraft world scenario}
\newcommand{\CraftA}{Task 3.1}
\newcommand{\craftA}{task 3.1}
\newcommand{\CraftB}{Task 3.2}
\newcommand{\craftB}{task 3.2}
\newcommand{\CraftC}{Task 3.3}
\newcommand{\craftC}{task 3.3}
\newcommand{\CraftD}{Task 3.4}
\newcommand{\craftD}{task 3.4}
\newcommand{\policy}{\pi}
\newcommand{\qValue}{q}
\newcommand{\qValueNew}{q_{\textrm{new}}}
\newcommand{\maxLengthEpisode}{m}
\newcommand{\implement}{encode}
\newcommand{\implementing}{encoding}
\newcommand{\algoName}{JIRP} 
\newcommand{\algoOptName}{JIRP with the algorithmic optimizations}
\newcommand{\hypothesisRM}{\mathcal H}
\newcommand{\hypothesisRMnew}{\mathcal H_{\textrm{new}}}
\newcommand{\counterexamples}{X}
\newcommand{\newCounterexamples}{X_{\textrm{new}}}
\newcommand{\init}{I}
\newcommand{\dfaStates}{V}
\newcommand{\dfaState}{v}
\newcommand{\dfaTransition}{\delta}
\newcommand{\dfaInputAlphabet}{\Sigma}
\newcommand{\dfaFinalStates}{F}
\newcommand{\dfa}{\mathfrak A}
\newcommand{\word}{\omega}
\newcommand{\dfaSymbol}{\ensuremath{b}}
\newcommand{\mdp}{\mathcal{M}}
\newcommand{\mdpStates}{S}
\newcommand{\mdpCommonState}{s}
\newcommand{\mdpInit}{\mdpCommonState_\init}
\newcommand{\mdpActions}{A}
\newcommand{\mdpCommonAction}{a}
\newcommand{\mdpProb}{p}
\newcommand{\mdpRewardFunction}{R}
\newcommand{\mdpDiscount}{\gamma}
\newcommand{\mdpTrajectory}{\zeta}
\newcommand{\mdpLabel}{\ell}
\newcommand{\mdpRewards}{r}
\newcommand{\trajectory}[1]{\ensuremath{\mdpCommonState_0 \mdpCommonAction_0\ldots \mdpCommonState_#1 \mdpCommonAction_#1 \mdpCommonState_{#1 + 1}}}
\newcommand{\rmLabels}{\mathcal P}
\newcommand{\rmLabelingFunction}{L}
\newcommand{\rmInputAlphabet}{2^\rmLabels}
\newcommand{\rmOutputAlphabet}{\mathbb{R}}
\newcommand{\machine}{\mathcal A}
\newcommand{\mealyStates}{V}
\newcommand{\mealyCommonState}{v}
\newcommand{\mealyCommonInput}{u}        
\newcommand{\mealyInputAlphabet}{\ensuremath{{2^\mathcal{P}}}}
\newcommand{\mealyInit}{{\mealyCommonState_\init}}
\newcommand{\mealyInitHat}{{\hat{\mealyCommonState}_\init}}
\newcommand{\mealyOutputAlphabet}{\ensuremath{\mathbb{R}}}
\newcommand{\mealyOutput}{\sigma}
\newcommand{\mealyTransition}{\delta}
\newcommand{\inputTrace}{\lambda}
\newcommand{\outputTrace}{\rho}
\newcommand{\mealyOutputNew}{\mealyOutput_{\textrm{new}}}
\newcommand{\mealyTransitionNew}{\mealyTransition_{\textrm{new}}}
\newcommand{\mealyStatesNew}{\mealyStates_{\textrm{new}}}
\newcommand{\mealyCommonStateNew}{\mealyCommonState_{\textrm{new}}}
\newcommand{\mealyInitNew}{\mealyInit_{\textrm{new}}}
\newcommand{\RPNIMealy}{RPNI-RM}
\title{Joint Inference of Reward Machines and Policies for Reinforcement Learning}
\date{}
\author{Zhe~Xu\thanks{These two authors have contributed equally; the rest of the authors
		are ordered alphabetically. \newline \indent$^\ast$Zhe~Xu and Bo Wu are with the Oden Institute
		for Computational Engineering and Sciences, University of Texas,
		Austin, Austin, TX 78712, Ivan Gavran, Rupak Majumdar and Daniel Neider are with the Max Planck Institute for Software Systems, 67663 Kaiserslautern, Germany, Ufuk Topcu is with the Department
		of Aerospace Engineering and Engineering Mechanics, and the Oden Institute
		for Computational Engineering and Sciences, University of Texas,
		Austin, Austin, TX 78712, e-mail: zhexu@utexas.edu, gavran@mpi-sws.org, ysa6549@gmail.com, rupak@mpi-sws.org, neider@mpi-sws.org, utopcu@utexas.edu, bwu3@utexas.edu.}, Ivan Gavran\footnotemark[1], Yousef Ahmad, Rupak Majumdar, Daniel Neider, Ufuk Topcu and Bo Wu
}
\begin{document}

\maketitle

\begin{abstract} 
	Incorporating \textit{high-level knowledge} is an effective way to expedite  
	reinforcement learning (RL), especially for complex tasks with sparse rewards. We investigate an RL problem where the high-level knowledge is in the form of \textit{reward machines}, i.e., a type of Mealy machine that encodes the reward functions. We focus on a setting in which this knowledge is \textit{a priori} not available to the learning agent. We develop an iterative algorithm that performs joint inference of reward machines and policies for RL (more specifically, q-learning). In each iteration, the algorithm maintains a \textit{hypothesis} reward machine and a \textit{sample} of RL episodes. It derives q-functions from the current hypothesis reward machine, and performs RL to update the q-functions. While performing RL, the algorithm updates the sample by adding RL episodes along which the obtained rewards are inconsistent with the rewards based on the current hypothesis reward machine. In the next iteration, the algorithm infers a new hypothesis reward machine from the updated sample. Based on an \textit{equivalence} relationship we defined between states of reward machines, we transfer the q-functions between the hypothesis reward machines in consecutive iterations. We prove that the proposed algorithm converges almost surely to an optimal policy in the limit if a \textit{minimal} reward machine can be inferred and the maximal length of each RL episode is sufficiently long. The experiments show that learning high-level knowledge in the form of reward machines can lead to fast convergence to optimal policies in RL, while standard RL methods such as q-learning and hierarchical RL methods fail to converge to optimal policies after a substantial number of training steps in many tasks. 
\end{abstract}


	\iftoggle{long}{  

\section{Introduction}                                                         

In many reinforcement learning (RL) tasks, agents only obtain sparse rewards for complex behaviors over a long period of time. In such a setting, learning is very challenging and incorporating high-level knowledge can help the agent explore the environment in a more efficient manner \cite{Taylor_ICML2007}.  This high-level knowledge may be expressed as different levels of temporal or behavioral abstractions, or a hierarchy of abstractions \cite{Nachum_NIPS2018,Abel2018,Akrour2018}. 

The existing RL work exploiting the hierarchy of abstractions often falls into the category of hierarchical RL \cite{sutton1999between,Dietterich2000MaxQ,parr1998HAM}. Generally speaking, hierarchical RL decomposes an RL problem into a hierarchy of subtasks, and uses a \textit{meta-controller} to decide which subtask to perform and a \textit{controller} to decide which action to take within a subtask \cite{Barto2003review}.


For many complex tasks with sparse rewards, there exist high-level structural relationships among the subtasks \cite{aksaray2016q,andreas2017modular,li2017reinforcement,zhe_ijcai2019}. 
Recently, the authors in \cite{DBLP:conf/icml/IcarteKVM18} propose \textit{reward machines}, 
i.e., a type of Mealy machines, 
to compactly encode high-level structural relationships. 
They develop a method called \textit{q-learning for reward machines} (QRM) 
and show that QRM can converge almost surely to an optimal policy in the tabular case. 
Furthermore, 
QRM outperforms both q-learning and hierarchical RL 
for tasks where the high-level structural relationships can be encoded by a reward machine.

Despite the attractive performance of QRM, the assumption that the reward machine is explicitly known by the learning agent is unrealistic in many practical situations. 
The reward machines are not straightforward to encode, and more importantly, the high-level structural relationships among the subtasks are often implicit and unknown to the learning agent.

In this paper, we investigate the RL problem where the high-level knowledge in the form of reward machines is \textit{a priori} not available to the learning agent. 
We develop an iterative algorithm that performs joint inference of reward machines and policies (\algoName) for RL (more specifically, q-learning \cite{Watkins1992}). 
In each iteration, the \algoName\ algorithm maintains a \textit{hypothesis} reward machine and a \textit{sample} of RL episodes. It derives q-functions from the current hypothesis reward machine, and performs RL to update the q-functions. While performing RL, the algorithm updates the sample by adding \textit{counterexamples} (i.e., RL episodes in which the obtained rewards are inconsistent with the rewards based on the current hypothesis reward machine). 
The updated sample is used to infers a new hypothesis reward machine,
using automata learning techniques~\cite{DBLP:conf/nfm/NeiderJ13,oncina1992inferring}.
The algorithm converges almost surely to an optimal policy in the limit 
if a \textit{minimal} reward machine can be inferred 
and the maximal length of each RL episode is sufficiently long. 


We use three optimization techniques in the proposed algorithm for its practical and efficient implementation. 
First, we periodically add \textit{batches} of counterexamples to the sample for inferring a new hypothesis reward machine. 
In this way, we can adjust the frequency of inferring new hypothesis reward machines. 
Second, we utilize the experiences from previous iterations by transferring the q-functions between \emph{equivalent} states of two hypothesis reward machines.
Lastly, we adopt a polynomial-time learning algorithm for inferring the hypothesis reward machines. 

We implement the proposed approach and two baseline methods (q-learning in \textit{augmented state space} and deep hierarchical RL \cite{Kulkarni2016hierarchical}) in three scenarios: an \traffic, an \office\ and a \craft. In the \traffic, the proposed approach converges to optimal policies within 100,000 training steps, while the baseline methods are stuck with near-zero average cumulative reward for up to two million training steps. In each of the \office\ and the \craft, over the number of training steps within which the proposed approach converges to optimal policies, the baseline methods reach only 60\% of the optimal average cumulative reward.

%


\subsection{Motivating Example}
\label{sec_example}

As a motivating example, let us consider an autonomous vehicle navigating a residential area, as sketched in Figure~\ref{fig:ex:road-map}.
As is common in many countries, some of the roads are priority roads.
While traveling on a priority road, a car has the right-of-way and does not need to stop at intersections.
In the example of Figure~\ref{fig:ex:road-map}, all the horizontal roads are priority roads (indicated by gray shading), whereas the vertical roads are ordinary roads.

\vspace{-3mm}
\begin{figure}[th]
	\centering
	\begin{tikzpicture}[thick,scale=1, every node/.style={transform shape}]
	\draw[double=white, double distance=5mm, line width=.5mm] (0, 0) -- (5.5, 0) (0, -2) -- (5.5, -2) (1.5, 1) -- (1.5, -3) (4, 1) -- (4, -3);
	\fill[gray!25] ([yshift=-2.5mm]0, 0) rectangle ([yshift=2.5mm]5.5, 0) ([yshift=-2.5mm]0, -2) rectangle ([yshift=2.5mm]5.5, -2);
	\draw[->, thick] (2.5, -0.1)--(3, -0.1); 
	\draw[->, thick] (3, 0.1)--(2.5, 0.1);

	\draw[->, thick] (2.5, -2.1)--(3, -2.1);
    \draw[->, thick] (3, -1.9)--(2.5, -1.9);
    
	\draw[->, thick] (1.4, -0.8)--(1.4, -1.3);  
    \draw[->, thick] (1.6, -1.3)--(1.6, -0.8);                                                                            

	\draw[->, thick] (3.9, -0.8)--(3.9, -1.3);  
    \draw[->, thick] (4.1, -1.3)--(4.1, -0.8);        
	
	\node at (0.7, -0.1){\includegraphics[scale=0.02]{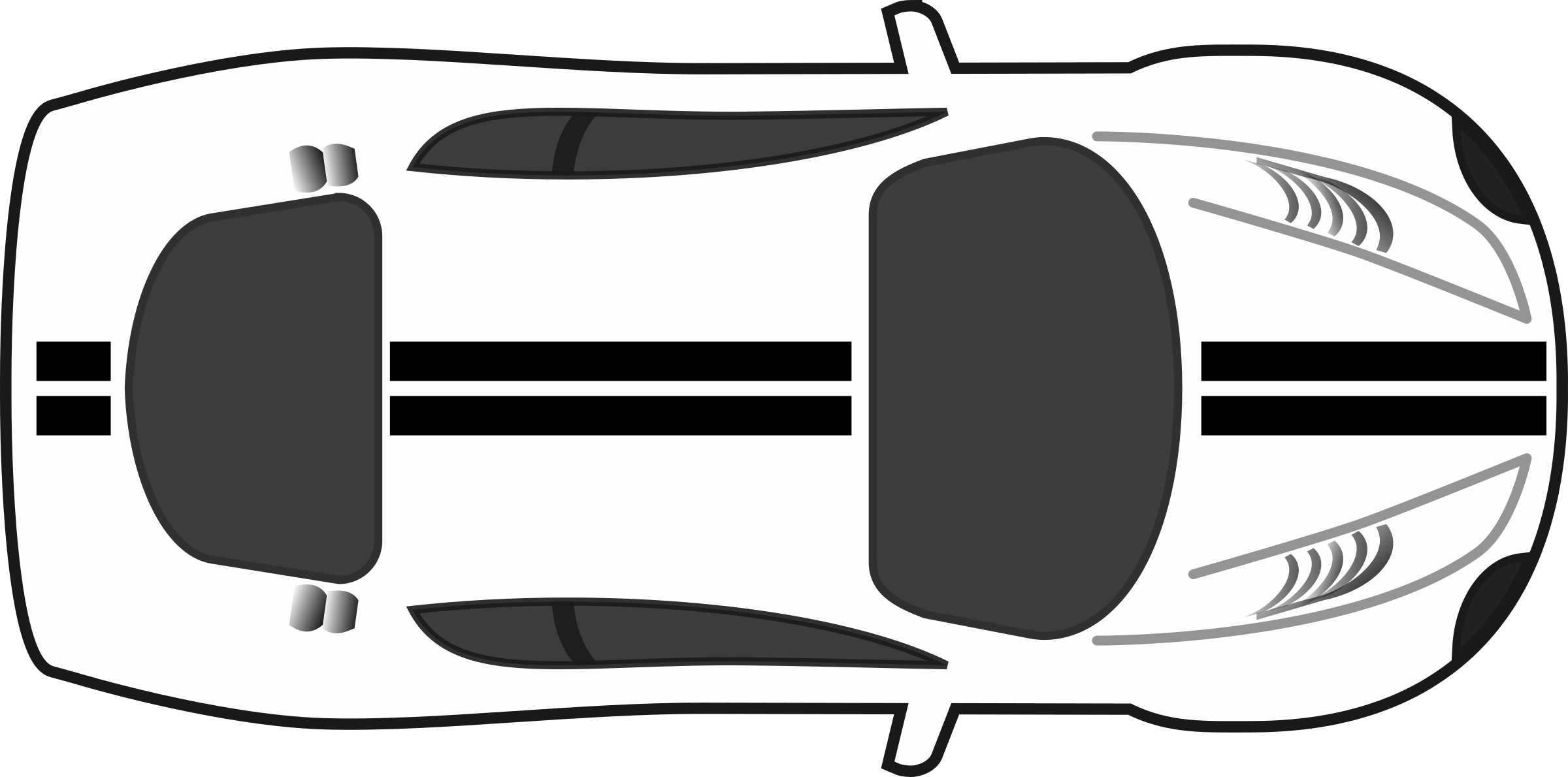}};
	
	\draw[dashed] (0, 0) -- (5.5, 0) (0, -2) -- (5.5, -2);
	\draw[dashed] (1.5, 1) -- (1.5, .25) (4, 1) -- (4, .25) (1.5, -.25) -- (1.5, -1.75) (4, -.25) -- (4, -1.75) (1.5, -2.25) -- (1.5, -3) (4, -2.25) -- (4, -3);
	
	\node[font=\bfseries] at (0, -0.1) {A};
	\node[font=\bfseries] at (5.5, -2.1) {B};
	
	\end{tikzpicture}
	
	\caption{Map of a residential area.} \label{fig:ex:road-map}
\end{figure}

Let us assume that the task of the autonomous vehicle is to drive from position ``A'' (a start position) on the map to position ``B'' while obeying the traffic rules.
To simplify matters, we are here only interested in the traffic rules concerning the right-of-way and how the vehicle acts at intersections with respect to the traffic from the intersecting roads.
Moreover, we make the following two further simplifications: (1) the vehicle correctly senses whether it is on a priority road and (2) the vehicle always stays in the road and goes straight forward while not at the intersections.                     

The vehicle is obeying the traffic rules if and only if
\begin{itemize}
	\item it is traveling on an ordinary road and stops for exactly one time unit at the intersections;
	\item it is traveling on a priority road and does not stop at the intersections.
\end{itemize}
After a period of time (e.g., 100 time units), the vehicle receives a reward of 1 if it reaches B while obeying the traffic rules, otherwise it receives a reward of 0.

\section{Preliminaries}
\label{sec:preliminaries}
In this section we introduce necessary background on reinforcement learning and reward machines.

\subsection{Markov Decision Processes and Reward Machines}

\begin{definition}
A labeled Markov decision process is a tuple
$\mdp = (\mdpStates, \mdpInit, \mdpActions, \mdpProb, \mdpRewardFunction, \mdpDiscount, \rmLabels, \rmLabelingFunction)$
consisting of a finite state space $\mdpStates$, 
an agent's initial state $\mdpInit \in \mdpStates$, 
a finite set of actions $\mdpActions$,
and a probabilistic transition function
$\mdpProb \colon \mdpStates\times \mdpActions \times \mdpStates \rightarrow [0,1]$.
A reward function $\mdpRewardFunction: (\mdpStates \times \mdpActions)^+ \times \mdpStates \rightarrow \reals$
and a discount factor $\mdpDiscount \in [0,1)$ together specify payoffs to the agent.
Finally, a finite set $\rmLabels$ of propositional variables, 
and a labeling function $\rmLabelingFunction: \mdpStates\times \mdpActions\times \mdpStates\rightarrow 2^\rmLabels$
determine the set of relevant high-level events that the agent detects in the environment. We define the size of $\mdp$, denoted as $|\mdp|$, to be $|\mdpStates|$ (i.e., the cardinality of the set $\mdpStates$).
\end{definition}

A \emph{policy} is a function mapping states in $\mdpStates$ to a probability distribution over actions in $\mdpActions$.
At state $s\in \mdpStates$, an agent using policy $\policy$ picks an action $a$ with probability $\policy(s, a)$,
and the new state $s'$ is chosen with probability $\mdpProb(s, a, s')$. 
A policy $\policy$ and the initial state $\mdpInit$ together determine a stochastic process and we write 
$S_0 A_0 S_1 \ldots$ for the random trajectory of states and actions.

A \emph{trajectory} is a realization of this stochastic process:
a sequence of states and actions $s_0 a_0 s_1 \ldots s_k a_k s_{k+1}$,
with $s_0 = \mdpInit$. 
Its corresponding \emph{label sequence} is $\mdpLabel_0 \mdpLabel_1\ldots \mdpLabel_{k}$
where
$\rmLabelingFunction(\mdpCommonState_i, \mdpCommonAction_{i}, \mdpCommonState_{i+1}) = \mdpLabel_i$ for each $i \leq k$.
Similarly, the corresponding \emph{reward sequence} is $\mdpRewards_1\ldots\mdpRewards_k$,
where $\mdpRewards_i = \mdpRewardFunction(\trajectory{i})$, for each $i \leq k$. 
We call the pair $(\inputTrace, \outputTrace):=(\mdpLabel_1\ldots\mdpLabel_k, \mdpRewards_1\ldots\mdpRewards_k)$ a \emph{trace}.

A trajectory $s_0 a_0 s_1 a_1 \ldots s_k a_k s_{k+1}$ achieves a reward $\sum_{i=0}^{k} \mdpDiscount^i \mdpRewardFunction(s_0a_0\ldots s_i a_i s_{i+1})$.
In reinforcement learning, the objective of the agent is to maximize the expected cumulative reward, $\mathbb{E}_\policy[\sum_{i=0}^\infty \mdpDiscount^i \mdpRewardFunction(S_0A_0\ldots S_{i+1})]$.

Note that the definition of the reward function assumes that the reward is a function of the whole trajectory. 
A special, often used, case of this is a so-called \emph{Markovian} reward function, 
which depends only on the last transition
(i.e., $\mdpRewardFunction(\zeta\cdot(s,a)s') = \mdpRewardFunction(\zeta'\cdot(s,a)s')$
for all $\zeta, \zeta'\in (\mdpStates\times\mdpActions)^*$, where we use $\cdot$ to denote concatenation).

Our definition of MDPs corresponds to the ``usual'' definition of MDPs (e.g., \cite{Puterman}),  
except that we have introduced
a set of high-level propositions $\rmLabels$ 
and a labeling function $\rmLabelingFunction$ assigning sets of propositions (\emph{labels}) to each transition $(s, a, s')$ of an MDP.
We use these labels to define (general) reward functions through \emph{reward machines}.
Reward machines~\cite{DBLP:conf/icml/IcarteKVM18,LTLAndBeyond} are a type of finite-state machines---when in one of its finitely many states, upon 
reading a symbol, such a machine outputs a reward and transitions into a next state.\footnote{
	The reward machines we are using are the so-called \emph{simple reward machines} in the parlance of~\cite{DBLP:conf/icml/IcarteKVM18},
	where every output symbol is a real number.
}

\begin{definition}
\label{def:rewardMealyMachines}
A \emph{reward machine} 
$\machine = (\mealyStates, \mealyInit, \mealyInputAlphabet, \mealyOutputAlphabet, \mealyTransition, \mealyOutput)$ consists of 
a finite, nonempty set $\mealyStates$ of states, 
an initial state $\mealyInit \in \mealyStates$, 
an input alphabet $\mealyInputAlphabet$,
an output alphabet $\mealyOutputAlphabet$, 
a (deterministic) transition function $\mealyTransition \colon \mealyStates \times \mealyInputAlphabet \to \mealyStates$, 
and an output function $\mealyOutput \colon \mealyStates \times \mealyInputAlphabet \to \mealyOutputAlphabet$.
We define the size of $\machine$, denoted as $|\machine|$, to be $|\mealyStates|$ (i.e., the cardinality of the set $\mealyStates$).
\end{definition}

Technically, a reward machine is a special instance of a Mealy machine~\cite{DBLP:books/daglib/0025557}:
the one that has real numbers as its output alphabet
and subsets of propositional variables (originating from an underlying MDP) as its input alphabet.
(To accentuate this connection, 
a defining tuple of a reward machine
explicitly mentions 
both the input alphabet $\mealyInputAlphabet$
and the output alphabet $\mealyOutputAlphabet$.)

The run of a reward machine $\machine$ on a sequence of labels $\mdpLabel_1\ldots \mdpLabel_k\in (\mealyInputAlphabet)^*$ is a sequence 
$\mealyCommonState_0 (\mdpLabel_1, \mdpRewards_1) \mealyCommonState_1 (\mdpLabel_2, \mdpRewards_2)\ldots (\mdpLabel_k, \mdpRewards_k) \mealyCommonState_{k+1}$ of states and label-reward pairs such that $\mealyCommonState_0 = \mealyInit$
and for all $i\in\set{0,\ldots, k}$, we have $\mealyTransition(\mealyCommonState_i, \mdpLabel_i) = \mealyCommonState_{i+1}$ and $\mealyOutput(\mealyCommonState_i,\mdpLabel_i) = \mdpRewards_i$.
We write $\machine(\mdpLabel_1\ldots\mdpLabel_k) = \mdpRewards_1\ldots\mdpRewards_k$ 
to connect the input label sequence 
to the sequence of rewards
produced by the machine $\machine$.
We say that
a reward machine $\machine$ \emph{\implement s} 
the reward function $\mdpRewardFunction$ of an MDP
if for every trajectory $\trajectory{k}$
and the corresponding label sequence $\mdpLabel_1\ldots \mdpLabel_k$, 
the reward sequence equals $\machine(\mdpLabel_1\ldots \mdpLabel_k)$.
\footnote{In general, there can be multiple reward machines that encode the 
	reward function of an MDP: all such machines agree on the label 
	sequences that arise from trajectories of the underlying MDP, but they 
	might differ on label sequences that the MDP does not permit. For 
	clarity of exposition and without loss of generality, we assume 
	throughout this paper that there is a unique reward machine encoding the 
	reward function of the MDP under consideration. However, our algorithm 
	also works in the general case.
}

An interesting (and practically relevant) subclass of reward machines is given by Mealy machines with a specially marked subset of \emph{final states}, the output alphabet $\set{0, 1}$, and the output function mapping a transition to 1 if and only if the end-state is a final state and the transition is not a self-loop. 
Additionally, final states must not be a part of any cycle, except for a self-loop.
This special case can be used in reinforcement learning scenarios with \textit{sparse} reward
functions (e.g., see the reward machines used in the case studies in \cite{DBLP:conf/icml/IcarteKVM18}). 
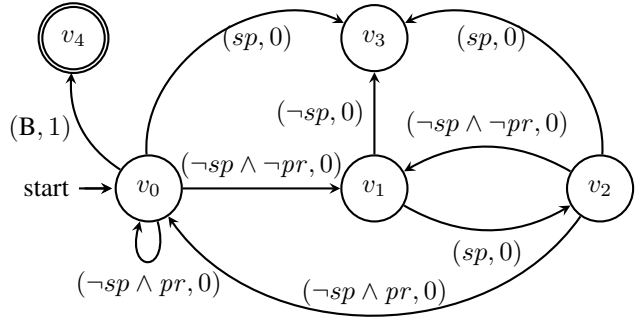
\begin{figure}[t]
	\centering
	\begin{tikzpicture}[thick,scale=1, every node/.style={transform shape}]
	\node[state,initial] (0) at (-0.5, 0) {$\mealyCommonState_0$};
	\node[state] (1) at (2.5, 0) {$\mealyCommonState_1$};
	\node[state] (2) at (5.5, 0) {$\mealyCommonState_2$};
	\node[state] (3) at (2.5, 2) {$\mealyCommonState_3$};
	\node[state, accepting] (4) at (-1.5, 2) {$\mealyCommonState_4$};
	\draw[<-, shorten <=1pt] (0.west) -- +(-.4, 0);
	\draw[->] (0) to[left] node[above, align=center] {$(\lnot sp \land \lnot\mathit{pr}, 0)$} (1);
	\draw[->] (0) to[loop below] node[align=center] {$(\lnot sp \land \mathit{pr}, 0)$} ();
	\draw[->] (1) to[bend right] node[sloped, below, align=center] {$(sp, 0)$} (2);
	\draw[->] (2) to[bend left=55] node[sloped, above, align=center] {$(\lnot sp \land \mathit{pr}, 0)$} (0);
	\draw[->] (2) to[bend right=60] node[left, align=center] {$(sp, 0)$} (3);
	\draw[->] (0) to[bend left=60] node[right, align=center] {$(sp, 0)$} (3);
	\draw[->] (1) to[right] node[left, align=center] {$(\lnot sp, 0)$} (3);
	\draw[->] (0) to[bend left] node[left, align=center] {$(\textrm{B}, 1)$} (4);
	\draw[->] (2) to[bend right] node[above, align=center] {$(\lnot sp \land \lnot\mathit{pr}, 0)$} (1);
	\end{tikzpicture}	
	\caption{Reward machine for the autonomous vehicle. $sp$: stop at an intersection; $\lnot sp$: not stop at an intersection; $pr$: end in a priority road; $\lnot pr$: end in an ordinary road. An edge $(sp, 0)$ between $v_0$ and $v_3$ means that the reward machine will transition from $\mealyCommonState_0$ to $\mealyCommonState_3$ if the proposition (label) $sp$ becomes true and output a reward equal to zero. } 
	\label{fig:ex:reward-machine}
\end{figure}

For example, Figure~\ref{fig:ex:reward-machine} shows a reward machine for our motivating example. 
Intuitively, state $\mealyCommonState_0$ corresponds to the vehicle traveling on a priority road, while $\mealyCommonState_1$ and $\mealyCommonState_2$ correspond to the vehicle traveling and stopped on an ordinary road, respectively.
While in $\mealyCommonState_0$, the vehicle ends up in a sink state $\mealyCommonState_3$ (representing violation of the traffic rules) if it stops at an intersection ($\mathit{sp}$).
While in state $\mealyCommonState_1$, the vehicle gets to the sink state $\mealyCommonState_3$ if it does not stop at an intersection ($\lnot\mathit{sp}$), and gets to state $\mealyCommonState_2$ if it stops at an intersection ($\mathit{sp}$). While in state $\mealyCommonState_2$, the vehicle gets to the sink state $\mealyCommonState_3$ if it stops again at the same intersection ($\mathit{sp}$), gets back to state $\mealyCommonState_0$ if it turns left or turns right (thus ending in a priority road, i.e., $\lnot sp\wedge pr$), and gets back to state $\mealyCommonState_0$ if it goes straight (thus ending in an ordinary road, i.e., $\lnot sp\wedge \lnot pr$).
The reward machine switches among states $\mealyCommonState_0$, $\mealyCommonState_1$ and $\mealyCommonState_2$ if the vehicle is obeying the traffic rules.
Finally, the reward 1 is obtained if from $\mealyCommonState_0$ the goal position B is reached.
(Transitions not shown in Figure~\ref{fig:ex:reward-machine} are self-loops with reward 0.)


\begin{algorithm}[t]
	\DontPrintSemicolon
	\SetKwBlock{Begin}{function}{end function}       
	{ \textbf{Hyperparameter}: episode length \textit{eplength} } \;
	{ \textbf{Input:} a reward machine $(\mealyStates, \mealyInit, \mealyInputAlphabet, \mealyOutputAlphabet, \mealyTransition, \mealyOutput)$, 
		a set of q-functions $\setOfQFunctions = \set{\qValue^{\mealyCommonState} | \mealyCommonState \in \mealyStates }$ }\;
	$\mdpCommonState \gets \mathit{InitialState()}; \mealyCommonState \gets \mealyInit; \inputTrace \gets [ ]; \outputTrace \gets [ ]$ \;
	\For{$0 \leq t < \mathit{eplength}$ \label{algLine:taskLoopStart}} 
	{ { $\mdpCommonAction \gets \text{GetEpsilonGreedyAction}(\qValue^{\mealyCommonState}, \mdpCommonState)$ \label{algLine:choiceOfAction} } \;
		{ $\mdpCommonState' \gets \text{ExecuteAction}(\mdpCommonState, \mdpCommonAction)$ } \;
		{  $\mealyCommonState' \gets \mealyTransition( \mealyCommonState, \rmLabelingFunction(\mdpCommonState, \mdpCommonAction, \mdpCommonState') )$ \label{algLine:RMTransition} } \;
		{$\mdpRewards \gets \mealyOutput(\mealyCommonState, \rmLabelingFunction(\mdpCommonState, \mdpCommonAction, \mdpCommonState'))$ \,   \remind{or observe reward in \algoName}\label{algLine:Reward}}  \;
		{ $\text{update }\qValue^{\mealyCommonState}(\mdpCommonState, \mdpCommonAction) \text{ using reward }\mdpRewards$ \label{algLine:update} }\;
		\For{$\hat{\mealyCommonState} \in \mealyStates \setminus \set{\mealyCommonState}$  \label{algLine:learningLoopStart}  } 
		{ 
			{ $\hat\mealyCommonState' \gets \mealyTransition(\hat\mealyCommonState, \rmLabelingFunction(\mdpCommonState, \mdpCommonAction, \mdpCommonState'))$ }  \;
			{ $\hat{\mdpRewards} \gets \mealyOutput(\hat\mealyCommonState, \rmLabelingFunction(\mdpCommonState, \mdpCommonAction, \mdpCommonState'))$ } \;
			{ $\text{update }\qValue^{\hat\mealyCommonState}(\mdpCommonState, \mdpCommonAction) \text{ using reward }\hat{\mdpRewards}$ \label{algLine:learningLoopEnd}} \; 
			}
		{ append  $\rmLabelingFunction(\mdpCommonState, \mdpCommonAction, \mdpCommonState')$ to $\inputTrace$; append $\mdpRewards$ to $\outputTrace$ \label{algLine:tracesConcatenation}} \;
		{ $\mdpCommonState \gets \mdpCommonState'; \mealyCommonState \gets \mealyCommonState'$ \label{algLine:taskLoopEnd}} \;
	}
	{ \Return $(\inputTrace, \outputTrace, \setOfQFunctions)$ }
	\caption{QRM\_episode}
	\label{alg:QRMepisode}
\end{algorithm}


\subsection{Reinforcement Learning With Reward Machines}

In reinforcement learning, an agent explores the environment modeled by an MDP,
receiving occasional rewards according to the underlying reward function \cite{sutton2018reinforcement}.
One possible way to learn an optimal policy is tabular q-learning \cite{Watkins1992}.
There, the value of the function $q(s,a)$, 
that represents the expected future reward
for the agent taking action $a$ in state $s$,
is iteratively updated. 
Provided that all state-action pairs are seen infinitely often,
q-learning converges to an optimal policy in the limit,
for MDPs with a Markovian reward function \cite{Watkins1992}. 

The q-learning algorithm can be modified to learn an optimal policy when the general reward function is encoded by a reward machine
\cite{DBLP:conf/icml/IcarteKVM18}.
Algorithm~\ref{alg:QRMepisode} shows one episode of the QRM algorithm.
It maintains a set $\setOfQFunctions$ of q-functions, denoted as $\qValue^{\mealyCommonState}$ for each state $\mealyCommonState$ of the reward machine.

The current state $\mealyCommonState$ of the reward machine guides the exploration by determining which q-function is used to choose the next action (line \ref{algLine:choiceOfAction}).
However, in each single exploration step, the q-functions corresponding to all reward machine states are updated (lines \ref{algLine:update} and \ref{algLine:learningLoopEnd}).



The modeling hypothesis of QRM is that the rewards are known, but the transition probabilities are unknown. Later, we shall relax the assumption that rewards are known and we shall instead \emph{observe} the rewards (in line~\ref{algLine:Reward}).
During the execution of the episode, traces $(\inputTrace, \outputTrace)$ of the reward machine are collected (line~\ref{algLine:tracesConcatenation}) and returned in the end.
While not necessary for q-learning, the traces will be useful in our algorithm to check the consistency of an \textit{inferred} reward machine with rewards received from the environment (see Section \ref{sec_baseline}).



\section{Joint Inference of Reward Machines and Policies (\algoName)}
\label{sec_baseline}
Given a reward machine, the QRM algorithm learns an optimal policy.
However, in many situations,  
assuming the knowledge of the reward function (and thus the reward machine) is unrealistic.
Even if the reward function is known,
encoding it in terms of a reward machine can be non-trivial.          
In this section, we describe an RL algorithm that \emph{iteratively}
infers (i.e., learns) the reward machine and the optimal policy for the reward machine.
 

Our algorithm combines an automaton learning algorithm to infer hypothesis reward machines and the QRM algorithm for RL on the current candidate.
Inconsistencies between the hypothesis machine and the observed traces are used to trigger re-learning of the reward machine.
We show that the resulting iterative algorithm converges in the limit almost surely to the reward machine \implementing\ the reward function
 and to an optimal policy for this reward machine.

\subsection{\algoName\ Algorithm}

Algorithm~\ref{alg:RPML} describes our \algoName{} algorithm.
It starts with an initial hypothesis reward machine $\mathcal{H}$ and runs the QRM algorithm
to learn an optimal policy.
The episodes of QRM are used to collect traces and update q-functions.
As long as the traces are consistent with the current hypothesis reward machine,  
QRM explores more of the environment using the reward machine to guide the search.                          
However, if a trace $(\inputTrace, \outputTrace)$ is detected that is inconsistent with the hypothesis reward machine (i.e., $\hypothesisRM(\inputTrace) \neq \outputTrace$, Line~\ref{alg:RPML:line:cexFound}), our algorithm stores it in a set $\counterexamples$ (Line~\ref{alg:RPML:addingToCounterexamples})---we call the trace $(\inputTrace, \outputTrace)$ a \emph{counterexample} and the set $\counterexamples$ a \emph{sample}.
Once the sample is updated, the algorithm re-learns a new hypothesis reward machine (Line~\ref{alg:RPML:line:infer}) and proceeds.
Note that we require the new hypothesis reward machine to be minimal (we discuss this requirement shortly).

\begin{algorithm}
	\DontPrintSemicolon
	\SetKwBlock{Begin}{function}{end function}       
         { Initialize the hypothesis reward machine $\hypothesisRM$ with a set of states $\mealyStates$} \;
      	 { Initialize a set of q-functions $\setOfQFunctions = \set{\qValue^\mealyCommonState | \mealyCommonState \in \mealyStates}$ }\;              
         { Initialize $\counterexamples = \emptyset$ }\;
			\For{episode $n = 1, 2, \ldots$} 
		{ $(\inputTrace, \outputTrace, \setOfQFunctions) = \text{QRM\_episode}(\hypothesisRM, \setOfQFunctions)$ \;
		\If {$\hypothesisRM(\inputTrace) \neq \outputTrace$ \label{alg:RPML:line:cexFound} } 
		{ 
		add $(\inputTrace, \outputTrace)$ to $\counterexamples$ \label{alg:RPML:addingToCounterexamples}  \;
		 infer a new, minimal hypothesis reward machine $\hypothesisRM$ based on the traces in $\counterexamples$ \label{alg:RPML:line:infer} \;
		  re-initialize $\setOfQFunctions$ \label{alg:RPML:line:qFunctionReinit}
		  }
    	}
	\caption{\algoName}                                           
	\label{alg:RPML}
\end{algorithm}


\subsection{Passive Inference of Minimal Reward Machines}

Intuitively, a sample $\counterexamples \subset (\mealyInputAlphabet)^+ \times \mealyOutputAlphabet^+$ contains a finite number of counterexamples.
Consequently, we would like to construct a new reward machine $\hypothesisRM$ that is (a) \emph{consistent with $X$} in the sense that $\hypothesisRM(\inputTrace) = \outputTrace$ for each $(\inputTrace, \outputTrace) \in \counterexamples$ and (b) \emph{minimal}.
We call this task \emph{passive learning of reward machines}.
The phrase ``passive'' here refers to the fact that the learning algorithm is not allowed to query for additional information,
as opposed to Angluin's famous ``active'' learning framework~\cite{DBLP:journals/iandc/Angluin87}.

\begin{task} \label{task:passive-learning-of-reward-machines}
Given a finite set $\counterexamples \subset (\mealyInputAlphabet)^+ \times \mealyOutputAlphabet^+$, \emph{passive learning of reward machines} refers to the task of constructing a minimal reward machine $\hypothesisRM$ that is consistent with $X$ (i.e., that satisfies $\hypothesisRM(\inputTrace) = \outputTrace$ for each $(\inputTrace, \outputTrace) \in \counterexamples$).
\end{task}


Note that this learning task asks to infer not an arbitrary reward machine but a \emph{minimal} one (i.e., a consistent reward machine with the fewest number of states among all consistent reward machines).
This additional requirement can be seen as an Occam's razor strategy~\cite{DBLP:conf/tacas/LodingMN16} and is crucial in that it guarantees \algoName\ to converge to the optimal policy in the limit.
Unfortunately, Task~\ref{task:passive-learning-of-reward-machines} is computationally hard in the sense that the corresponding decision problem
\begin{quote}
	\itshape
	``given a sample $\counterexamples$ and a natural number $k > 0$, does a consistent Mealy machine with at most $k$ states exist?''
\end{quote}
is NP-complete.
This is a direct consequence of Gold's (in)famous result for regular languages~\cite{DBLP:journals/iandc/Gold78}.

Since this  problem is computationally hard, a promising approach is to learn minimal consistent reward machines with the help of highly-optimized SAT solvers (\cite{DBLP:conf/icgi/HeuleV10}, \cite{DBLP:conf/nfm/NeiderJ13}, and \cite{DBLP:phd/dnb/Neider14} describe similar learning algorithms for inferring minimal deterministic finite automata from examples).
The underlying idea is to generate a sequence of formulas $\varphi_k^\counterexamples$ in propositional logic for increasing values of $k \in \mathbb N$ (starting with $k=1$) that satisfy the following two properties:
\begin{itemize}
	\item $\varphi_k^\counterexamples$ is satisfiable if and only if there exists a reward machine with $k$ states that is consistent with $\counterexamples$; and
	\item a satisfying assignment of the variables in $\varphi_k^\counterexamples$ contains sufficient information to derive such a reward machine.
\end{itemize}
By increasing $k$ by one and stopping once $\varphi_k^\counterexamples$ becomes satisfiable (or by using a binary search), an algorithm that learns a minimal reward machine that is consistent with the given sample is obtained.

Despite the advances in the performance of SAT solvers, 
this approach still does not scale to large problems.
Therefore, one often must resort to polynomial-time heuristics.

\subsection{Convergence in the Limit}

Tabular q-learning and QRM both eventually converge to a q-function defining an optimal policy almost surely.
We show that the same desirable property holds for \algoName{}.
More specifically, in the following sequence of lemmas we show that---given a long enough exploration---\algoName{} will converge to the
reward machine that \implement s the reward function of the underlying MDP.
We then use this fact to show that overall learning process converges to an optimal policy (see Theorem~\ref{thm:convergenceInTheLimit}).

We begin by defining \textit{attainable trajectories}---trajectories that can possibly appear in the exploration of an agent.

\begin{definition} \label{def:attainable-trajectory}
Let $\mdp = (\mdpStates, \mdpInit, \mdpActions, \mdpProb, \mdpRewardFunction, \mdpDiscount, \rmLabels, \rmLabelingFunction)$ be a labeled MDP and $\maxLengthEpisode \in \mathbb N$ a natural number.
We call a trajectory $\mdpTrajectory = \mdpCommonState_0 \mdpCommonAction_0 \mdpCommonState_1 \ldots \mdpCommonState_k \mdpCommonAction_k \mdpCommonState_{k+1} \in (\mdpStates \times \mdpActions)^\ast \times \mdpStates$ \emph{$\maxLengthEpisode$-attainable} if (i) $k \leq \maxLengthEpisode$ and (ii) $\mdpProb(\mdpCommonState_i, \mdpCommonAction_i, \mdpCommonState_{i+1}) > 0$ for each $i \in \{ 0, \ldots, k \}$.
Moreover, we say that a trajectory $\mdpTrajectory$ is \emph{attainable} if there exists an $\maxLengthEpisode \in \mathbb N$ such that $\mdpTrajectory$ is $\maxLengthEpisode$-attainable.
\end{definition}

An induction shows that \algoName{} almost surely explores every attainable trajectory in the limit (i.e., with probability $1$ when the number of episodes goes to infinity).

\begin{lemma} \label{lem:attainable-trajectories}
Let $\maxLengthEpisode \in \mathbb N$ be a natural number.
Then, \algoName\ with $\mathit{eplength} \geq \maxLengthEpisode$ almost surely 
explores every $\maxLengthEpisode$-attainable trajectory at least once 
in the limit.
\end{lemma}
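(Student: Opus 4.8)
The plan is to show that each individual $\maxLengthEpisode$-attainable trajectory is explored with a fixed positive probability in any single episode (independent of past episodes given the current state), and then to invoke a Borel--Cantelli-style argument over the infinitely many episodes to conclude that the trajectory is explored with probability one in the limit. The crucial enabling features are that \algoName{} uses $\varepsilon$-greedy action selection (Line~\ref{algLine:choiceOfAction} of Algorithm~\ref{alg:QRMepisode}, via $\text{GetEpsilonGreedyAction}$), so that every action has probability at least $\varepsilon / |\mdpActions| > 0$ of being chosen at every step, and that $\mathit{eplength} \geq \maxLengthEpisode$, so an episode is long enough to realize a trajectory of length up to $\maxLengthEpisode$.

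First I would fix an arbitrary $\maxLengthEpisode$-attainable trajectory $\mdpTrajectory = \mdpCommonState_0 \mdpCommonAction_0 \mdpCommonState_1 \ldots \mdpCommonState_k \mdpCommonAction_k \mdpCommonState_{k+1}$ with $k \leq \maxLengthEpisode$. Since every episode starts from the initial state $\mdpInit = \mdpCommonState_0$ (Line~3 of Algorithm~\ref{alg:QRMepisode}), I would lower-bound the probability that a single episode follows $\mdpTrajectory$ exactly for its first $k+1$ steps. At step $i$, the agent selects action $\mdpCommonAction_i$ with probability at least $\varepsilon/|\mdpActions|$ (the $\varepsilon$-greedy exploration floor), and, conditioned on being in state $\mdpCommonState_i$ and taking action $\mdpCommonAction_i$, transitions to $\mdpCommonState_{i+1}$ with probability $\mdpProb(\mdpCommonState_i, \mdpCommonAction_i, \mdpCommonState_{i+1})$, which is strictly positive by attainability (condition (ii) of Definition~\ref{def:attainable-trajectory}). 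Multiplying these bounds over the $k+1 \leq \maxLengthEpisode + 1$ steps, the probability $p_{\mdpTrajectory}$ that a given episode realizes $\mdpTrajectory$ satisfies
\begin{equation}
p_{\mdpTrajectory} \;\geq\; \prod_{i=0}^{k} \frac{\varepsilon}{|\mdpActions|} \, \mdpProb(\mdpCommonState_i, \mdpCommonAction_i, \mdpCommonState_{i+1}) \;>\; 0 .
\end{equation}
The key point is that this lower bound is a fixed constant that does not depend on the episode index $n$, because the exploration rate floor and the transition probabilities are the same in every episode (the evolving q-functions only affect the \emph{greedy} part of the action choice, never lowering any action's probability below the exploration floor).

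Next I would pass from a single episode to the infinite sequence of episodes. Letting $E_n$ denote the event that episode $n$ realizes $\mdpTrajectory$, the events are not independent across $n$ (the q-functions carry state between episodes), but the uniform lower bound $\Pr[E_n \mid \mathcal F_{n-1}] \geq p_{\mdpTrajectory} > 0$ holds conditionally on the history $\mathcal F_{n-1}$ up to episode $n-1$. Hence $\sum_n \Pr[E_n \mid \mathcal F_{n-1}] = \infty$ almost surely, and the second Borel--Cantelli lemma (conditional / Lévy form) yields $\Pr[E_n \text{ infinitely often}] = 1$; in particular $\mdpTrajectory$ is explored at least once with probability one. Finally, since there are only finitely many $\maxLengthEpisode$-attainable trajectories (the state and action sets are finite and lengths are bounded by $\maxLengthEpisode$), a union bound over this finite collection preserves probability one, giving the claim. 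The main obstacle is the lack of independence between episodes: I expect the cleanest route is to avoid any i.i.d.\ assumption and instead argue directly via the conditional Borel--Cantelli lemma using the uniform positive lower bound, which only requires that the exploration floor $\varepsilon/|\mdpActions|$ be bounded away from zero uniformly over episodes (an assumption I would make explicit, since a decaying $\varepsilon_n \to 0$ would require the additional condition $\sum_n \prod_i (\varepsilon_n/|\mdpActions|) = \infty$ to still reach the same conclusion).
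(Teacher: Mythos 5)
Your proposal is correct and follows essentially the same route as the paper: both establish a uniform positive lower bound on the probability that a single episode realizes a fixed $\maxLengthEpisode$-attainable trajectory (the paper via induction on trajectory length, you via the explicit product of the $\epsilon$-greedy floor and the positive transition probabilities), and both then pass to the limit over infinitely many episodes. Your use of the conditional Borel--Cantelli lemma makes the limiting step more rigorous than the paper's brief ``the probability of not being explored becomes $0$ in the limit,'' but it is the same argument in substance.
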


Analogous to Definition~\ref{def:attainable-trajectory}, we call a label sequence $\inputTrace = \mdpLabel_0 \ldots \mdpLabel_k$ ($\maxLengthEpisode$-)attainable if there exists an ($\maxLengthEpisode$-)attainable trajectory $\mdpCommonState_0 \mdpCommonAction_0 \mdpCommonState_1 \ldots \mdpCommonState_k \mdpCommonAction_k \mdpCommonState_{k+1}$ such that $\mdpLabel_i = \rmLabelingFunction(\mdpCommonState_i, \mdpCommonAction_i, \mdpCommonState_{i + 1})$ for each $i \in \{ 0, \ldots, k \}$.
An immediate consequence of Lemma~\ref{lem:attainable-trajectories} is that \algoName\ almost surely explores every $m$-attainable label sequence in the limit.

\begin{corollary} \label{col:attainable-traces}
\algoName\ with $\mathit{eplength} \geq \maxLengthEpisode$ almost surely explores every $m$-attainable label sequence at least once in the limit.
\end{corollary}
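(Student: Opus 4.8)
The plan is to reduce the claim about label sequences directly to the statement about trajectories established in Lemma~\ref{lem:attainable-trajectories}, exploiting the fact that attainable label sequences are, by definition, exactly those generated by attainable trajectories.

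First, I would unwind the definition of an $\maxLengthEpisode$-attainable label sequence. By definition, a label sequence $\inputTrace = \mdpLabel_0 \ldots \mdpLabel_k$ is $\maxLengthEpisode$-attainable precisely when there exists an $\maxLengthEpisode$-attainable trajectory $\mdpTrajectory = \mdpCommonState_0 \mdpCommonAction_0 \ldots \mdpCommonState_k \mdpCommonAction_k \mdpCommonState_{k+1}$ with $\mdpLabel_i = \rmLabelingFunction(\mdpCommonState_i, \mdpCommonAction_i, \mdpCommonState_{i+1})$ for every $i$. For each attainable $\inputTrace$ I would fix one such witnessing trajectory $\mdpTrajectory_{\inputTrace}$.

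Next, I would observe that whenever \algoName{} explores a trajectory during an episode, it thereby observes the corresponding label sequence: the labels are obtained by applying $\rmLabelingFunction$ to each executed transition (cf.\ line~\ref{algLine:tracesConcatenation} of Algorithm~\ref{alg:QRMepisode}). Consequently, the event ``$\mdpTrajectory_{\inputTrace}$ is explored in the limit'' is contained in the event ``$\inputTrace$ is explored in the limit.'' By Lemma~\ref{lem:attainable-trajectories}, the former event has probability $1$ (since $\mathit{eplength} \geq \maxLengthEpisode$), and hence so does the latter.

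Finally, I would handle the universal quantifier ``every'' by a finiteness argument. Because $\mdpStates$ and $\mdpActions$ are finite and every $\maxLengthEpisode$-attainable trajectory has length $k \leq \maxLengthEpisode$, there are only finitely many $\maxLengthEpisode$-attainable trajectories, and therefore only finitely many $\maxLengthEpisode$-attainable label sequences. Since each is explored with probability $1$ in the limit, and a finite (indeed even countable) intersection of probability-$1$ events again has probability $1$, \algoName{} almost surely explores all of them simultaneously. There is no substantial obstacle here; the only point requiring care is the interchange of the universal quantifier with ``almost surely,'' which is justified exactly by this finiteness (or countability) observation.
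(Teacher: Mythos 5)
Your argument is correct and matches the paper's (implicit) reasoning: the paper states Corollary~\ref{col:attainable-traces} as an immediate consequence of Lemma~\ref{lem:attainable-trajectories} via the definition of an $\maxLengthEpisode$-attainable label sequence as one witnessed by an $\maxLengthEpisode$-attainable trajectory, which is exactly what you spell out. Your explicit handling of the universal quantifier via finiteness of the set of $\maxLengthEpisode$-attainable trajectories is a welcome (if routine) addition that the paper leaves tacit.
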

 
If \algoName\ explores sufficiently many $\maxLengthEpisode$-attainable label sequences 
for a large enough value of $\maxLengthEpisode$, 
it is guaranteed to infer a reward machine that is ``good enough'' 
in the sense that it is equivalent to the reward machine \implementing\ the reward function $\mdpRewardFunction$ on all attainable label sequences.
This is formalized in the next lemma. 

\begin{lemma} \label{lem:RPML-learns-correct-reward-machine} 
Let $\mdp$ be a labeled MDP and $\machine$ the reward machine \implementing\ the reward function of $\mdp$.
Then, \algoName\ with $\mathit{eplength} \geq 2^{|\mdp| + 1} (|\machine| + 1) - 1$ almost surely learns a reward machine in the limit that is equivalent to 
$\machine$ on all attainable label sequences.
\end{lemma}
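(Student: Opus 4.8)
The plan is to combine the exploration guarantee from Corollary~\ref{col:attainable-traces} with the minimality requirement on inferred reward machines and a bound on how long a sequence must be before it ``distinguishes'' two inequivalent reward machines. First I would observe that, by Corollary~\ref{col:attainable-traces}, \algoName{} with the stated episode length almost surely explores every $\maxLengthEpisode$-attainable label sequence in the limit, where $\maxLengthEpisode = 2^{|\mdp|+1}(|\machine|+1) - 1$. Since each inconsistency triggers the addition of a counterexample and a re-inference of a minimal machine, and since the sample $\counterexamples$ only grows, the sequence of hypothesis machines can only change finitely often before stabilizing (each change must increase the minimal consistent size, which is bounded by $|\machine|$ because $\machine$ itself is always consistent with any sample drawn from true traces). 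Hence in the limit \algoName{} settles on some hypothesis $\hypothesisRM$ that is consistent with \emph{every} counterexample it will ever see, i.e.\ consistent with the reward on every attainable label sequence that it explores.

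The crux is then to argue that a minimal reward machine consistent with all $\maxLengthEpisode$-attainable label sequences must in fact agree with $\machine$ on \emph{all} attainable label sequences. The key step here is a Myhill--Nerode / product-automaton argument. I would form the product of $\hypothesisRM$ and $\machine$ (both synchronized on the same input alphabet $\mealyInputAlphabet$) and consider the set of states reachable by attainable label sequences. If $\hypothesisRM$ and $\machine$ disagreed on some attainable label sequence, there would be a shortest such distinguishing attainable sequence; I would bound its length by the number of reachable product states. Since both machines have at most $\max(|\hypothesisRM|, |\machine|)$-many states and $|\hypothesisRM| \leq |\machine|$ by minimality, the number of distinct reachable product configurations is at most $|\machine|\cdot|\hypothesisRM| \leq (|\machine|+1)$-ish, but this must be intersected with the reachability structure of the MDP, whose attainable label sequences are generated by trajectories through at most $|\mdp|$ states. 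This is precisely where the bound $2^{|\mdp|+1}(|\machine|+1)-1$ comes from: a product construction of the MDP's reachability (tracked via subsets of $\mdpStates$, giving the $2^{|\mdp|+1}$ factor) with the reward-machine product (the $(|\machine|+1)$ factor) yields a finite-state device, and any attainable distinguishing label sequence longer than its state count can be pumped down to a shorter attainable distinguishing sequence.

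The argument I would carry out is therefore: suppose for contradiction that $\hypothesisRM$ and $\machine$ disagree on some attainable label sequence $\inputTrace$; take $\inputTrace$ of minimal length among all such attainable distinguishing sequences; show via a pigeonhole/pumping argument on the combined MDP-and-product automaton that if $|\inputTrace| > \maxLengthEpisode$ then a strictly shorter attainable distinguishing sequence exists, contradicting minimality of $\inputTrace$; conclude $|\inputTrace| \leq \maxLengthEpisode$. But then $\inputTrace$ is $\maxLengthEpisode$-attainable, hence (almost surely, in the limit) explored by \algoName, hence added as a counterexample if $\hypothesisRM$ disagreed with the true reward on it---contradicting that $\hypothesisRM$ is the stabilized, consistent hypothesis. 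Therefore $\hypothesisRM$ agrees with $\machine$ on all attainable label sequences, which is exactly equivalence on attainable label sequences.

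The main obstacle I anticipate is pinning down the exact combinatorial bound $2^{|\mdp|+1}(|\machine|+1)-1$ and justifying the $2^{|\mdp|+1}$ subset-construction factor. The subtlety is that ``attainable'' quantifies over the existence of \emph{some} trajectory realizing the label sequence, so tracking which label sequences are attainable requires a subset/determinization of the MDP's transition structure (tracking the set of MDP states simultaneously consistent with a label prefix), rather than a single-state product. Getting this bookkeeping right---so that the pumping argument genuinely produces a \emph{shorter attainable} sequence on which the two machines still disagree, and not merely a shorter sequence that fails to be attainable---is the technically delicate part. I would need to be careful that the product state tracks enough information (current MDP-state set, current $\hypothesisRM$-state, and current $\machine$-state) so that a repeated product configuration along an attainable witnessing trajectory can be excised while preserving both attainability and the disagreement at the end.
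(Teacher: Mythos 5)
Your overall architecture matches the paper's: stabilization of the hypothesis sequence, a length bound on a shortest attainable distinguishing label sequence, and the exploration guarantee of Corollary~\ref{col:attainable-traces} to derive a contradiction. One step of your termination argument is wrong as stated: you claim that ``each change must increase the minimal consistent size,'' but a new minimal consistent machine can have exactly as many states as its predecessor. The paper only establishes that the sizes are non-decreasing, separately observes that successive hypotheses are pairwise semantically distinct (each new hypothesis must be consistent with the counterexample that refuted the previous one), and then concludes stabilization from the finiteness of the set of machines with at most $|\machine|$ states. This is fixable, but the fix is the distinctness-plus-finiteness argument, not strict growth of size.

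The more substantive gap is the length bound. A pigeonhole/pumping argument on the triple product tracking a subset of $\mdpStates$, a state of $\hypothesisRM$, and a state of $\machine$ yields a shortest attainable distinguishing sequence of length on the order of $2^{|\mdp|}(|\hypothesisRM|+1)(|\machine|+1)$ --- a \emph{multiplicative} bound --- whereas the lemma needs the \emph{additive} bound $2^{|\mdp|}(|\hypothesisRM|+|\machine|+2)-1 \leq 2^{|\mdp|+1}(|\machine|+1)-1$. (Your step ``$|\machine|\cdot|\hypothesisRM| \leq (|\machine|+1)$-ish'' is not a valid inequality.) Since the lemma's hypothesis fixes $\mathit{eplength}$ at exactly the additive bound, your argument would only prove the statement for a larger episode length and leaves open the possibility that the shortest attainable distinguishing sequence is never explored. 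The paper closes this by converting each reward machine into a DFA with $|\machine_i|+1$ states over the combined alphabet $\rmInputAlphabet\times\rmOutputAlphabet$, synchronizing each one \emph{separately} with the $2^{|\mdp|}$-state determinization of the MDP, and then invoking the classical theorem that two inequivalent DFAs are distinguished by a word of length at most $|\dfa_1|+|\dfa_2|-1$ (the sum, not the product, of their state counts). You would need that sharper distinguishing-word theorem, or an equivalent partition-refinement argument, in place of the naive pumping step; the rest of your plan, including the $2^{|\mdp|}$ subset-construction factor for attainability, is the right bookkeeping.
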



Lemma~\ref{lem:RPML-learns-correct-reward-machine} guarantees that
\algoName\ will eventually learn the reward machine \implementing\ the reward function of an underlying MDP.
This is the key ingredient in proving that 
\algoName\ learns an optimal policy in the limit almost surely.

%

\begin{theorem}
\label{thm:convergenceInTheLimit}
Let $\mdp$ be a labeled MDP and $\machine$ the reward machine \implementing\ the reward function of $\mdp$.
Then, \algoName\ with $\mathit{eplength} \geq 2^{|\mdp| + 1} (|\machine| + 1) - 1$ almost surely converges to an optimal policy in the limit.
\end{theorem}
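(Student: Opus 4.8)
The plan is to chain the inference guarantee of Lemma~\ref{lem:RPML-learns-correct-reward-machine} with the known almost-sure convergence of QRM on a fixed, correct reward machine. First I would invoke Lemma~\ref{lem:RPML-learns-correct-reward-machine}: with $\mathit{eplength} \geq 2^{|\mdp| + 1} (|\machine| + 1) - 1$, \algoName{} almost surely produces in the limit a hypothesis reward machine $\hypothesisRM$ that is equivalent to $\machine$ on all attainable label sequences. The key observation is that once such an $\hypothesisRM$ is in place the hypothesis can never change again: a counterexample $(\inputTrace, \outputTrace)$ is added only when $\hypothesisRM(\inputTrace) \neq \outputTrace$ for a trace arising from an actually explored trajectory; but every explored trajectory is attainable, its observed reward sequence is $\outputTrace = \machine(\inputTrace)$ since $\machine$ \implement s the reward function, and $\hypothesisRM(\inputTrace) = \machine(\inputTrace)$ by equivalence on attainable sequences. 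Hence no further counterexample can arise, so almost surely there is a last re-learning step after finitely many episodes, after which both the hypothesis reward machine $\hypothesisRM$ and the subsequent re-initialization of $\setOfQFunctions$ are fixed forever.

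Second, I would argue that from this final re-initialization onward \algoName{} is exactly QRM running on the fixed reward machine $\hypothesisRM$. Since infinitely many episodes remain, the same argument underlying Lemma~\ref{lem:attainable-trajectories} applied to the tail of the run upgrades ``at least once'' to ``infinitely often'': every attainable trajectory, and therefore every reachable state-action pair of the product of $\mdp$ and $\hypothesisRM$ (over $\mdpStates \times \mealyStates$), is explored infinitely often. This is precisely the condition needed for the almost-sure convergence of QRM recalled in Section~\ref{sec:preliminaries}, so the maintained q-functions converge almost surely to q-functions defining an optimal policy for the task encoded by $\hypothesisRM$.

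Finally, I would close the gap between ``optimal for $\hypothesisRM$'' and ``optimal for $\mdp$.'' Because every trajectory the agent can realize is attainable and $\hypothesisRM$ agrees with $\machine$ on all attainable label sequences, the reward collected along any realizable trajectory, and hence the expected cumulative reward of any policy, is identical whether scored through $\hypothesisRM$ or through $\machine$. Since $\machine$ \implement s the true reward function of $\mdp$, an optimal policy for $\hypothesisRM$ is an optimal policy for $\mdp$, which yields the claim.

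I expect the main obstacle to be this last step: rigorously justifying that equivalence of $\hypothesisRM$ and $\machine$ merely on \emph{attainable} label sequences, rather than on all sequences, suffices for the policy learned over the product $\mdpStates \times \mealyStates$ to be genuinely optimal in $\mdp$. One must verify that discarding unattainable behavior removes no policy of higher value, and that the reward-machine state carried in the product supplies exactly the information making the reward Markovian on the product, so that QRM's optimality guarantee transfers cleanly back to the original task.
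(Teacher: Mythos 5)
Your proposal is correct and follows essentially the same route as the paper: invoke Lemma~\ref{lem:RPML-learns-correct-reward-machine} to stabilize the hypothesis reward machine, observe that the tail of the run is QRM on the product MDP over $\mdpStates \times \mealyStates$ with every state-action pair visited infinitely often, and transfer optimality back to $\mdp$ via the agreement of $\hypothesisRM$ and $\machine$ on attainable label sequences (the paper packages this last step as an explicit product-MDP construction in a separate lemma, which also addresses the concern you raise at the end).
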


\section{Algorithmic Optimizations}                                        
\label{sec_opt} 
Section \ref{sec_baseline} provides the base algorithm with theoretical guarantees for convergence to an optimal policy. In this section, we present an improved algorithm (Algorithm~\ref{alg:RPML-optimized}) that includes three \textit{algorithmic optimizations}:\\
Optimization 1: batching of counterexamples (Section \ref{sec_batch});\\
Optimization 2: transfer of q-functions (Section \ref{sec_transfer});\\ 
Optimization 3: polynomial time learning algorithm for inferring the reward machines (Section \ref{sec_RPNI}).

The following theorem claims that Optimizations 1 and 2 retain the convergence guarantee of Theorem \ref{thm:convergenceInTheLimit}.
 
\begin{theorem}
	\label{thm:convergenceInTheLimit_opt}
	Let $\mdp$ be a labeled MDP and $\machine$ the reward machine \implementing\ the rewards of $\mdp$.
	Then, \algoName\ with Optimizations 1 and 2 with $\mathit{eplength} \geq 2^{|\mdp| + 1} (|\machine| + 1) - 1$ converges to an optimal policy in the limit.
\end{theorem}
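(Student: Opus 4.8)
The plan is to show that neither optimization disturbs the two facts that drive the proof of Theorem~\ref{thm:convergenceInTheLimit}: first, that \algoName\ almost surely infers in the limit a reward machine equivalent to $\machine$ on all attainable label sequences (Lemma~\ref{lem:RPML-learns-correct-reward-machine}); and second, that once such a machine is fixed, the embedded q-learning converges almost surely to an optimal policy. Since Optimizations 1 and 2 leave the exploration mechanism untouched — the episode routine QRM\_episode is unchanged and the hypothesis $\mathit{eplength} \geq 2^{|\mdp|+1}(|\machine|+1)-1$ is retained — Lemma~\ref{lem:attainable-trajectories} and Corollary~\ref{col:attainable-traces} still hold verbatim. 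It therefore suffices to re-examine the only two places the optimizations intervene: when and how often the sample triggers re-learning (Optimization 1), and how $\setOfQFunctions$ is set after re-learning (Optimization 2).

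For Optimization 1, I would argue that batching changes only the schedule on which new hypotheses are inferred, not the eventual contents of the sample $\counterexamples$. The sample still grows monotonically, and every counterexample the base algorithm would record is, under batching, still recorded (merely deferred to the end of its batch); because batches are finite and flushed periodically, none is lost in the limit. Consequently the sequence of samples under batching is cofinal with that of the base algorithm, and Lemma~\ref{lem:RPML-learns-correct-reward-machine} applies unchanged: almost surely the sample eventually contains enough attainable counterexamples (which exist by Corollary~\ref{col:attainable-traces}) to force every minimal consistent reward machine to agree with $\machine$ on all attainable label sequences. The episode-length bound is inherited directly, since it originates solely from the distinguishability argument of Lemma~\ref{lem:RPML-learns-correct-reward-machine} and is independent of the re-learning schedule.

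For Optimization 2, the key observation is that transferring q-functions between equivalent states of consecutive hypotheses amounts merely to a different bounded initialization of $\setOfQFunctions$ at each re-learning step, in place of the re-initialization on Line~\ref{alg:RPML:line:qFunctionReinit}. Tabular q-learning (and hence QRM) converges to the optimal q-function from any bounded initialization, provided every state--action pair is visited infinitely often, a condition supplied by Lemma~\ref{lem:attainable-trajectories}. The plan is therefore to first show that, almost surely, only finitely many re-learnings occur. Because $\machine$ is consistent with every attainable trace placed in $\counterexamples$, the minimal consistent reward machine always has size at most $|\machine|$; and since the sample grows monotonically, any hypothesis once contradicted by a counterexample in $\counterexamples$ remains inconsistent forever and is never inferred again. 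As there are only finitely many reward machines of size at most $|\machine|$ whose outputs range over the finitely many reward values attainable in $\mdp$ (these coincide with the outputs of $\machine$), the hypothesis can change only finitely often. After the final change it is equivalent to $\machine$ on attainable sequences, no attainable trace is a counterexample, and $\setOfQFunctions$ is never reset again; from this random but almost surely finite time on, QRM runs with a fixed machine and a fixed transferred initialization and converges to an optimal policy.

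The main obstacle will be making this stabilization argument rigorous: I must rule out the pathology in which infinitely many re-learnings keep resetting $\setOfQFunctions$ and thereby prevent q-learning from ever converging. The monotone-sample-with-bounded-machine-size reasoning above is precisely what pins the number of hypothesis changes to a finite (random) quantity, after which the transferred values act only as a legitimate initial condition over which q-learning's limit is invariant. Optimization 1 does not interfere with this, since batching can only reduce the number of re-learning events relative to the base algorithm and leaves the limiting sample — and hence the stabilized machine — unchanged.
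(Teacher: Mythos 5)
Your proposal is correct and follows essentially the same route as the paper: it re-verifies that the monotone-sample, minimality, and distinctness observations from the proof of Lemma~\ref{lem:RPML-learns-correct-reward-machine} survive batching (so the hypothesis machine stabilizes, almost surely, to one equivalent to $\machine$ on attainable label sequences), and then notes that transferred q-functions are merely a bounded initialization to which the q-learning limit is insensitive, so the argument of Theorem~\ref{thm:convergenceInTheLimit} applies after stabilization. Your treatment is in fact somewhat more explicit than the paper's (which asserts the observations carry over without detail), particularly in addressing why Optimization 2 cannot disturb convergence.
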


It should be noted that although such
guarantee fails for Optimization 3, in practice the policies usually still converge to the optimal policies (see the case studies in Section \ref{sec_case}).


%
%

\begin{algorithm}[t]
	\DontPrintSemicolon
	\SetKwBlock{Begin}{function}{end function}       
	{ Initialize the hypothesis reward machine $\hypothesisRM$ with a set of states $\mealyStates$} \;
	{ Initialize a set of q-functions $\setOfQFunctions = \set{\qValue^\mealyCommonState | \mealyCommonState \in \mealyStates}$ }\;
	{ Initialize $\counterexamples = \emptyset$ and $\newCounterexamples = \emptyset$ } \;
	\For{episode $n = 1, 2, \ldots$} 
	{ $(\inputTrace, \outputTrace, \setOfQFunctions) = \text{QRM\_episode}(\hypothesisRM, \setOfQFunctions)$ \;
		\If {$\hypothesisRM(\inputTrace) \neq \outputTrace$  }
		{ add $(\inputTrace, \outputTrace)$ to $\newCounterexamples$  }
		\If {$\big(\text{mod}(n, N)=0$ and $\newCounterexamples \neq\emptyset\big)$ \label{algLine:cexFound}}
		{{ $\counterexamples\gets \counterexamples\cup\newCounterexamples$ } \;
			{ infer $\hypothesisRMnew$ using $\counterexamples$ \label{algLine:infer} } \;
			{ $\setOfQFunctionsNew \gets \text{Transfer}_q(\setOfQFunctions, \hypothesisRM, \hypothesisRMnew)$  } \;
			{ $\hypothesisRM\gets\hypothesisRMnew$, $\setOfQFunctions\gets\setOfQFunctionsNew$, $\newCounterexamples\gets \emptyset$} }                              
	}
	\caption{\algoName\ with algorithmic optimizations}                            
	\label{alg:RPML-optimized}   
\end{algorithm}	

\subsection{Batching of Counterexamples}
\label{sec_batch}

Algorithm \ref{alg:RPML} infers a new hypothesis reward machine whenever a counterexample is encountered. This could incur a high computational cost. In order to adjust the frequency of inferring new reward machines, Algorithm~\ref{alg:RPML-optimized} stores each counterexample in a set $\newCounterexamples$. After each period of $N$ episodes (where $N\in\mathbb{Z}_{>0}$ is a user-defined hyperparameter), if $\newCounterexamples$ is non-empty, we add $\newCounterexamples$ to the sample $\counterexamples$ and infer a new hypothesis reward machine $\hypothesisRMnew$ (lines~\ref{algLine:cexFound} to \ref{algLine:infer}). Then, Algorithm~\ref{alg:RPML-optimized} proceeds with the QRM algorithm for $\hypothesisRMnew$. The same procedure repeats until the policy converges. 


\subsection{Transfer of Q-functions} 
\label{sec_transfer}

In Algorithm \ref{alg:RPML}, after a new hypothesis reward machine is inferred, the q-functions are re-initialized and the experiences from the previous iteration of RL are not utilized. To utilize experiences in previous iterations, we provide a method to transfer the q-functions from the previously inferred reward machine to the newly inferred reward machine (inspired by the curriculum learning implementation in~\cite{DBLP:conf/icml/IcarteKVM18}). The transfer of q-functions is based on \textit{equivalent} states of two reward machines as defined below.  

\begin{definition}
  \label{def_equi}
For a reward machine $\machine$ and a state $\mealyCommonState\in\mealyStates$, let $\machine[\mealyCommonState]$ be the machine with $\mealyCommonState$ as the
initial state. Then, for two reward machines $\machine$ and $\hat{\machine}$, two states $\mealyCommonState\in\mealyStates$ and $\hat{\mealyCommonState}\in\hat{\mealyStates}$ are equivalent, denoted by $\mealyCommonState\sim\hat{\mealyCommonState}$, if and only if
$\machine[\mealyCommonState](\inputTrace) = \machine[\hat{\mealyCommonState}](\inputTrace)$ for all label sequences $\inputTrace$. 

\end{definition} 

With Definition \ref{def_equi}, we provide the following theorem claiming equality of optimal q-functions for equivalent states of two reward machines. We use $q^{\ast\mealyCommonState}(s,a)$ to denote the optimal q-function for state $v$ of the reward machine.

\begin{theorem}
	Let $\machine=(\mealyStates, \mealyInit, 2^\rmLabels, \mealyOutputAlphabet, \mealyTransition,  \mealyOutput)$ and $\hat{\machine}=(\hat{\mealyStates}, \mealyInitHat, 2^\rmLabels, \mealyOutputAlphabet, \hat{\mealyTransition}, \hat{\mealyOutput})$ be two reward machines \implementing\ the rewards of a labeled MDP $\mdp=(\mdpStates, \mdpInit, \mdpActions, \mdpProb, \mdpRewardFunction, \mdpDiscount, \rmLabels, \rmLabelingFunction)$. For states $\mealyCommonState\in \mealyStates$ and $\hat{\mealyCommonState}\in\hat{\mealyStates}$, if $\mealyCommonState\sim\hat{\mealyCommonState}$, then for every $s\in \mdpStates$ and $a\in \mdpActions$, $q^{\ast\mealyCommonState}(s,a)=q^{\ast\hat{\mealyCommonState}}(s,a)$.
	\label{th_q} 
\end{theorem}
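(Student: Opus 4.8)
The plan is to read $q^{\ast\mealyCommonState}$ as the optimal state--action value function of the product MDP whose states are pairs $(\mdpCommonState,\mealyCommonState)\in\mdpStates\times\mealyStates$: from $(\mdpCommonState,\mealyCommonState)$ under action $\mdpCommonAction$ one draws $\mdpCommonState'$ with probability $\mdpProb(\mdpCommonState,\mdpCommonAction,\mdpCommonState')$, collects reward $\mealyOutput(\mealyCommonState,\rmLabelingFunction(\mdpCommonState,\mdpCommonAction,\mdpCommonState'))$, and moves to $(\mdpCommonState',\mealyTransition(\mealyCommonState,\rmLabelingFunction(\mdpCommonState,\mdpCommonAction,\mdpCommonState')))$; the analogous product is formed for $\hat{\machine}$. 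Because the reward-machine component is deterministic, this product is a genuine Markovian MDP with discount $\mdpDiscount<1$, so its Bellman optimality operator is a $\mdpDiscount$-contraction in the sup norm; hence $q^{\ast\mealyCommonState}$ is its unique fixed point and equals the limit of value iteration. My strategy is to show that equivalence of reward-machine states propagates through the value-iteration recursion, so that the two families of iterates agree on equivalent states and therefore so do their limits.

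\textbf{Two structural facts about $\sim$.} First I would extract from Definition~\ref{def_equi} the two ingredients that make the induction go through, both obtained by specializing the condition $\machine[\mealyCommonState](\inputTrace)=\hat{\machine}[\hat{\mealyCommonState}](\inputTrace)$ for all $\inputTrace$. (i) \emph{Rewards match on single symbols:} taking $\inputTrace$ to be a one-symbol label $\mdpLabel\in\mealyInputAlphabet$ gives $\mealyOutput(\mealyCommonState,\mdpLabel)=\hat{\mealyOutput}(\hat{\mealyCommonState},\mdpLabel)$. (ii) \emph{Equivalence is preserved by transitions:} for any $\mdpLabel$ and any continuation $\inputTrace'$, apply the defining condition to $\mdpLabel\inputTrace'$; the runs of $\machine[\mealyCommonState]$ and $\hat{\machine}[\hat{\mealyCommonState}]$ emit the same first reward by (i), so peeling off that first symbol yields $\machine[\mealyTransition(\mealyCommonState,\mdpLabel)](\inputTrace')=\hat{\machine}[\hat{\mealyTransition}(\hat{\mealyCommonState},\mdpLabel)](\inputTrace')$. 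Since $\inputTrace'$ is arbitrary, $\mealyTransition(\mealyCommonState,\mdpLabel)\sim\hat{\mealyTransition}(\hat{\mealyCommonState},\mdpLabel)$.

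\textbf{The induction.} Let $q_n^{\mealyCommonState}$ denote the $n$-th value-iteration iterate (with $q_0^{\mealyCommonState}\equiv 0$), and likewise $q_n^{\hat{\mealyCommonState}}$ for $\hat{\machine}$. I claim that for every $n$, whenever $\mealyCommonState\sim\hat{\mealyCommonState}$ we have $q_n^{\mealyCommonState}(\mdpCommonState,\mdpCommonAction)=q_n^{\hat{\mealyCommonState}}(\mdpCommonState,\mdpCommonAction)$ for all $\mdpCommonState,\mdpCommonAction$. The base case holds since both iterates are identically zero. For the step, writing $\mdpLabel=\rmLabelingFunction(\mdpCommonState,\mdpCommonAction,\mdpCommonState')$ and $\mealyCommonState'=\mealyTransition(\mealyCommonState,\mdpLabel)$ (each depending on the summation variable $\mdpCommonState'$), the recursion reads
\[
q_{n+1}^{\mealyCommonState}(\mdpCommonState,\mdpCommonAction)=\sum_{\mdpCommonState'\in\mdpStates}\mdpProb(\mdpCommonState,\mdpCommonAction,\mdpCommonState')\Bigl[\mealyOutput(\mealyCommonState,\mdpLabel)+\mdpDiscount\max_{\mdpCommonAction'\in\mdpActions}q_n^{\mealyCommonState'}(\mdpCommonState',\mdpCommonAction')\Bigr].
\]
By fact (i) the immediate reward equals $\hat{\mealyOutput}(\hat{\mealyCommonState},\mdpLabel)$, by fact (ii) the successor $\mealyCommonState'$ is equivalent to $\hat{\mealyCommonState}'=\hat{\mealyTransition}(\hat{\mealyCommonState},\mdpLabel)$, and by the induction hypothesis $q_n^{\mealyCommonState'}$ and $q_n^{\hat{\mealyCommonState}'}$ coincide as functions of $(\mdpCommonState',\mdpCommonAction')$, hence so do their maxima over $\mdpCommonAction'$. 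Substituting term by term turns the right-hand side into the recursion defining $q_{n+1}^{\hat{\mealyCommonState}}(\mdpCommonState,\mdpCommonAction)$, closing the induction. Letting $n\to\infty$ and invoking convergence of value iteration gives $q^{\ast\mealyCommonState}(\mdpCommonState,\mdpCommonAction)=q^{\ast\hat{\mealyCommonState}}(\mdpCommonState,\mdpCommonAction)$, as required.

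\textbf{Main obstacle.} The argument is essentially routine once the product-MDP viewpoint is fixed; the only genuinely load-bearing step is fact (ii), the closure of $\sim$ under transitions, which is what lets the induction hypothesis be applied to the successor states inside the sum. I would spell out the prefix-peeling there carefully, since it is the point where the ``for all label sequences'' quantifier in Definition~\ref{def_equi} is actually used (the single-symbol instance alone would not suffice). Note also that the proof uses only $\mealyCommonState\sim\hat{\mealyCommonState}$ and $\mdpDiscount<1$; the hypothesis that both machines \implement\ the rewards of $\mdp$ is needed only to ensure that $q^{\ast\mealyCommonState}$ is the value function relevant to the learning problem, not for the equality itself.
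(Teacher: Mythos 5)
Your proposal is correct and follows essentially the same route as the paper: the paper's proof also establishes that $\sim$ is closed under transitions via the same prefix-peeling argument (its Lemma~\ref{lemma:equivalentSuccessors}), then runs the identical induction on the $k$-horizon optimal action-value functions (which are exactly your value-iteration iterates with $q_0\equiv 0$), and passes to the limit. The only cosmetic difference is that the paper writes the Bellman backup with an explicit product transition kernel $T(s,\mealyCommonState,a,s',\mealyCommonState')$ and starts its induction at $k=1$ rather than $k=0$.
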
    

Algorithm \ref{alg:transfer} shows the procedure to transfer the q-functions between the hypothesis reward machines in consecutive iterations. For any state of the hypothesis reward machine in the current iteration, we check if there exists an equivalent state of the hypothesis reward machine in the previous iteration. If so, the corresponding q-functions are transferred (line \ref{transfer_line}). As shown in Theorem \ref{th_q}, the optimal q-functions for two equivalent states are the same. 

\begin{algorithm}
	\DontPrintSemicolon
	\SetKwBlock{Begin}{function}{end function}       
	{ \textbf{Input:} a set of q-functions $\setOfQFunctions = \set{\qValue^\mealyCommonState | \mealyCommonState \in \mealyStates}$,
		hypothesis reward machines $\hypothesisRM=(\mealyStates, \mealyInit, \mealyInputAlphabet, \mealyOutputAlphabet, \mealyTransition, \mealyOutput)$, $\hypothesisRMnew=(\mealyStatesNew, \mealyInitNew, \mealyInputAlphabet, \mealyOutputAlphabet, \mealyTransitionNew, \mealyOutputNew)$ } \;
	{ Initialize $\setOfQFunctionsNew = \set{\qValueNew^{\mealyCommonStateNew}~|~\qValueNew^{\mealyCommonStateNew}\in\mealyStatesNew}$ }\;
	\For{$\mealyCommonStateNew\in\mealyStatesNew$, $\mealyCommonState\in \mealyStates$} 
	{
		\If{$\mealyCommonState\sim\mealyCommonStateNew$} 
		{
			{ $\qValueNew^{\mealyCommonStateNew}\gets\qValue^{\mealyCommonState}$}\; \label{transfer_line}
		}
	}
	{ Return $\setOfQFunctionsNew$ }
	\caption{$\text{Transfer}_q$}                                                                                                                                                                      
	\label{alg:transfer}
\end{algorithm}


\subsection{A Polynomial Time Learning Algorithm for Reward Machines}                              
\label{sec_RPNI}

In order to tackle scalability issues of the SAT-based machine learning algorithm, we propose to use a modification of the popular Regular Positive Negative Inference (RPNI) algorithm~\cite{oncina1992inferring} adapted for learning reward machines.
This algorithm, which we name \RPNIMealy, proceeds in two steps.

In the first step, \RPNIMealy\ constructs a partial, tree-like reward machine $\machine$ from a sample $\counterexamples$ where
\begin{itemize}
	\item each prefix $\mdpLabel_1 \ldots \mdpLabel_i$ of a trace $(\mdpLabel_1 \ldots \mdpLabel_k, \mdpRewards_1 \ldots \mdpRewards_k) \in \counterexamples$ $(i \leq k)$ corresponds to a unique state $\mealyCommonState_{\mdpLabel_1 \ldots \mdpLabel_i}$ of $\machine$; and
	\item for each trace $(\mdpLabel_1 \ldots \mdpLabel_k, \mdpRewards_1 \ldots \mdpRewards_k) \in \counterexamples$ and $i \in \{ 0, \ldots, k-1 \}$, a transition leads from state $\mealyCommonState_{\mdpLabel_1 \ldots \mdpLabel_i}$ to state $\mealyCommonState_{\mdpLabel_1 \ldots \mdpLabel_{i+1}}$ with input $\mdpLabel_{i+1}$ and output $\mdpRewards_{i+1}$.
\end{itemize}
Note that $\machine$ fits the sample $\counterexamples$ perfectly in that $\machine(\inputTrace) = \outputTrace$ for each $(\inputTrace, \outputTrace) \in \counterexamples$ and the output of all other inputs is undefined (since the reward machine is partial).
In particular, this means that $\machine$ is consistent with $\counterexamples$.

In the second step, \RPNIMealy\ successively tries to merge the states of $\machine$.
The overall goal is to construct a reward machine with fewer states but more input-output behaviors.
For every candidate merge (which might trigger additional state merges to restore determinism), the algorithm checks whether the resulting machine is still consistent with $\counterexamples$.
Should the current merge result in an inconsistent reward machine, it is reverted and \RPNIMealy\ proceeds with the next candidate merge; otherwise, \RPNIMealy\ keeps the current merge and proceeds with the merged reward machine. This procedure stops if no more states can be merged.
Once this is the case, any missing transition is directed to a sink state, where the output is fixed but arbitrary.

Since \RPNIMealy\ starts with a consistent reward machine and keeps intermediate results only if they remain consistent, its final output is clearly consistent as well.
Moreover, merging of states increases the input-output behaviors, hence generalizing from the (finite) sample.
Finally, let us note that the overall runtime of \RPNIMealy\ is polynomial in the number of symbols in the given sample because the size of the initial reward machine $\machine$ corresponds to the number of symbols in the sample $\counterexamples$ and each operation of \RPNIMealy\ can be performed in polynomial time.


\section{Case Studies}
\label{sec_case}
\begin{figure*}[t]
	\centering
	\begin{subfigure}[b]{0.3\textwidth}
		\centering
		\includegraphics[width=\textwidth]{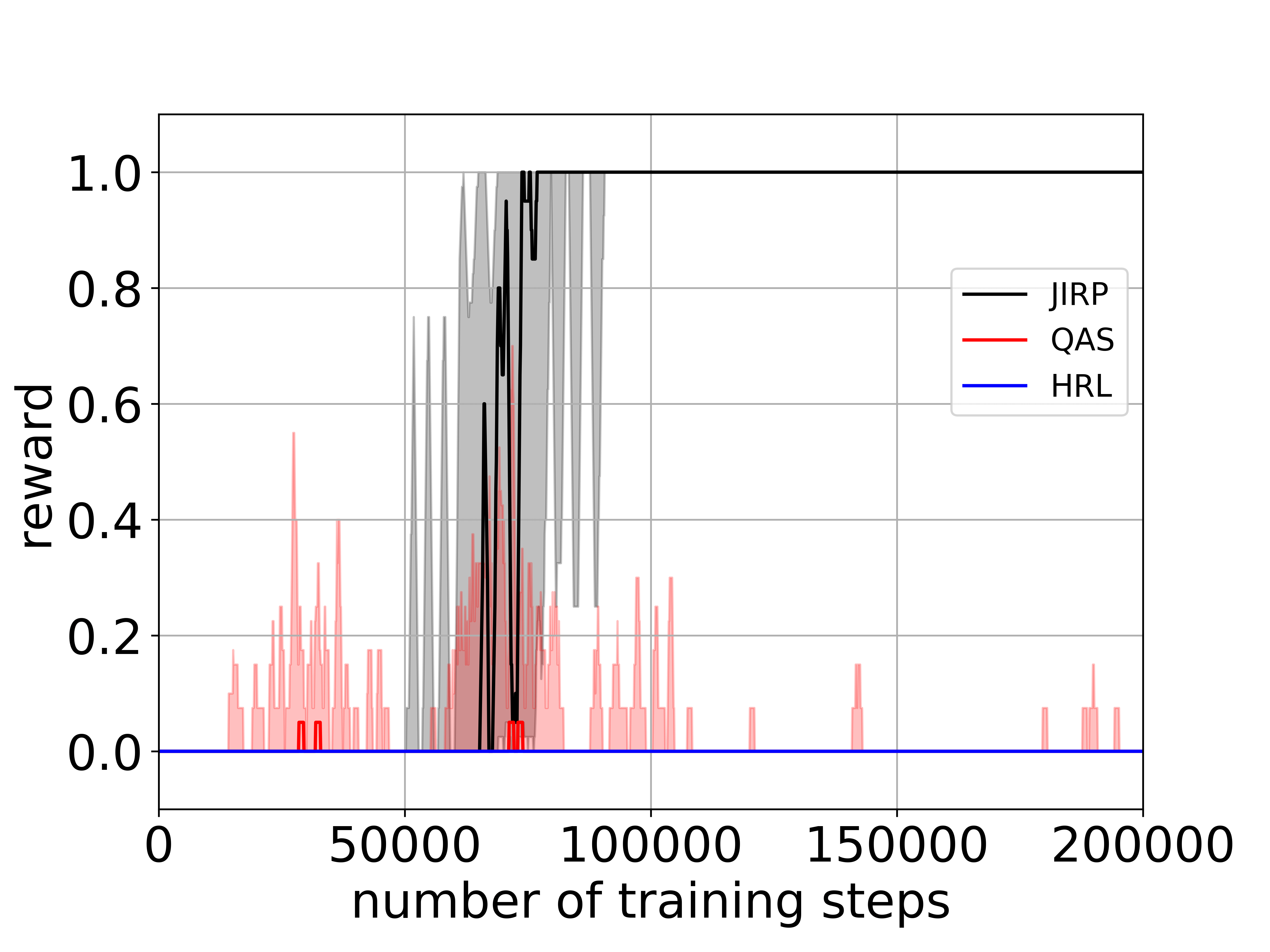}
		\caption{}
	\end{subfigure}
	\hfill
	\begin{subfigure}[b]{0.3\textwidth}
		\centering
		\includegraphics[width=\textwidth]{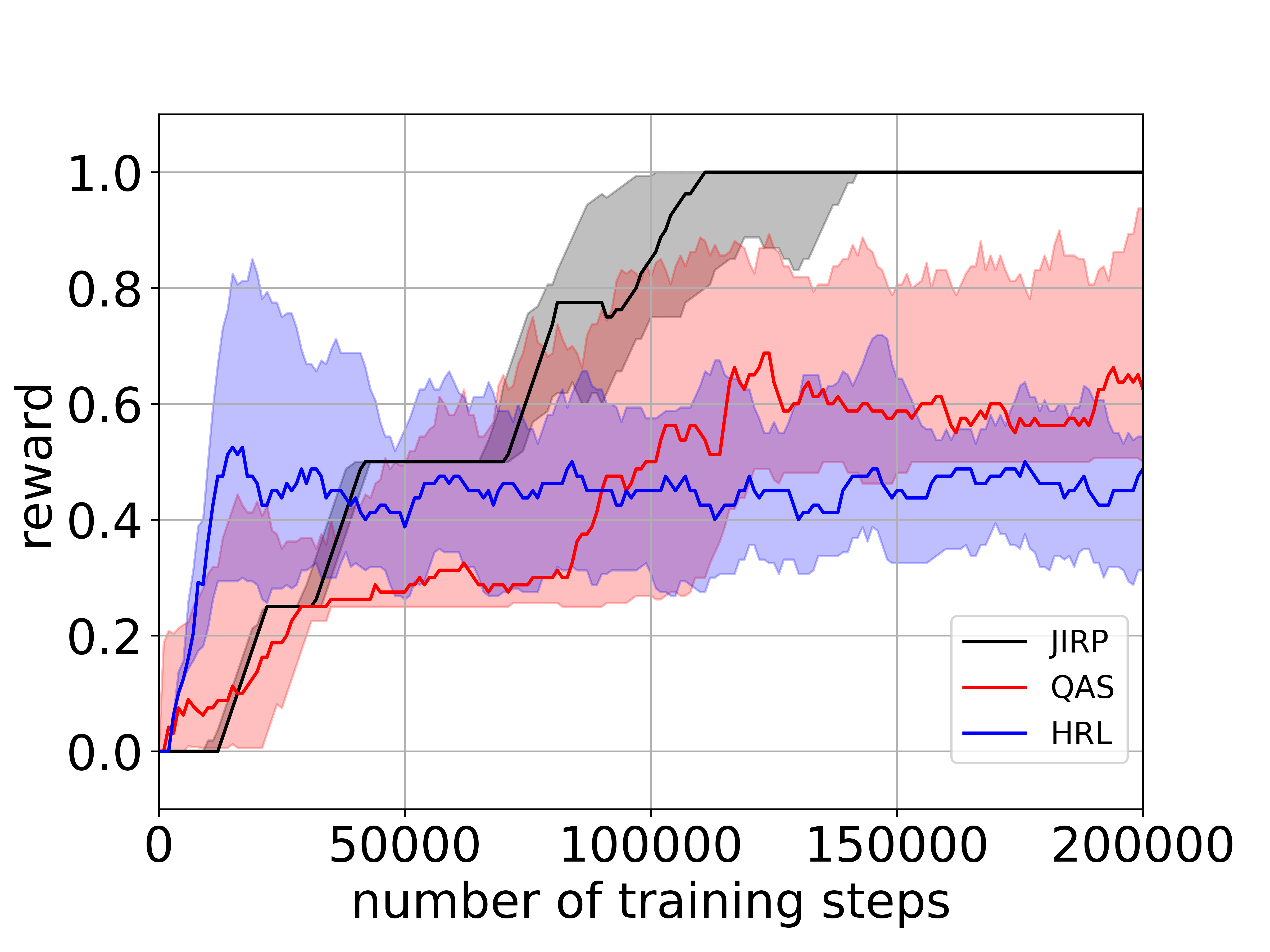}
		\caption{}
	\end{subfigure}
	\hfill
\begin{subfigure}[b]{0.3\textwidth}
	\centering
	\includegraphics[width=\textwidth]{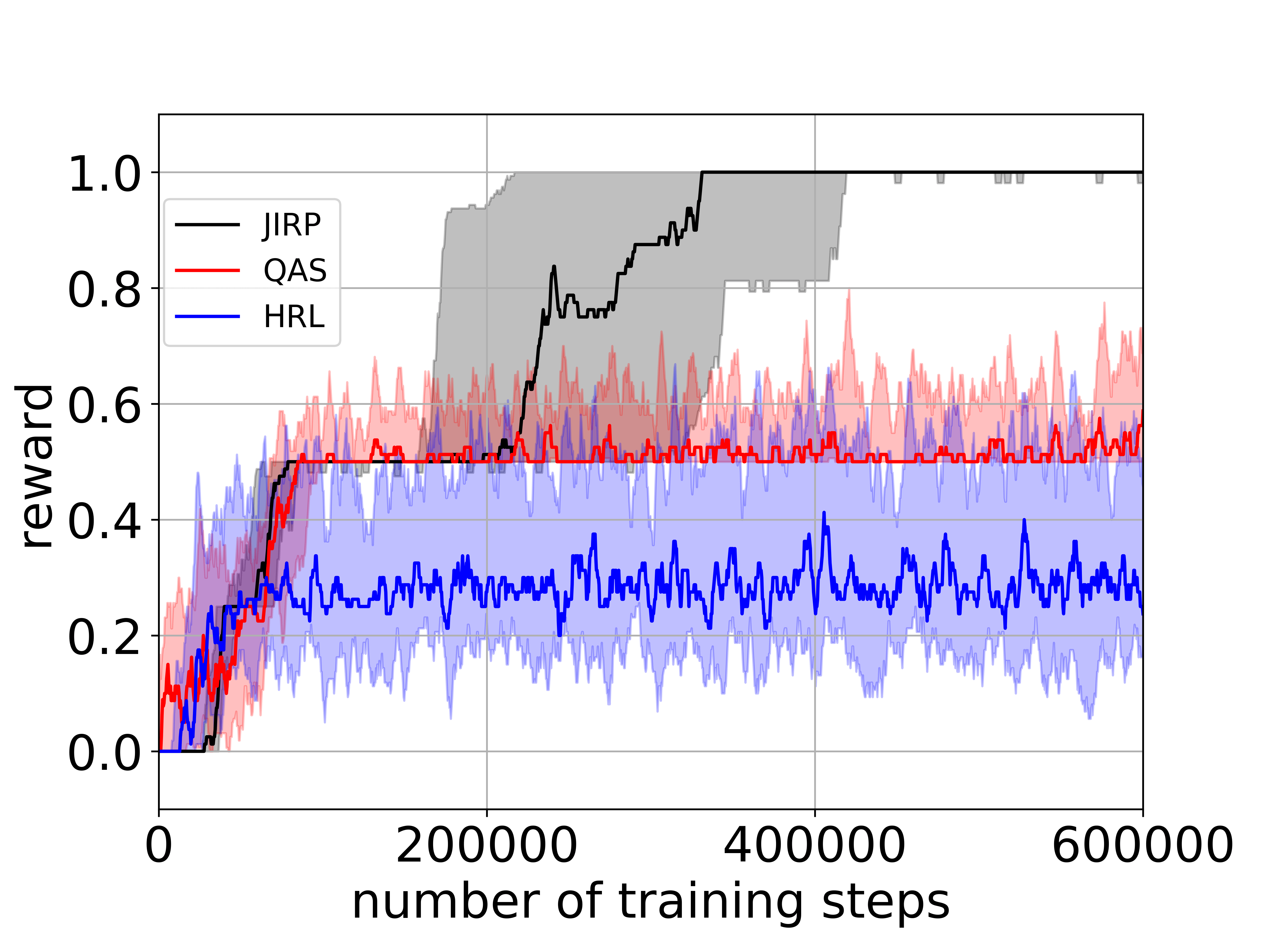}
	\caption{}
\end{subfigure}
	\caption{Cumulative rewards of 10 independent simulation runs averaged for every 10 training steps in (a) \traffic;  (b) \office\ (averaged for four tasks); and (c) \craft\ (averaged for four tasks).}  
	\label{results_all}
\end{figure*}


In this section, we apply the proposed approach to three different scenarios: 
1) autonomous vehicle scenario;
2) office world scenario adapted from~\cite{DBLP:conf/icml/IcarteKVM18}, 
and 
3) Minecraft world scenario adapted from~\cite{andreas2017modular}. We use the libalf~\cite{libalfTool} implementation of RPNI~\cite{oncina1992inferring} as the algorithm to infer reward machines. 
The detailed description of the tasks in the three different scenarios can be found in the supplementary material.





We compare \algoName{}\ (with algorithmic optimizations) with the two following baseline methods:
\begin{itemize}
	\item \methodB\ (q-learning in augmented state space): to incorporate the extra information of the labels (i.e., high-level events in the environment), we perform q-learning \cite{Watkins1992} in an augmented state space with an extra binary vector representing whether each label has been encountered or not.
	\item \methodC\ (hierarchical reinforcement learning): we use a meta-controller for deciding the subtasks (represented by encountering each label) and use the low-level controllers expressed by neural networks \cite{Kulkarni2016hierarchical} for deciding the actions at each state for each subtask.
\end{itemize}



\subsection{\Traffic}
We consider the autonomous vehicle scenario as introduced in the motivating example in Section \ref{sec_example}. The set of actions is $\mdpActions = \set{Straight, Left, Right, Stay}$, corresponding to going straight, turning left, turning right and staying in place. For simplicity, we assume that the labeled MDP is deterministic (i.e, the slip rate is zero for each action). The vehicle will make a U-turn if it reaches the end of any road.


The set of labels is $\set{\mathit{sp}, \mathit{pr}, \textrm{B}}$ and the labeling function $\rmLabelingFunction$ is defined by                                     
\begin{align*}
\mathit{sp} \in \rmLabelingFunction(s,a,s') &\Leftrightarrow a = stay \wedge s\in\mathcal{J}, \\ 
\mathit{pr} \in \rmLabelingFunction(s,a,s') &\Leftrightarrow s'.priority=\top\wedge s\in\mathcal{J}, \\
\textrm{B} \in \rmLabelingFunction(s,a,s') &\Leftrightarrow s'.x = x_{\textrm{B}} \land s'.y = y_{\textrm{B}},
\end{align*}
where $s'.priority$ is a Boolean variable that is true ($\top$) if and only if $s'$ is on the priority roads, $\mathcal J$ represents the set of locations where the vehicle is entering an intersection, $s'.x$ and $s'.y$ are the $x$ and $y$ coordinate values at state $s$, and $x_{\textrm{B}}$ and $y_{\textrm{B}}$ are $x$ and $y$ coordinate values at B (see Figure \ref{fig:ex:road-map}). 

We set $eplength=100$ and $N=100$. Figure \ref{fig:ex:inferred-reward-machine} shows the inferred hypothesis reward machine in the last iteration of \algoName{}\ in one typical run. The inferred hypothesis reward machine is different from the true reward machine in Figure~\ref{fig:ex:reward-machine}, but it can be shown that these two reward machines are equivalent on all attainable label sequences.

\vspace{-8mm}
\begin{figure}[th]
	\centering
	\begin{tikzpicture}[thick,scale=1, every node/.style={transform shape}]
	\node[state,initial] (0) at (-1.5, 0) {$\mealyCommonState_0$};
	\node[state] (1) at (2, 0) {$\mealyCommonState_1$};
	\node[state] (3) at (4.5, 1) {$\mealyCommonState_3$};
	\node[state, accepting] (2) at (-2, 1.5) {$\mealyCommonState_2$};
	\draw[<-, shorten <=1pt] (0.west) -- +(-.4, 0);
	\draw[->] (0) to[left] node[above, align=center] {$(\lnot sp \land \lnot\mathit{pr}, 0)$} (1);
	\draw[->] (0) to[loop below] node[align=center] {$(\lnot sp \land \mathit{pr}, 0)$} ();
	\draw[->] (0) to[bend left=65] node[sloped, below, align=center] {$(sp\wedge pr, 0)$} (1);
	\draw[->] (0) to[bend left=70] node[sloped, below, align=center] {$(sp\wedge \lnot pr, 0)$} (3);
	\draw[->] (1) to[bend left=65] node[above, align=center] {$(sp \land \lnot\mathit{pr}, 0)$} (0);
	\draw[->] (1) to[bend right=65] node[sloped, below, align=center] {$(\lnot sp\wedge \lnot pr, 0)$} (3);
	\draw[->] (1) to[loop below] node[align=center] {$(\mathit{pr}\vee \textrm{B}, 0)$} ();
	\draw[->] (3) to[right] node[sloped, above, align=center] {$(\lnot sp\wedge pr, 0)$} (1);
	\draw[->] (3) to[loop above] node[align=center] {$(\lnot\mathit{pr}, 0)$} ();
	\draw[->] (0) to[bend left] node[left, align=center] {$(\textrm{B}, 1)$} (2);
	\end{tikzpicture}	
	\caption{Inferred hypothesis reward machine in the last iteration of \algoName{}\ in one typical run in the \traffic. } 
	\label{fig:ex:inferred-reward-machine}
\end{figure}
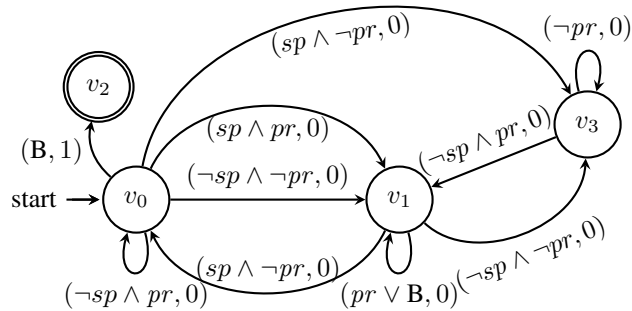

Figure \ref{results_all} (a) shows the cumulative rewards with the three different methods in the \traffic. The \algoName\ approach converges to optimal policies within 100,000 training steps, while \methodB\ and \methodC\ are stuck with near-zero cumulative reward for up to two million training steps (with the first 200,000 training steps shown in Figure \ref{results_all} (a)). 

\subsection{\Office}

We consider the \office\ in the 9$\times$12 grid-world. The agent has four possible actions at each time step: move north, move south, move east and move west. After each action, the robot may slip to each of the two adjacent cells with probability of 0.05. We
use four tasks with different high-level structural relationships among subtasks such as getting the coffee, getting mails and going to the office (see Appendix~\ref{sec:app_office} for details).


We set $eplength=1000$ and $N=30$. Figure \ref{results_all} (b) shows the cumulative rewards with the three different methods in the \office. The \algoName\ approach converges to the optimal policy within 150,000 training steps, while \methodB\ and \methodC\ reach only 60\% of the optimal average cumulative reward within 200,000 training steps.


\subsection{\Craft}

We consider the Minecraft
example in a 21$\times$21 gridworld. The four actions and the slip rates are the same as in the \office.  We
use four tasks including making plank, making stick, making bow and making bridge (see Appendix~\ref{sec:app_craft} for details).


We set $eplength=400$ and $N=30$. Figure \ref{results_all} (c) shows the cumulative rewards with the three different methods in the \craft. 
The \algoName\ approach converges to the optimal policy within 400,000 training steps, while \methodB\ and \methodC\ reach only 50\% of the optimal average cumulative reward within 600,000 training steps.



\section{Conclusion}
We proposed an iterative  approach that alternates between reward machine inference and reinforcement learning (RL) for the inferred reward machine. We have shown the improvement of RL performances using the proposed method.

This work opens the door for utilizing automata learning in RL. First, the same methodology can be applied to other forms of RL, such as model-based RL, or actor-critic methods. Second, we will explore methods that can infer the reward machines incrementally (based on inferred reward machines in the previous iteration). Finally, the method to transfer the q-functions between equivalent states of reward machines can be also used for transfer learning between different tasks where the reward functions are encoded by reward machines.

%

\bibliographystyle{IEEEtran}
\bibliography{references}

	\maketitle
%
%

\appendix





%

\section{Proof of Lemma 1 }

\begin{proof}
We first prove that \algoName\ with $\mathit{eplength} \geq \maxLengthEpisode$ explores every $\maxLengthEpisode$-attainable trajectory with a positive (non-zero) probability.
We show this claim by induction over the length $i$ of trajectories.
\begin{description}
	\item[Base case:]
	The only trajectory of length $i = 0$, $\mdpCommonState_\init$, is always explored because it is the initial state of every exploration.
	
	\item[Induction step:]
	Let $i = i'+ 1$ and $\mdpTrajectory = \mdpCommonState_0 \mdpCommonAction_0 \mdpCommonState_1 \ldots \mdpCommonState_{i'} \mdpCommonAction_{i'} \mdpCommonState_{i}$ be an $\maxLengthEpisode$-attainable trajectory of length $i \leq m$.
	Then, the induction hypothesis yields that \algoName{} explores each $\maxLengthEpisode$-attainable trajectory $\mdpCommonState_0 \mdpCommonAction_0 \mdpCommonState_1 \ldots s_{i'}$ (of length $i' = i-1$).
	Moreover \algoName\ continues its exploration because $\mathit{eplength} \geq \maxLengthEpisode > i'$.
	At this point, every action $\mdpCommonAction_{i'}$ will be chosen with probability at least $\epsilon\times \frac{1}{|\mdpActions_{\mdpCommonState_{i'}}|}$, where $\mdpActions_{\mdpCommonState_{i'}} \subseteq \mdpActions$ denotes the set of available actions in the state $\mdpCommonState_{i'}$ (this lower bound is due to the $\epsilon$-greedy strategy used in the exploration).
	Having chosen action $\mdpCommonAction_{i'}$, the state $\mdpCommonState_{i}$ is reached with probability $\mdpProb(\mdpCommonState_{i'}, \mdpCommonAction_{i'}, \mdpCommonState_{i}) > 0$ because $\mdpTrajectory$ is $\maxLengthEpisode$-attainable.
	Thus, the trajectory $\mdpTrajectory$ is explored with a positive probability. 
\end{description}

Since \algoName\ with $\mathit{eplength} \geq \maxLengthEpisode$ explores every $\maxLengthEpisode$-attainable trajectory with a positive probability, the probability of an $\maxLengthEpisode$-attainable trajectory not being explored becomes $0$ in the limit (i.e., when the number of episodes goes to infinity).
Thus, \algoName\ almost surely (i.e., with probability $1$) explores every $\maxLengthEpisode$-attainable trajectory in the limit.
\end{proof}

\section{Proof of Lemma 2}

In order to prove Lemma~\ref{lem:RPML-learns-correct-reward-machine}, we require a few (basic) definitions from automata and formal language theory.

An \emph{alphabet} $\dfaInputAlphabet$ is a nonempty, finite set of \emph{symbols} $\dfaSymbol \in \dfaInputAlphabet$.
A \emph{word} $\word = b_0 \ldots b_n$ is a finite sequence of symbols.
The empty sequence is called \emph{empty word} and denoted by $\varepsilon$.
The \emph{length} of a word $\omega$, denoted by $|\word|$ is the number of its symbols.
We denote the set of all words over the alphabet $\dfaInputAlphabet$ by $\dfaInputAlphabet^\ast$.

Next,  we recapitulate the definition of deterministic finite automata.

\begin{definition}
A \emph{deterministic finite automaton (DFA)} is a five-tuple $\dfa = (\dfaStates, \dfaState_\init, \dfaInputAlphabet, \dfaTransition, \dfaFinalStates)$ consisting of a nonempty, finite set $\dfaStates$ of states, an initial state $\dfaState_\init \in \dfaStates$, an input alphabet $\dfaInputAlphabet$, a transition function $\dfaTransition \colon \dfaStates \times \dfaInputAlphabet \to \dfaStates$, and a set $\dfaFinalStates \subseteq \dfaStates$ of final states.
The size of a DFA, denoted by $|\dfa|$, is the number $|\dfaStates|$ of its states.
\end{definition}


A \emph{run} of a DFA $\dfa = (\dfaStates, \dfaState_\init, \dfaInputAlphabet, \dfaTransition, \dfaFinalStates)$ on an input word $\word = \dfaSymbol_0 \ldots \dfaSymbol_k$ is a sequence $\dfaState_0 \ldots \dfaState_{k+1}$ of states such that $\dfaState_0 = \dfaState_\init$ and $\dfaState_{i+1} = \dfaTransition(\dfaState_i, \dfaSymbol_i)$ for each $i \in \{ 0, \ldots, k \}$.
A run $\dfaState_0 \ldots \dfaState_{k+1}$ of $\dfa$ on a word $\word$ is \emph{accepting} if $\dfaState_{k+1} \in \dfaFinalStates$, and $\word$ is \emph{accepted} if there exists an accepting run.
The \emph{language} of a DFA $\dfa$ is the set $L(\dfa) = \{ \word \in \dfaInputAlphabet^\ast \mid \text{$\dfa$ accepts $\word$} \}$.
As usual, we call two DFAs $\dfa_1$ and $\dfa_2$ \emph{equivalent} if $L(\dfa_1) = L(\dfa_2)$.
Moreover, let us recapitulate the well-known fact that two non-equivalent DFAs have a ``short'' word that witnesses their non-equivalence.

\begin{theorem}[\cite{DBLP:books/daglib/0025557}, Theorem 3.10.5] \label{thm:dfa-symmetric-difference}
Let $\dfa_1$ and $\dfa_2$ be two DFAs with $L(\dfa_1) \neq L(\dfa_2)$.
Then, there exists a word $\word$ of length at most $|\dfa_1| + |\dfa_2| - 1$ such that $\word \in L(\dfa_1)$ if and only if $\word \notin L(\dfa_2)$.
\end{theorem}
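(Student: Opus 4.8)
The plan is to reduce the claim to the standard Myhill--Nerode \emph{state-distinguishability} bound, but applied to the \emph{disjoint union} of the two automata rather than to their product. The naive route --- forming the product automaton whose accepting states are the pairs in which exactly one component is final, so that it recognizes the symmetric difference $L(\dfa_1) \triangle L(\dfa_2)$, and then extracting a shortest accepted word of length at most (number of reachable states) $-\,1$ --- only yields the quadratic bound $|\dfa_1|\cdot|\dfa_2| - 1$. To obtain the claimed \emph{linear} bound I would instead work with the combined transition system on the state set $\dfaStates_1 \sqcup \dfaStates_2$, with transition function and final set inherited from $\dfa_1$ and $\dfa_2$, which has only $|\dfa_1| + |\dfa_2|$ states.

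On this combined system I would introduce the usual $k$-step indistinguishability relations $\equiv_k$: two states $p, q$ satisfy $p \equiv_k q$ iff for every word $\word$ with $|\word| \le k$, reading $\word$ from $p$ and from $q$ lands in the combined final set simultaneously. These form a descending chain $\equiv_0 \supseteq \equiv_1 \supseteq \cdots$ whose intersection is the true indistinguishability relation $\equiv$, and they obey the one-step recursion that $p \equiv_{k+1} q$ holds iff $p \equiv_0 q$ and $\dfaTransition(p, \dfaSymbol) \equiv_k \dfaTransition(q, \dfaSymbol)$ for every symbol $\dfaSymbol$. This recursion shows that once $\equiv_k = \equiv_{k+1}$ the chain is constant thereafter, so at that point $\equiv_k = \equiv$.

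The key counting step is to bound where the chain stabilizes. Because $L(\dfa_1) \ne L(\dfa_2)$, the combined system has at least one final and at least one non-final state (otherwise both languages would be $\dfaInputAlphabet^\ast$ or both empty), so $\equiv_0$ has exactly two classes, while $\equiv$ has at most $|\dfa_1| + |\dfa_2|$ classes. Each non-stabilizing refinement step splits at least one class and hence strictly increases the number of classes, so the chain stabilizes after at most $(|\dfa_1| + |\dfa_2|) - 2$ steps, i.e. $\equiv = \equiv_{|\dfa_1| + |\dfa_2| - 2}$. Finally, $L(\dfa_1) \ne L(\dfa_2)$ means the initial state of $\dfa_1$ and that of $\dfa_2$ are distinguishable, i.e. not $\equiv$-related; by the previous line they are therefore already separated by $\equiv_{|\dfa_1| + |\dfa_2| - 2}$, which by definition exhibits a word of length at most $|\dfa_1| + |\dfa_2| - 2 \le |\dfa_1| + |\dfa_2| - 1$ that lies in $L(\dfa_1)$ iff it lies outside $L(\dfa_2)$, as required.

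The main obstacle I anticipate is precisely the constant in the bound: the obvious product construction is multiplicative in the state counts, so the whole point is to notice that distinguishability of the two initial states can already be decided inside the additive-size disjoint union, and then to run the refinement-stabilization count there. The remaining care is at the base case --- arguing that the distinguishing hypothesis forces $\equiv_0$ to have exactly two classes rather than one --- and in the off-by-one of the final length estimate; both are routine once the disjoint-union viewpoint is fixed.
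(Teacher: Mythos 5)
Your proof is correct. The paper does not prove this statement at all---it imports it verbatim from the cited textbook---so there is no in-paper argument to compare against; your disjoint-union partition-refinement argument is the standard proof of exactly this fact, and every step checks out: the recursion for $\equiv_{k+1}$, the stabilization once two consecutive relations coincide, the observation that $L(\dfa_1) \neq L(\dfa_2)$ forces $\equiv_0$ to have exactly two classes, and the resulting count of at most $|\dfa_1|+|\dfa_2|-2$ refinement steps. In fact you establish the slightly sharper bound $|\dfa_1|+|\dfa_2|-2$ on the length of the distinguishing word, which of course implies the stated bound of $|\dfa_1|+|\dfa_2|-1$.
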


As the next step towards the proof of Lemma~\ref{lem:RPML-learns-correct-reward-machine}, we remark that every reward machine over the input alphabet $\rmInputAlphabet$ and output alphabet $\rmOutputAlphabet$ can be translated into an ``equivalent'' DFA as defined below.
This DFA operates over the combined alphabet $\rmInputAlphabet \times \rmOutputAlphabet$ and accepts a word $(\mdpLabel_0, \mdpRewards_0) \ldots (\mdpLabel_k, \mdpRewards_k)$ if and only if $\machine$ outputs the reward sequence $\mdpRewards_0 \ldots \mdpRewards_k$ on reading the label sequence $\mdpLabel_0 \ldots \mdpLabel_k$.

\begin{lemma} \label{lem:Mealy-to-DFA}
Given a reward machine $\machine = (\mealyStates, \mealyInit, \mealyInputAlphabet, \mealyOutputAlphabet, \mealyTransition, \mealyOutput)$, one can construct a DFA $\dfa_\machine$ with $|\machine| + 1$ states such that
\begin{equation}
L(\dfa_\machine) = \bigl \{ (\mdpLabel_0, \mdpRewards_0) \ldots (\mdpLabel_k, \mdpRewards_k) \in (\rmInputAlphabet \times \rmOutputAlphabet)^\ast \mid \machine(\mdpLabel_0 \ldots \mdpLabel_k) = \mdpRewards_0 \ldots \mdpRewards_k \bigr \}.
\end{equation}
\end{lemma}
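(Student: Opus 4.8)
The plan is to construct $\dfa_\machine$ by equipping the reward machine with a single additional \emph{sink} state that the automaton enters as soon as it reads a reward component disagreeing with the output the reward machine would produce. Concretely, I would set $\dfa_\machine = (\mealyStates \cup \{\bot\}, \mealyInit, \rmInputAlphabet \times \rmOutputAlphabet, \dfaTransition, \mealyStates)$, where $\bot \notin \mealyStates$ is a fresh sink state and the accepting states are exactly the original reward-machine states $\mealyStates$. The transition function is defined, for each $\mealyCommonState \in \mealyStates$ and each symbol $(\mdpLabel, \mdpRewards) \in \rmInputAlphabet \times \rmOutputAlphabet$, by $\dfaTransition(\mealyCommonState, (\mdpLabel, \mdpRewards)) = \mealyTransition(\mealyCommonState, \mdpLabel)$ whenever $\mealyOutput(\mealyCommonState, \mdpLabel) = \mdpRewards$, and $\dfaTransition(\mealyCommonState, (\mdpLabel, \mdpRewards)) = \bot$ otherwise; the sink absorbs everything, i.e. $\dfaTransition(\bot, (\mdpLabel, \mdpRewards)) = \bot$. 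This automaton has $|\mealyStates| + 1 = |\machine| + 1$ states, matching the claimed bound.

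I would establish the language equality by induction on the length of the input word, proving the stronger invariant that, after reading a prefix $(\mdpLabel_0, \mdpRewards_0) \ldots (\mdpLabel_{i-1}, \mdpRewards_{i-1})$, the run of $\dfa_\machine$ is in the state $\mealyCommonState_i$ reached by $\machine$ on $\mdpLabel_0 \ldots \mdpLabel_{i-1}$ precisely when $\machine(\mdpLabel_0 \ldots \mdpLabel_{i-1}) = \mdpRewards_0 \ldots \mdpRewards_{i-1}$, and is in $\bot$ otherwise. The base case ($i = 0$, the empty prefix) holds because the run begins in $\mealyInit = \mealyCommonState_0$ and $\machine(\varepsilon) = \varepsilon$ trivially agree. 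For the step, I would invoke the definition of $\dfaTransition$: if the invariant places the run at $\mealyCommonState_i$ with all preceding rewards matching, then reading $(\mdpLabel_i, \mdpRewards_i)$ keeps the run on the reward-machine state $\mealyTransition(\mealyCommonState_i, \mdpLabel_i) = \mealyCommonState_{i+1}$ exactly when $\mdpRewards_i = \mealyOutput(\mealyCommonState_i, \mdpLabel_i)$, i.e. exactly when the new reward component also matches; any mismatch (or the run already residing in $\bot$ from an earlier mismatch) sends the run to the absorbing sink.

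Since $\bot$ is the unique non-accepting state and every $\mealyCommonState \in \mealyStates$ is final, a word is accepted iff its run avoids $\bot$, which by the invariant happens iff every reward component agrees with $\machine$'s output, i.e. iff $\machine(\mdpLabel_0 \ldots \mdpLabel_k) = \mdpRewards_0 \ldots \mdpRewards_k$. This is precisely the asserted language $L(\dfa_\machine)$, completing the argument.

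There is no deep obstacle here---the construction is a standard ``track the reward-machine state, divert any output mismatch to a trap'' argument---but one technical point deserves care: the output alphabet $\rmOutputAlphabet = \mathbb{R}$ is infinite, so $\rmInputAlphabet \times \rmOutputAlphabet$ is literally an infinite alphabet, whereas a DFA is defined over a finite one. The transition function above is nonetheless well-defined on every symbol, and for the purposes of Lemma~\ref{lem:RPML-learns-correct-reward-machine} this causes no difficulty: only the finitely many reward values actually produced by $\machine$ (and appearing in the sample) are ever relevant, so one may restrict $\rmOutputAlphabet$ to this finite set without altering any accepted word of interest. Furthermore, the downstream application through Theorem~\ref{thm:dfa-symmetric-difference} bounds the length of a distinguishing word solely in terms of the number of states, so the effective alphabet size is immaterial to the way this lemma is used.
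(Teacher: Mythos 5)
Your construction is exactly the one the paper uses: adjoin a fresh sink state $\bot$, follow the reward machine's transition on $(\mdpLabel,\mdpRewards)$ when $\mealyOutput(\mealyCommonState,\mdpLabel)=\mdpRewards$ and divert to the absorbing $\bot$ otherwise, with $\mealyStates$ as the accepting set, and you verify the language equality by the same straightforward induction the paper invokes. Your additional remark about $\rmOutputAlphabet=\mathbb{R}$ being infinite is a fair observation that the paper leaves implicit, but it does not change the argument.
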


\begin{proof}[Proof of Lemma~\ref{lem:Mealy-to-DFA}]
Let $\machine = (\mealyStates_\machine, \mealyCommonState_{\init, \machine}, \rmInputAlphabet, \rmOutputAlphabet, \mealyTransition_\machine, \mealyOutput_\machine)$ be a reward machine.
Then, we define a DFA $\dfa_\machine = (\dfaStates, \dfaState_\init, \dfaInputAlphabet, \dfaTransition, \dfaFinalStates)$ over the combined alphabet $\rmInputAlphabet \times \rmOutputAlphabet$ by
\begin{itemize}
	\item $\dfaStates = \mealyStates_\machine \cup \{ \bot \}$ with $\bot \notin \mealyStates_\machine$;
	\item $\dfaState_\init = \mealyCommonState_{\init, \machine}$;
	\item $\Sigma = \mealyInputAlphabet \times \mealyOutputAlphabet$;
	\item $\dfaTransition \bigl( \mealyCommonState, (\mdpLabel, \mdpRewards) \bigr) = \begin{cases} \mealyCommonState' & \text{if $\mealyTransition_\machine(\mealyCommonState, \mdpLabel) = \mealyCommonState'$ and $\mealyOutput_\machine(\mealyCommonState, \mdpLabel) = \mdpRewards$;} \\ \bot & \text{otherwise;} \end{cases}$
	\item $\dfaFinalStates = \mealyStates_\machine$.
\end{itemize}

In this definition, $\bot$ is a new sink state to which $\dfa_\machine$ moves if its input does not correspond to a valid input-output pair produced by $\machine$.
A straightforward induction over the length of inputs to $\dfa_\machine$ shows that it indeed accepts the desired language.
In total, $\dfa_\machine$ has $|\machine| + 1$ states.
\end{proof}


Similarly, one can construct a DFA $\dfa_\mdp$ that accepts exactly the attainable traces of an MDP $\mdp$.
First, viewing labels $\rmLabelingFunction(\mdpCommonState, \mdpCommonAction, \mdpCommonState')$ as input symbols, marking every state as an accepting state, and keeping only those transitions for which $p(\mdpCommonState, \mdpCommonAction, \mdpCommonState') > 0$, $\mdp$ can be viewed as a non-deterministic finite automaton. 
Second, using the standard determinization algorithm~\cite{DBLP:books/daglib/0025557}, one can create an equivalent DFA with an exponential blowup in the number of states.

\begin{remark} \label{rem:MDP-to-DFA}
Given a labeled MDP $\mdp$, one can construct a DFA $\dfa_\mdp$ with at most $2^{|\mdp|}$ states that accepts exactly the admissible label sequences of $\mdp$.
\end{remark}

Next, we show that if two reward machines disagree on an attainable label sequence, then we can provide a bound on the length of such a sequence.


\begin{lemma} \label{lem:reward-machine-difference}
Let $\mdp = (\mdpStates, \mdpInit, \mdpActions, \mdpProb, \mdpRewardFunction, \mdpDiscount, \rmLabels, \rmLabelingFunction)$ be a labeled MDP and $\machine_1, \machine_2$ two reward machines with input alphabet $\rmInputAlphabet$.
If there exists an attainable label sequence $\inputTrace$ such that $\machine_1(\inputTrace) \neq \machine_2(\inputTrace)$, then there also exists an $\maxLengthEpisode$-attainable label sequence $\inputTrace^\star$ with $\maxLengthEpisode \leq 2^{|\mdp|} (|\machine_1| + |\machine_2| + 2) - 1$ such that $\machine_1(\inputTrace^\star) \neq \machine_2(\inputTrace^\star)$.
\end{lemma}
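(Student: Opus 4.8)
The plan is to reduce the statement about reward machines to a statement about DFAs, where we already have a short-witness theorem (Theorem~\ref{thm:dfa-symmetric-difference}) and a product-construction intuition. First I would translate each reward machine $\machine_j$ into its associated DFA $\dfa_{\machine_j}$ over the combined alphabet $\rmInputAlphabet \times \rmOutputAlphabet$ using Lemma~\ref{lem:Mealy-to-DFA}, giving automata with $|\machine_j| + 1$ states each. The key observation is that $\machine_1(\inputTrace) \neq \machine_2(\inputTrace)$ for some attainable $\inputTrace$ means there is a reward sequence $\outputTrace_1 = \machine_1(\inputTrace)$ that $\dfa_{\machine_1}$ accepts (paired with $\inputTrace$) but $\dfa_{\machine_2}$ rejects, witnessing $L(\dfa_{\machine_1}) \neq L(\dfa_{\machine_2})$. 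However, the disagreement must happen \emph{on an attainable label sequence}, so I cannot simply apply the symmetric-difference theorem to $\dfa_{\machine_1}$ and $\dfa_{\machine_2}$ alone; I must intersect with the attainability constraint captured by $\dfa_\mdp$ from Remark~\ref{rem:MDP-to-DFA}.

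The core of the argument is to build two DFAs that already bake in attainability and then apply Theorem~\ref{thm:dfa-symmetric-difference}. Concretely, I would form the product of $\dfa_\mdp$ (lifted to read the combined alphabet by ignoring the reward component) with each $\dfa_{\machine_j}$, obtaining DFAs $\mathcal B_j$ over $\rmInputAlphabet \times \rmOutputAlphabet$ that accept exactly those combined words $(\inputTrace, \outputTrace)$ for which $\inputTrace$ is attainable \emph{and} $\machine_j(\inputTrace) = \outputTrace$. Each $\mathcal B_j$ has at most $2^{|\mdp|}(|\machine_j| + 1)$ states. The hypothesis that $\machine_1$ and $\machine_2$ disagree on some attainable $\inputTrace$ translates into $L(\mathcal B_1) \neq L(\mathcal B_2)$: the word $(\inputTrace, \machine_1(\inputTrace))$ lies in $L(\mathcal B_1)$ but, since $\machine_2(\inputTrace) \neq \machine_1(\inputTrace)$, not in $L(\mathcal B_2)$.

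Applying Theorem~\ref{thm:dfa-symmetric-difference} to $\mathcal B_1$ and $\mathcal B_2$ yields a distinguishing combined word of length at most $|\mathcal B_1| + |\mathcal B_2| - 1 \leq 2^{|\mdp|}(|\machine_1| + 1) + 2^{|\mdp|}(|\machine_2| + 1) - 1 = 2^{|\mdp|}(|\machine_1| + |\machine_2| + 2) - 1$. I would then project this short combined witness onto its label component to obtain $\inputTrace^\star$. Because the witness lies in exactly one of $L(\mathcal B_1), L(\mathcal B_2)$, and membership in $\mathcal B_j$ requires both attainability and agreement with $\machine_j$, the projected $\inputTrace^\star$ is attainable (hence $\maxLengthEpisode$-attainable for $\maxLengthEpisode$ equal to its length) and satisfies $\machine_1(\inputTrace^\star) \neq \machine_2(\inputTrace^\star)$, matching the claimed bound.

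The step I expect to require the most care is verifying that membership in exactly one of $L(\mathcal B_1), L(\mathcal B_2)$ genuinely forces a reward-sequence disagreement on an \emph{attainable} label sequence rather than a spurious mismatch arising only from the reward-pairing bookkeeping. In particular, one must confirm that the short witness $(\inputTrace^\star, \outputTrace^\star)$ has an attainable label part: if $\inputTrace^\star$ were not attainable, both $\mathcal B_j$ would reject it and it could not be a distinguishing word, so attainability follows automatically — but I would want to state this explicitly. A secondary subtlety is ensuring the reward machines are total (defined on all inputs) so that $\machine_j(\inputTrace^\star)$ is well-defined; this holds by Definition~\ref{def:rewardMealyMachines}, which requires $\mealyTransition$ and $\mealyOutput$ to be total functions, so every combined word has a well-defined acceptance status in each $\dfa_{\machine_j}$ and hence in each $\mathcal B_j$.
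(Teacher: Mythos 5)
Your proposal is correct and follows essentially the same route as the paper's proof: both construct the input-synchronized products of $\dfa_\mdp$ with $\dfa_{\machine_1}$ and $\dfa_{\machine_2}$ (your $\mathcal B_j$ is exactly the paper's $\dfa_{\mdp \times \machine_j}$, with the same state bound $2^{|\mdp|}(|\machine_j|+1)$), apply Theorem~\ref{thm:dfa-symmetric-difference} to obtain the short distinguishing word, and project onto the label component. Your explicit remark that the witness's label part must be attainable (since an unattainable sequence would be rejected by both products and could not distinguish them) is a point the paper leaves implicit, and is a welcome clarification.
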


\begin{proof}[Proof of Lemma~\ref{lem:reward-machine-difference}]

Let $\mdp = (\mdpStates, \mdpInit, \mdpActions, \mdpProb, \mdpRewardFunction, \mdpDiscount, \rmLabels, \rmLabelingFunction)$ be a labeled MDP and $\machine_1$, $\machine_2$ two reward machines with input alphabet $\rmInputAlphabet$.
As a first step, we construct the DFAs $\dfa_\mdp = (\dfaStates', \dfaState'_\init, \rmInputAlphabet, \dfaTransition', \dfaFinalStates')$ according to Remark~\ref{rem:MDP-to-DFA} and the DFAs $\dfa_{\machine_i} = (\dfaStates''_i, \dfaState''_{\init, i}, \rmInputAlphabet \times \rmOutputAlphabet, \dfaTransition''_i, \dfaFinalStates''_i)$ for $i \in \{ 1, 2 \}$ according to Lemma~\ref{lem:Mealy-to-DFA}.

Next, we construct the input-synchronized product $\dfa_{\mdp \times \machine_i} = (\dfaStates''', \dfaState'''_\init, \rmInputAlphabet, \dfaTransition''', \dfaFinalStates''')$ of a $\dfa_\mdp$ and $\dfa_{\machine_i}$ by
\begin{itemize}
	\item $\dfaStates''' = \dfaStates' \times \dfaStates''_i$;
	\item $\dfaState'''_I = (\dfaState'_\init, \dfaState''_{\init, i})$;
	\item $\dfaTransition''' \bigl( (\dfaState', \dfaState''_i), (\mdpLabel, \mdpRewards) \bigr) = \bigl( \dfaTransition'(\dfaState', \mdpLabel), \dfaTransition''_i(\dfaState''_i, (\mdpLabel, \mdpRewards)) \bigr)$; and
	\item $\dfaFinalStates''' = \dfaFinalStates' \times \dfaFinalStates''_i$,
\end{itemize}
which synchronizes $\dfa_\mdp$ and the input-component of $\dfa_{\machine_i}$.
A straightforward induction over the lengths of inputs to $\dfa_{\mdp \times \machine_i}$ shows that $(\mdpLabel_0, \mdpRewards_0) \ldots (\mdpLabel_k, \mdpRewards_k) \in L(\dfa_{\mdp \times \machine_i})$ if and only if $\mdpLabel_0 \ldots \mdpLabel_k$ is an attainable label sequence such that $\machine_i(\mdpLabel_0 \ldots \mdpLabel_k) = \mdpRewards_0 \ldots \mdpRewards_k$.
Moreover, note that $\dfa_{\mdp \times \machine_i}$ has $2^{|\mdp|} (|\machine_i| + 1)$ states.

If there exists an attainable label sequence $\inputTrace$ such that $\machine_1(\inputTrace) \neq \machine_2(\inputTrace)$, then $L(\dfa_{\mdp \times \machine_1}) \neq L(\dfa_{\mdp \times \machine_2})$ by construction of the DFAs $\dfa_{\mdp \times \machine_1}$ and $\dfa_{\mdp \times \machine_2}$.
In this situation, Theorem~\ref{thm:dfa-symmetric-difference} guarantees the existence of a word $\word = (\mdpLabel_0, \mdpRewards_0) \ldots (\mdpLabel_{\maxLengthEpisode-1}, \mdpRewards_{\maxLengthEpisode-1}) \in (\rmInputAlphabet \times \rmOutputAlphabet)^\ast$ of size
\begin{align*}
	\maxLengthEpisode & \leq 2^{|\mdp|} (|\machine_1| + 1) + 2^{|\mdp|}  (|\machine_2| + 1) - 1 \\
	& = 2^{|\mdp|}  (|\machine_1| + |\machine_2| + 2) - 1
\end{align*}
such that $\word \in L(\dfa_{\mdp \times \machine_1})$ if and only if $\word \notin L(\dfa_{\mdp \times \machine_2})$.

Let now $\inputTrace^\star = \mdpLabel_0 \ldots \mdpLabel_{\maxLengthEpisode-1}$.
By construction of the DFAs $\dfa_{\mdp \times \machine_1}$ and $\dfa_{\mdp \times \machine_2}$, we know that $\machine_1(\inputTrace^\star) \neq \machine_2(\inputTrace^\star)$ holds.
Moreover, $\inputTrace^\star$ is an $\maxLengthEpisode$-attainable label sequence with the desired bound on $\maxLengthEpisode$.
\end{proof}

We are now ready to prove Lemma~\ref{lem:RPML-learns-correct-reward-machine}.

\begin{proof}[Proof of Lemma~\ref{lem:RPML-learns-correct-reward-machine}]
Let $X_0, X_1, \ldots$ be the sequence of samples that arise in the run of \algoName\ whenever new counterexamples are added to $X$ (in Line~\ref{alg:RPML:addingToCounterexamples} of Algorithm \ref{alg:RPML}).
We now make two observations about this sequence, which help us prove Lemma~\ref{lem:RPML-learns-correct-reward-machine}.
\begin{enumerate}
	\item \label{obs:sequence-of-samples:1} The sequence $X_0, X_1, \ldots$ grows strictly monotonically (i.e., $X_0 \subsetneq X_1 \subsetneq \cdots$).
	The reasons for this are twofold.
	First, \algoName\ always adds counterexamples to $X$ and never removes them (which establishes $X_0 \subseteq X_1 \subseteq \cdots$).
	Second, whenever a counterexample $(\inputTrace_i, \outputTrace_i)$ is added to $X_i$ to form $X_{i+1}$, then $(\inputTrace_i, \outputTrace_i) \notin X_i$.
	To see why this is the case, remember that \algoName\ always constructs hypothesis reward machines that are consistent with the current sample.
	Thus, the reward machine $\hypothesisRM_i$ is consistent with $X_i$.
	However, $(\inputTrace_i, \outputTrace_i)$ was added because $\hypothesisRM_i(\inputTrace_i) \neq \outputTrace_i$.
	Hence, $(\inputTrace_i, \outputTrace_i)$ cannot have been an element of $X_i$.
	\item \label{obs:sequence-of-samples:2} The true reward machine $\machine$, the one that \implement s the reward function $\mdpRewardFunction$, is by definition consistent with all samples $X_i$ that are generated during the run of \algoName.
\end{enumerate}

Once a new counterexample is added, \algoName\ learns a new reward machine.
Let $\hypothesisRM_0, \hypothesisRM_1, \ldots$ be the sequence of these reward machines, where $\hypothesisRM_i$ is computed based on the sample $X_i$.
As above, we make two observations about this sequence.

\begin{enumerate} \setcounter{enumi}{2}
	\item \label{obs:sequence-of-hypotheses:1} We have $|\hypothesisRM_i| \leq |\hypothesisRM_{i+1}|$.
	Towards a contradiction, assume that $|\hypothesisRM_i| > |\hypothesisRM_{i+1}|$.
	Since \algoName\ always computes consistent reward machines and $X_i \subsetneq X_{i+1}$ (see Observation~\ref{obs:sequence-of-samples:1}), we know that $\hypothesisRM_{i+1}$ is not only consistent with $X_{i+1}$ but also with $X_i$ (by definition of consistency).
	Moreover, \algoName\ always computes consistent reward machines of minimal size.
	Thus, since $\hypothesisRM_{i+1}$ is consistent with $X_i$ and $|\hypothesisRM_{i+1}| < |\hypothesisRM_i|$, the reward machine $\hypothesisRM_i$ is not minimal, which yields the desired contradiction.
	
	\item \label{obs:sequence-of-hypotheses:2} We have $\hypothesisRM_i \neq \hypothesisRM_j$ for each $j \in \{ 0, \ldots, i-1 \}$; in other words, the reward machines generated during the run of \algoName\ are semantically distinct.
	This is a consequence of the facts that  $(\inputTrace_j, \outputTrace_j)$ was a counterexample to $\hypothesisRM_j$ (i.e., $\hypothesisRM_j(\inputTrace_j) \neq \outputTrace_j$) and the learning algorithm for reward machines always constructs consistent reward machines (which implies $\hypothesisRM_i(\inputTrace_j) = \outputTrace_j$).
\end{enumerate}

Observations~\ref{obs:sequence-of-samples:2} and \ref{obs:sequence-of-hypotheses:1} now provide $|\machine|$ as an upper bound on the size of the hypothesis reward machines constructed in the run of \algoName.
Since there are only finite many reward machines of size $|\machine|$ or less, Observation~\ref{obs:sequence-of-hypotheses:2} then implies that there exists an $i^\star \in \mathbb N$ after which no new reward machine is inferred.
Thus, it is left to show that $\hypothesisRM_{i^\star}(\inputTrace) = \machine(\inputTrace)$ for all attainable label sequences $\inputTrace$.

Towards a contradiction, assume that there exists an attainable label sequence $\inputTrace$ such that $\hypothesisRM_{i^\star}(\inputTrace) \neq \machine(\inputTrace)$.
Lemma~\ref{lem:reward-machine-difference} then guarantees the existence of an $\maxLengthEpisode$-attainable label sequence $\inputTrace^\star$ with
\begin{align*}
	\maxLengthEpisode & \leq 2^{|\mdp|} (|\hypothesisRM_{i^\star}| + |\machine| + 2) - 1 \\
	& \leq 2^{|\mdp|}  (2|\machine| + 2) - 1 \\
	& = 2^{|\mdp| + 1}  (|\machine| + 1) - 1
\end{align*}
such that $\hypothesisRM_{i^\star}(\inputTrace^\star) \neq \machine(\inputTrace^\star)$.
By Corollary~\ref{col:attainable-traces}, \algoName\ almost surely explores the label sequence $\inputTrace^\star$ in the limit because we assume $\mathit{eplength} \geq 2^{|\mdp| + 1}  (|\machine| + 1) - 1 = \maxLengthEpisode$.
Thus, the trace $(\inputTrace^\star, \outputTrace^\star)$, where $\outputTrace^\star = \machine(\inputTrace^\star)$, is almost surely returned as a new counterexample, resulting in a new sample $X_{i^\star + 1}$.
This triggers the construction of a new reward machine $\hypothesisRM_{i^\star + 1}$ (which will then be different from all previous reward machines).
However, this contradicts the assumption that no new reward machine is constructed after $\hypothesisRM_{i^\star}$.
Thus, $\hypothesisRM_{i^\star}(\inputTrace) = \machine(\inputTrace)$ holds for all attainable input sequences $\inputTrace$.
\end{proof}

\section{Proof of Theorem~\ref{thm:convergenceInTheLimit}}
To prove Theorem~\ref{thm:convergenceInTheLimit}, we use the fact that \algoName\ will eventually learn a reward machine equivalent to the reward machine $\machine$ on all attainable label sequences (see Lemma~\ref{lem:RPML-learns-correct-reward-machine}).
Then, closely following the proof of Theorem 4.1 from~\cite{DBLP:conf/icml/IcarteKVM18}, 
we construct an MDP $\mdp_\machine$, 
show that using the same policy for $\mdp$ and $\mdp_\machine$ yields same rewards,
and, due to convergence of q-learning for $\mdp_\machine$,  
conlcude that \algoName{}\ converges towards an optimal policy for $\mdp$.
Lemma~\ref{lem:markovianViewOfNMRDP} describes the construction of the mentioned MDP $\mdp_\machine$.

\begin{lemma}
\label{lem:markovianViewOfNMRDP}
Given an MDP $\mdp = (\mdpStates, \mdpInit, \mdpActions, \mdpProb, \mdpRewardFunction, \mdpDiscount, \rmLabels, \rmLabelingFunction)$ 
with a non-Markovian reward function defined by a reward machine 
$\machine = (\mealyStates, \mealyCommonState_{\init}, \rmInputAlphabet, \rmOutputAlphabet, \mealyTransition, \mealyOutput)$, 
one can construct an MDP $\mdp_\machine$ whose reward function is Markovian such that every attainable label sequence of $\mdp_\machine$ gets the same reward as in $\mdp$. 
Furthermore, any policy for $\mdp_\machine$ achieves the same expected reward in $\mdp$.
\end{lemma}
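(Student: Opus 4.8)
The plan is to use the standard product (cross-product) construction: I would fold the reward-machine state into the MDP state so that the reward becomes a function of the current transition alone. Concretely, I define $\mdp_\machine = (\mdpStates \times \mealyStates, (\mdpInit, \mealyInit), \mdpActions, \mdpProb_\machine, \mdpRewardFunction_\machine, \mdpDiscount, \rmLabels, \rmLabelingFunction_\machine)$ where $\mdpProb_\machine((s,v), a, (s', v')) = \mdpProb(s,a,s')$ whenever $v' = \mealyTransition(v, \rmLabelingFunction(s,a,s'))$ and $0$ otherwise, the reward is $\mdpRewardFunction_\machine((s,v),a,(s',v')) = \mealyOutput(v, \rmLabelingFunction(s,a,s'))$, and $\rmLabelingFunction_\machine((s,v),a,(s',v')) = \rmLabelingFunction(s,a,s')$. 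Since $\machine$ is deterministic, exactly one $v'$ receives positive probability for each tuple $(s,v,a,s')$, so $\mdpProb_\machine$ is a genuine transition function (its mass over $(s',v')$ sums to that of $\mdpProb$ over $s'$). Crucially, $\mdpRewardFunction_\machine$ depends only on the last transition, so it is Markovian by construction.

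Next I would establish a bijection between trajectories of $\mdp$ and of $\mdp_\machine$. Because $\machine$ is deterministic and starts in $\mealyInit$, every trajectory $\mdpTrajectory = s_0 a_0 s_1 \ldots s_{k+1}$ of $\mdp$ induces a unique run $v_0 v_1 \ldots v_{k+1}$ of $\machine$ on its label sequence, hence a unique trajectory $(s_0, v_0) a_0 (s_1, v_1) \ldots (s_{k+1}, v_{k+1})$ of $\mdp_\machine$ carrying the same label sequence and the same positive transition probabilities. A straightforward induction on $k$ shows that $v_i$ is exactly the state reached by $\machine$ after reading $\rmLabelingFunction(s_0, a_0, s_1) \ldots \rmLabelingFunction(s_{i-1}, a_{i-1}, s_i)$; consequently the Markovian reward $\mdpRewardFunction_\machine((s_i, v_i), a_i, (s_{i+1}, v_{i+1})) = \mealyOutput(v_i, \rmLabelingFunction(s_i, a_i, s_{i+1}))$ equals the $i$-th reward that $\machine$, and therefore the non-Markovian $\mdpRewardFunction$, assigns along $\mdpTrajectory$. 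This yields that every attainable label sequence of $\mdp_\machine$ receives the same reward sequence as in $\mdp$.

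Finally, to transfer expected rewards I would lift any policy $\pi$ for $\mdp_\machine$ to a history-dependent policy $\tilde\pi$ for $\mdp$ that tracks the current reward-machine state: after a history whose induced reward-machine state is $v$ and whose current MDP state is $s$, the policy $\tilde\pi$ plays $\pi((s,v), \cdot)$. The trajectory correspondence above is measure-preserving, since the $\mdp$-transition probabilities agree and the reward-machine component is determined deterministically; hence the probability of $\mdpTrajectory$ under $\tilde\pi$ in $\mdp$ equals that of its image under $\pi$ in $\mdp_\machine$. As the per-step rewards agree along corresponding trajectories and the discount factor is unchanged, the expected discounted cumulative rewards coincide, which is the claim.

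I expect the main obstacle to be the careful treatment of policies in the non-Markovian setting: because $\mdpRewardFunction$ is non-Markovian, policies for $\mdp$ are in general history-dependent, so I must argue that the reward-machine state is a \emph{sufficient statistic} of the history for reward purposes, and that folding it into the state neither loses nor adds behaviors. Making the measure-preserving correspondence between the two stochastic processes precise, rather than merely matching individual trajectories, is the delicate part; it goes through precisely because the reward-machine update is a deterministic function of the observed labels.
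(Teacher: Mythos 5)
Your proposal is correct and uses the same product construction as the paper's proof (fold the reward-machine state into the MDP state, copy the transition probabilities onto the unique successor determined by $\mealyTransition$, and read the reward off $\mealyOutput$); your trajectory bijection and measure-preservation argument simply spell out details that the paper's proof leaves implicit.
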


\begin{proof}
Let $\mdp = (\mdpStates, \mdpInit, \mdpActions, \mdpProb, \mdpRewardFunction, \mdpDiscount, \rmLabels, \rmLabelingFunction)$ be a labeled MDP and $\machine = (\mealyStates, \mealyInit, \rmInputAlphabet, \rmOutputAlphabet, \mealyTransition, \mealyOutput)$ a reward machine encoding its reward function.
We define the product MDP $\mdp_\machine = (\mdpStates', \mdpCommonState'_{\init}, \mdpActions, \mdpProb', \mdpRewardFunction', \mdpDiscount', \rmLabels', \rmLabelingFunction')$ by 
\begin{itemize}
	\item $\mdpStates' = \mdpStates \times \mealyStates$;
	\item $\mdpCommonState'_{\init} = (\mdpInit, \mealyCommonState_{\init})$;
	\item $\mdpActions = \mdpActions$; 
	\item $\mdpProb' \bigl( (\mdpCommonState, \mealyCommonState), \mdpCommonAction, (\mdpCommonState', \mealyCommonState') \bigr)$\\
    $= \begin{cases}
	\mdpProb(\mdpCommonState, \mdpCommonAction, \mdpCommonState') & \mealyCommonState' = \mealyTransition(\mealyCommonState, \rmLabelingFunction(\mdpCommonState, \mdpCommonAction, \mdpCommonState'));\\
	0 & \text{otherwise};
\end{cases}
$
	\item $\rmLabels' = \rmLabels$; $\rmLabelingFunction'= \rmLabelingFunction$;
	\item $\mdpRewardFunction' \bigl( (\mdpCommonState, \mealyCommonState), \mdpCommonAction, (\mdpCommonState', \mealyCommonState') \bigr) = \mealyOutput \bigl(\mealyCommonState, \rmLabelingFunction(\mdpCommonState, \mdpCommonAction, \mdpCommonState') \bigr)$; and
	\item $\mdpDiscount' = \mdpDiscount$.
\end{itemize}

The described MDP has a Markovian reward function that matches $\mdpRewardFunction$, 
the (non-Markovian) reward function of $\mdp$
defined by the reward machine $\machine$
(Definition~\ref{def:rewardMealyMachines}).
Since the reward functions and discount factors are the same, the claims follow.
\end{proof}

Lemma~\ref{lem:RPML-learns-correct-reward-machine} shows that eventually $\hypothesisRM$, the reward machine learned by \algoName, will be equivalent to $\machine$ on all attainable label sequences.
Thus, using Lemma~\ref{lem:markovianViewOfNMRDP}, an optimal policy for MDP $\mdp_\hypothesisRM$ will also be optimal for $\mdp$.

When running episodes of QRM (Algorithm~\ref{alg:QRMepisode}) under the reward machine $\hypothesisRM$, an update of a $q$-function connected to a state of a reward machine corresponds to updating the $q$ function for $\mdp_\hypothesisRM$.
Because $\textit{eplength} \geq |\mdp|$,
the fact that QRM uses $\epsilon$-greedy strategy 
and that updates are done in parallel for all states of the reward machine $\hypothesisRM$, 
we know that every state-action pair of the MDP $\mdp_\hypothesisRM$
will be seen infinitely often.
Hence, convergence of q-learning for $\mdp_\hypothesisRM$ to an optimal policy 
is guaranteed~\cite{Watkins1992}.
Finally, because of Lemma~\ref{lem:markovianViewOfNMRDP},
\algoName\ converges to an optimal policy, too.

We have proved that 
if the number of episodes goes to infinity,
and the length of an episode is at least 
$2^{|\mdp| + 1} (|\machine| + 1) - 1$,
then \algoName{} converges towards an optimal policy.

\section{Proof of Theorem~\ref{thm:convergenceInTheLimit_opt}}
In order to prove Theorem~\ref{thm:convergenceInTheLimit_opt}, we first need to prove the following lemma. 
\begin{lemma}
	\label{lem:RPML-learns-correct-reward-machine-opt}
	Let $\mdp$ be a labeled MDP and $\machine$ the reward machine \implementing\ the rewards of $\mdp$.
	Then, \algoName\ with Optimizations 1 and 2 with $\mathit{eplength} \geq 2^{|\mdp| + 1}  (|\machine| + 1) - 1$ learns a reward machine that is equivalent to $\machine$ on all attainable traces in the limit (i.e., when the number of episodes goes to infinity).
\end{lemma}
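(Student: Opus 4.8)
The plan is to mirror the argument for Lemma~\ref{lem:RPML-learns-correct-reward-machine}, isolating the two places where Optimizations 1 and 2 could interfere and showing that neither breaks the inference of the correct reward machine. First I would observe that Optimization 2 (transfer of q-functions) only changes how the q-functions are initialized after a new hypothesis is inferred; it does not alter the $\epsilon$-greedy exploration of the underlying MDP. Consequently Lemma~\ref{lem:attainable-trajectories} and Corollary~\ref{col:attainable-traces} carry over verbatim: with $\mathit{eplength} \geq \maxLengthEpisode$, \algoName\ with the optimizations still almost surely explores every $\maxLengthEpisode$-attainable label sequence in the limit. Thus the only structural change relevant to reward-machine inference is Optimization 1 (batching of counterexamples).

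Next I would re-establish the four observations from the proof of Lemma~\ref{lem:RPML-learns-correct-reward-machine} for the sequence of samples $\counterexamples_0 \subseteq \counterexamples_1 \subseteq \cdots$ and hypotheses $\hypothesisRM_0, \hypothesisRM_1, \ldots$ produced at the successive inference steps (Line~\ref{algLine:infer} of Algorithm~\ref{alg:RPML-optimized}). The only difference is that each step now merges a whole batch $\newCounterexamples \neq \emptyset$ into $\counterexamples$ rather than a single trace. The sample sequence still grows strictly monotonically, because throughout a batching window the hypothesis $\hypothesisRM$ is fixed and consistent with the current $\counterexamples$, so any buffered trace $(\inputTrace, \outputTrace)$ with $\hypothesisRM(\inputTrace) \neq \outputTrace$ cannot already lie in $\counterexamples$; hence at least one element of each batch is genuinely new. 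Minimality of the inferred machines again gives $|\hypothesisRM_i| \leq |\hypothesisRM_{i+1}|$, the true machine $\machine$ is consistent with every $\counterexamples_i$ and so bounds all hypothesis sizes by $|\machine|$, and each new hypothesis corrects a counterexample of all previous ones and is therefore semantically distinct from them.

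Since there are only finitely many reward machines of size at most $|\machine|$ and they are pairwise distinct, I would conclude as before that inference stabilizes: there is an $i^\star$ after which no new hypothesis is produced. It then remains to show $\hypothesisRM_{i^\star}$ agrees with $\machine$ on all attainable label sequences. Towards a contradiction, suppose $\hypothesisRM_{i^\star}$ disagrees with $\machine$ on some attainable $\inputTrace$. Then Lemma~\ref{lem:reward-machine-difference} supplies an $\maxLengthEpisode$-attainable witness $\inputTrace^\star$ with $\maxLengthEpisode \leq 2^{|\mdp|+1}(|\machine|+1)-1$ on which they disagree, and since $\mathit{eplength} \geq \maxLengthEpisode$, Corollary~\ref{col:attainable-traces} guarantees that $\inputTrace^\star$ is almost surely explored in the limit. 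Because the run continues for infinitely many episodes and the hypothesis stays equal to $\hypothesisRM_{i^\star}$ after step $i^\star$, such an exploration occurs at some episode beyond $i^\star$ and produces the trace $(\inputTrace^\star, \machine(\inputTrace^\star))$, a counterexample to $\hypothesisRM_{i^\star}$.

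The one genuinely new wrinkle compared with Lemma~\ref{lem:RPML-learns-correct-reward-machine}, and the step I expect to be the main obstacle, is the batching delay introduced by Optimization 1: a counterexample is buffered in $\newCounterexamples$ and only triggers re-inference at the next episode index divisible by $N$, and $\newCounterexamples$ is cleared after each inference. I would dispatch this by noting that the counterexample above lands in $\newCounterexamples$ during some batching window; at the end of that window the guard on Line~\ref{algLine:cexFound} finds $\newCounterexamples \neq \emptyset$, so a new hypothesis is inferred on Line~\ref{algLine:infer}, which is distinct from $\hypothesisRM_{i^\star}$ since it must now be consistent with $(\inputTrace^\star, \machine(\inputTrace^\star))$. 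This contradicts the choice of $i^\star$. Hence $\hypothesisRM_{i^\star}(\inputTrace) = \machine(\inputTrace)$ for every attainable label sequence $\inputTrace$, which is exactly the claim.
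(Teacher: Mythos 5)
Your proposal is correct and follows essentially the same route as the paper: the paper's own proof is a one-line sketch asserting that the four observations from the proof of Lemma~\ref{lem:RPML-learns-correct-reward-machine} carry over to the batched sample sequence, and you simply fill in those details (plus the exploration argument for Optimization 2 and the finite batching delay), all of which check out.
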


\begin{proof}
	With Optimizations 1 and 2, let $X_0, X_1, \ldots$ (with slight abuse of notation from the proof of Lemma \ref{lem:RPML-learns-correct-reward-machine}) be the sequence of sets that arise in the run of \algoOptName\ whenever the non-empty set of new counterexamples $\newCounterexamples$ are added to the set $X$. Then, it can be shown that Observation 1, 2, 3 and 4 in the proof of Lemma \ref{lem:RPML-learns-correct-reward-machine} still hold and thus Lemma \ref{lem:RPML-learns-correct-reward-machine-opt} holds.
\end{proof}

With Lemma \ref{lem:RPML-learns-correct-reward-machine-opt} and following the analysis in the proofs of Theorem \ref{thm:convergenceInTheLimit}, Theorem~\ref{thm:convergenceInTheLimit_opt} holds.

\section{Proof of Theorem \ref{th_q}}

To prove Theorem \ref{th_q}, we first recapitulate the definition of $k$-horizon optimal discounted action value functions \cite{Givan2003}.

\begin{definition}
	\label{define_horizon}
	Let $\machine=(\mealyStates, \mealyInit, 2^\rmLabels, \mealyOutputAlphabet, \mealyTransition,  \mealyOutput)$ be a reward machine \implementing\ the rewards of a labeled MDP $\mdp=(\mdpStates, \mdpInit, \mdpActions, \mdpProb, \mdpRewardFunction, \mdpDiscount, \rmLabels, \rmLabelingFunction)$. We define the $k$-horizon optimal discounted action value function $q^{\ast\mealyCommonState}_k(s,a)$ recursively as follows:
	\begin{align}
	\begin{split}
	q^{\ast\mealyCommonState}_k(s, a)=&\sum_{\mealyCommonState'\in \mealyStates}\sum_{s'\in S}T(s, \mealyCommonState, a, s', \mealyCommonState')\\&\times [\mealyOutput(\mealyCommonState, \rmLabelingFunction(s,a,s'))
	+\gamma\max_{a'\in \mdpActions}q^{\ast\mealyCommonState'}_{k-1}(s',a')] \nonumber,
	\end{split}
	\label{q_horizon}
	\end{align}      
	where 
	\[
	T(s, \mealyCommonState, a, s', \mealyCommonState')=\begin{cases}
	p(s, a, s'),  ~~\mbox{if}~\mealyCommonState'=\delta(\mealyCommonState, \rmLabelingFunction(s,a,s'));\nonumber \\ 
	0, ~~~~~~~~~~~~~~~~~\mbox{otherwise},
	\end{cases}
	\]
	and $q^{\ast\mealyCommonState}_0(s, a)=0$ for every $\mealyCommonState\in \mealyStates$, $s\in \mdpStates$ and $a\in \mdpActions$.
\end{definition}

We then give the following lemma based on the equivalent relationship formalized in Definition \ref{def_equi}.
\begin{lemma}
	Let $\machine=(\mealyStates, \mealyInit, 2^\rmLabels, \mealyOutputAlphabet, \mealyTransition,  \mealyOutput)$ and $\hat{\machine}=(\hat{\mealyStates}, \mealyInitHat, 2^\rmLabels, \mealyOutputAlphabet, \hat{\mealyTransition}, \hat{\mealyOutput})$ be two reward machines \implementing\ the rewards of a labeled MDP $\mdp=(\mdpStates, \mdpInit, \mdpActions, \mdpProb, \mdpRewardFunction, \mdpDiscount, \rmLabels, \rmLabelingFunction)$. For state $\mealyCommonState\in \mealyStates$ and state $\hat{\mealyCommonState}\in \hat{\mealyStates}$, if $\mealyCommonState\sim \hat{\mealyCommonState}$, then for every $\mealyCommonInput \in \mealyInputAlphabet$, we have $\mealyTransition(\mealyCommonState, \mealyCommonInput)\sim \hat{\mealyTransition}(\hat{\mealyCommonState}, \mealyCommonInput)$.
	\label{lemma:equivalentSuccessors}
\end{lemma}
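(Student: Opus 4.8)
The plan is to prove the claim by directly unfolding Definition~\ref{def_equi} together with the run semantics of reward machines, treating $\sim$ as a right-invariant (congruence-like) equivalence. Write $\mealyCommonState' = \mealyTransition(\mealyCommonState, \mealyCommonInput)$ and $\hat{\mealyCommonState}' = \hat{\mealyTransition}(\hat{\mealyCommonState}, \mealyCommonInput)$ for the two successor states; the goal is to show $\machine[\mealyCommonState'](\inputTrace') = \hat{\machine}[\hat{\mealyCommonState}'](\inputTrace')$ for every label sequence $\inputTrace'$. The key lemma to isolate first is the observation that reading a single input symbol from a re-rooted machine emits exactly one output symbol and then hands control to the successor state: for any $\inputTrace'$,
\[
\machine[\mealyCommonState](\mealyCommonInput \cdot \inputTrace') = \mealyOutput(\mealyCommonState, \mealyCommonInput)\cdot \machine[\mealyCommonState'](\inputTrace'),
\]
and symmetrically for $\hat{\machine}$. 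This factorization is immediate from the definition of the run of a reward machine, since the run on $\mealyCommonInput \cdot \inputTrace'$ begins at $\mealyCommonState$, outputs $\mealyOutput(\mealyCommonState, \mealyCommonInput)$, transitions to $\mealyCommonState'$, and thereafter coincides with the run of $\machine[\mealyCommonState']$ on $\inputTrace'$.

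With this factorization in hand, the remainder is short. I would fix an arbitrary label sequence $\inputTrace'$ and apply the hypothesis $\mealyCommonState \sim \hat{\mealyCommonState}$ to the \emph{extended} sequence $\mealyCommonInput \cdot \inputTrace'$, which by Definition~\ref{def_equi} gives
\[
\mealyOutput(\mealyCommonState, \mealyCommonInput)\cdot \machine[\mealyCommonState'](\inputTrace') = \hat{\mealyOutput}(\hat{\mealyCommonState}, \mealyCommonInput)\cdot \hat{\machine}[\hat{\mealyCommonState}'](\inputTrace').
\]
Both sides are reward sequences of the same length $1 + |\inputTrace'|$, so they must agree symbol by symbol. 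Comparing the leading symbols yields $\mealyOutput(\mealyCommonState, \mealyCommonInput) = \hat{\mealyOutput}(\hat{\mealyCommonState}, \mealyCommonInput)$, while comparing the length-$|\inputTrace'|$ suffixes yields precisely $\machine[\mealyCommonState'](\inputTrace') = \hat{\machine}[\hat{\mealyCommonState}'](\inputTrace')$. Since $\inputTrace'$ was arbitrary, this is exactly the defining condition for $\mealyCommonState' \sim \hat{\mealyCommonState}'$, which is the claim. (If the empty label sequence is admitted, that case is trivial since both machines output the empty word.)

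I do not expect a genuine obstacle here: the statement is essentially the standard right-invariance of a Myhill--Nerode-style equivalence lifted to Mealy/reward machines, and the only step that warrants care in the formal write-up is the clean decomposition of a run into its first step and its continuation. That factorization is where I would spend most of the formal effort, phrasing it as an explicit one-line consequence of the run definition given in the preliminaries so that the symbol-by-symbol comparison above is unambiguous.
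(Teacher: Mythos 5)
Your proof is correct and follows essentially the same route as the paper: apply the equivalence hypothesis to the extended sequence $\mealyCommonInput \cdot \inputTrace'$ and peel off the first output symbol via the one-step factorization of the run (the paper phrases this through an extended output function $\mealyOutput^+$ defined recursively, which is exactly your factorization lemma). If anything, your explicit symbol-by-symbol comparison of the leading output and the suffix is slightly more careful than the paper's chain of equalities, which silently drops the leading symbol $\mealyOutput(\mealyCommonState,\mealyCommonInput)$ in its first step.
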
  

\begin{proof}       
	For a Mealy machine $\machine =  (\mealyStates, \mealyInit, \mealyInputAlphabet, \mealyOutputAlphabet, \mealyTransition, \mealyOutput)$,
	we extend the output function $\mealyOutput$ to an output function $\mealyOutput^+ \colon \mealyStates \times (\mealyInputAlphabet)^+ \to \mealyOutput^+$ over (nonempty) words: 
	$\mealyOutput^+(\mealyCommonState, \mealyCommonInput) = \mealyOutput(\mealyCommonState, \mealyCommonInput)$ and $\mealyOutput^+(\mealyCommonState, \mealyCommonInput\cdot\inputTrace) = \mealyOutput(\mealyCommonState, \mealyCommonInput) \cdot \mealyOutput^+(\mealyTransition(\mealyCommonState, \mealyCommonInput), \inputTrace)$,  for every $\mealyCommonState \in \mealyStates$, $\mealyCommonInput \in \mealyInputAlphabet$, and $\inputTrace \in (\mealyInputAlphabet)^+$,  where we use $\cdot$ to denote concatenation.
	
	For two Mealy machines $\machine= (\mealyStates, \mealyInit, \mealyInputAlphabet, \mealyOutputAlphabet, \mealyTransition, \mealyOutput)$, and $\hat{\machine} = (\hat{\mealyStates}, \mealyInitHat, \mealyInputAlphabet, \mealyOutputAlphabet,\hat{\mealyTransition}, \hat{\mealyOutput})$ (over the same input and output alphabet), two states $\mealyCommonState \in \mealyStates, \hat{\mealyCommonState} \in \hat{\mealyStates}$ and any label sequence $\inputTrace$, we have $\machine[\mealyCommonState](\inputTrace) = \hat{\machine}[\hat{\mealyCommonState}](\inputTrace)$, if and only if $\mealyOutput^+(\mealyCommonState, \inputTrace) = \hat{\mealyOutput}^+(\hat{\mealyCommonState}, \inputTrace)$. Therefore, from Definition \ref{def_equi} we have $\mealyCommonState \sim \hat{\mealyCommonState}$ if and only if $\mealyOutput^+(\mealyCommonState, \inputTrace) = \hat{\mealyOutput}^+(\hat{\mealyCommonState}, \inputTrace)$ for all $\inputTrace \in (\mealyInputAlphabet)^+$.
	
	Thus, we have
	\begin{align}
	\mealyOutput^{+}(\mealyTransition(\mealyCommonState, \mealyCommonInput), \inputTrace) &= \mealyOutput^{+}(\mealyCommonState, \mealyCommonInput \cdot \inputTrace) \nonumber \\
	&\stackrel{\text{(a)}}{=} \hat{\mealyOutput}^{+}(\hat{\mealyCommonState}, \mealyCommonInput \cdot \inputTrace)  \\
	&= \hat{\mealyOutput}^{+}(\hat{\mealyTransition}(\hat{\mealyCommonState}, \mealyCommonInput), \inputTrace), \nonumber
	\end{align}
	where (a) follows from the equivalence of $\mealyCommonState$ and $\hat{\mealyCommonState}$. Therefore, for every $\mealyCommonInput \in \mealyInputAlphabet$, we have $\machine[\mealyTransition(\mealyCommonState, \mealyCommonInput)](\inputTrace) = \hat{\machine}[\hat{\mealyTransition}(\hat{\mealyCommonState}, \mealyCommonInput)](\inputTrace)$ holds for all label sequences $\inputTrace$. 
	
\end{proof}    

With Definition \ref{define_horizon} and Lemma \ref{lemma:equivalentSuccessors}, we proceed to prove that for two equivalent states, the corresponding $k$-horizon optimal discounted action value functions are the same (as formalized in the following lemma).

\begin{lemma}
	Let $\machine=(\mealyStates, \mealyInit, 2^\rmLabels, \mealyOutputAlphabet, \mealyTransition,  \mealyOutput)$ and $\hat{\machine}=(\hat{\mealyStates}, \mealyInitHat, 2^\rmLabels, \mealyOutputAlphabet, \hat{\mealyTransition}, \hat{\mealyOutput})$ be two reward machines \implementing\ the rewards of a labeled MDP $\mdp=(\mdpStates, \mdpInit, \mdpActions, \mdpProb, \mdpRewardFunction, \mdpDiscount, \rmLabels, \rmLabelingFunction)$. For states $\mealyCommonState\in\mealyStates$ and $\hat{\mealyCommonState}\in\hat{\mealyStates}$, if $\mealyCommonState\sim\hat{\mealyCommonState}$, then for every $s\in \mdpStates$ and $a \in \mdpActions$, $q^{\ast\mealyCommonState}_k(s,a)=q^{\ast\hat{\mealyCommonState}}_k(s,a)$ for every $k$.
	\label{lemma_q} 
\end{lemma}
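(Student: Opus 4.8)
The plan is to prove the statement by induction on the horizon $k$, relying on the recursive definition of the $k$-horizon optimal action value functions (Definition~\ref{define_horizon}) together with the fact, established in Lemma~\ref{lemma:equivalentSuccessors}, that equivalence of states is preserved under reading a single input symbol. For the base case $k = 0$, both $q^{\ast\mealyCommonState}_0(s,a)$ and $q^{\ast\hat{\mealyCommonState}}_0(s,a)$ are defined to be $0$, so equality is immediate (and does not even require the hypothesis $\mealyCommonState \sim \hat{\mealyCommonState}$).

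For the inductive step I would first simplify the recursion. Since each reward machine is deterministic, the factor $T(s, \mealyCommonState, a, s', \mealyCommonState')$ is nonzero only for the unique successor $\mealyCommonState' = \mealyTransition(\mealyCommonState, \rmLabelingFunction(s,a,s'))$, so the double sum over $(\mealyCommonState', s')$ collapses to a single sum over $s'$ weighted by $p(s,a,s')$, yielding
\begin{align*}
q^{\ast\mealyCommonState}_k(s,a) = \sum_{s' \in \mdpStates} p(s,a,s') \Bigl[ \mealyOutput\bigl(\mealyCommonState, \rmLabelingFunction(s,a,s')\bigr) + \mdpDiscount \max_{a' \in \mdpActions} q^{\ast \mealyTransition(\mealyCommonState, \rmLabelingFunction(s,a,s'))}_{k-1}(s',a') \Bigr],
\end{align*}
and the analogous expression for $\hat{\machine}$ with $\hat{\mealyOutput}$ and $\hat{\mealyTransition}$. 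To equate the two term by term I would use two consequences of $\mealyCommonState \sim \hat{\mealyCommonState}$: instantiating Definition~\ref{def_equi} with a length-one label sequence gives $\mealyOutput(\mealyCommonState, \mealyCommonInput) = \hat{\mealyOutput}(\hat{\mealyCommonState}, \mealyCommonInput)$ for every input $\mealyCommonInput$, so the immediate-reward summands match; and Lemma~\ref{lemma:equivalentSuccessors} gives $\mealyTransition(\mealyCommonState, \mealyCommonInput) \sim \hat{\mealyTransition}(\hat{\mealyCommonState}, \mealyCommonInput)$, so the successor states reached under any label $\rmLabelingFunction(s,a,s')$ are equivalent. Applying the induction hypothesis to these equivalent successors yields $q^{\ast \mealyTransition(\mealyCommonState, \mealyCommonInput)}_{k-1}(s',a') = q^{\ast \hat{\mealyTransition}(\hat{\mealyCommonState}, \mealyCommonInput)}_{k-1}(s',a')$ for all $s', a'$, hence the two maxima agree. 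Summand by summand the recursions coincide, completing the step.

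The only delicate point—more bookkeeping than genuine difficulty—is to quantify the induction hypothesis over \emph{all} equivalent pairs of states and over all $(s',a')$, rather than over the single fixed pair $(\mealyCommonState, \hat{\mealyCommonState})$. The successors $\mealyTransition(\mealyCommonState, \mealyCommonInput)$ and $\hat{\mealyTransition}(\hat{\mealyCommonState}, \mealyCommonInput)$ are in general different states from the ones we started with, so the claim being inducted on must read: for every pair of equivalent states and every $s,a$, the $k$-horizon values coincide. With this strengthening built into the statement, and with the collapse of the $\mealyStates$-sum justified by determinism, the argument goes through cleanly.
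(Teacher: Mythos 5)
Your proposal is correct and follows essentially the same route as the paper's proof: induction on the horizon $k$, collapsing the sum over successor machine states via determinism, matching immediate rewards through the equivalence $\mealyCommonState\sim\hat{\mealyCommonState}$, and invoking Lemma~\ref{lemma:equivalentSuccessors} together with an induction hypothesis quantified over all equivalent pairs to match the continuation values. The only cosmetic difference is that you anchor the induction at $k=0$ (where both values are $0$ by definition) while the paper starts at $k=1$; both are valid given Definition~\ref{define_horizon}.
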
                

\begin{proof}
We use induction to prove Lemma \ref{lemma_q}. For $k=1$, we have for every $s\in \mdpStates$ and $a\in \mdpActions$,
\begin{align}
\begin{split}
q^{\ast\mealyCommonState}_1(s, a)&= \sum_{\mealyCommonState'\in \mealyStates}\sum_{s'\in S} T(s, \mealyCommonState, a, s', \mealyCommonState')\mealyOutput(\mealyCommonState, \rmLabelingFunction(s,a,s'))\\
&=\sum_{s'\in S} T(s, \mealyCommonState, a, s', \delta(\mealyCommonState, \rmLabelingFunction(s,a,s')))\mealyOutput(\mealyCommonState, \rmLabelingFunction(s,a,s'))  \\
&=\sum_{s'\in S} p(s, a, s')\mealyOutput(\mealyCommonState, \rmLabelingFunction(s,a,s'))  \\
&\stackrel{\text{(b)}}{=} \sum_{s'\in S} p(s, a, s')\hat{\mealyOutput}(\hat{\mealyCommonState}, \rmLabelingFunction(s,a,s'))  \\
&=\sum_{s'\in S} \hat{T}(s, \hat{\mealyCommonState}, a, s', \hat{\delta}(\hat{\mealyCommonState}, \rmLabelingFunction(s,a,s')))\hat{\mealyOutput}(\hat{\mealyCommonState}, \rmLabelingFunction(s,a,s'))\\
&=\sum_{\hat{\mealyCommonState}'\in\hat{\mealyStates}}\sum_{s'\in S} \hat{T}(s, \hat{\mealyCommonState}, a, s', \hat{\mealyCommonState}')\hat{\mealyOutput}(\hat{\mealyCommonState}, \rmLabelingFunction(s,a,s'))  \\
&= q^{\ast\hat{\mealyCommonState}}_1(s, a) \nonumber
\end{split}
\end{align}
where the equality (b) comes from the fact that $\mealyCommonState \sim \hat{\mealyCommonState}$, and
\[
\hat{T}(s, \hat{\mealyCommonState}, a, s', \hat{\mealyCommonState}')=\begin{cases}
p(s, a, s'),  ~~\mbox{if}~\hat{\mealyCommonState}'=\hat{\delta}(\hat{\mealyCommonState}, \rmLabelingFunction(s,a,s'));\nonumber \\ 
0, ~~~~~~~~~~~~~~~~~\mbox{otherwise}.
\end{cases}
\]


Now we assume that for every state $\mealyCommonState\in\dfaStates$ and state $\hat{\mealyCommonState}\in \hat{\mealyStates}$, if $\mealyCommonState\sim \hat{\mealyCommonState}$, then we have that $q^{\ast\mealyCommonState}_{k-1}(s, a)=q^{\ast\hat{\mealyCommonState}}_{k-1}(s, a)$ holds for every $s\in S$ and every $a\in A$. We proceed to prove that for every state $\mealyCommonState\in\mealyStates$ and state $\hat{\mealyCommonState}\in \hat{\mealyStates}$, if $\mealyCommonState\sim \hat{\mealyCommonState}$, then we have that $q^{\ast \mealyCommonState}_{k}(s, a)=q^{\ast\hat{\mealyCommonState}}_{k}(s, a)$ holds for every $s\in S$ and every $a\in A$.

For every $s\in \mdpStates$ and every $a\in \mdpActions$, we have
\begin{align}
\begin{split} 
q^{\ast\mealyCommonState}_k(s, a) &= \sum_{\mealyCommonState'\in \mealyStates}\sum_{s'\in S}T(s, \mealyCommonState, a, s', \mealyCommonState') \\ &\times [\mealyOutput(\mealyCommonState, \rmLabelingFunction(s,a,s'))+\gamma\max_{a'\in \mdpActions}q^{\ast\mealyCommonState'}_{k-1}(s',a')]\\
&= \sum_{s'\in S}T(s, \mealyCommonState, a, s', \delta(\mealyCommonState, \rmLabelingFunction(s,a,s'))) \\ &\times [\mealyOutput(\mealyCommonState, \rmLabelingFunction(s,a,s'))+\gamma\max_{a'\in \mdpActions}q^{\ast\delta(\mealyCommonState, \rmLabelingFunction(s,a,s'))}_{k-1}(s',a')]\\
&= \sum_{s'\in S}p(s, a, s') \\ & \times[\mealyOutput(\mealyCommonState, \rmLabelingFunction(s,a,s'))+\gamma\max_{a'\in \mdpActions}q^{\ast\delta(\mealyCommonState, \rmLabelingFunction(s,a,s'))}_{k-1}(s',a')]\\
&\stackrel{\text{(c)}}{=} \sum_{s'\in S}p(s, a, s') \\ & \times[\hat{\mealyOutput}(\hat{\mealyCommonState}, \rmLabelingFunction(s,a,s'))+\gamma\max_{a'\in \mdpActions}q^{\ast\hat{\delta}(\hat{\mealyCommonState}, \rmLabelingFunction(s,a,s'))}_{k-1}(s',a')]\\
&= \sum_{s'\in S}\hat{T}(s, \hat{\mealyCommonState}, a, s', \hat{\delta}(\hat{\mealyCommonState}, \rmLabelingFunction(s,a,s'))) \\& \times [\mealyOutput(\hat{\mealyCommonState}, \rmLabelingFunction(s,a,s'))+\gamma\max_{a'\in \mdpActions}q^{\ast\hat{\delta}(\hat{\mealyCommonState}, \rmLabelingFunction(s,a,s'))}_{k-1}(s',a')] \\
&= \sum_{\hat{\mealyCommonState}'\in\hat{\mealyStates}}\sum_{s'\in S}\hat{T}(s, \hat{\mealyCommonState}, a, s', \hat{\mealyCommonState}') \\& \times [\mealyOutput(\hat{\mealyCommonState}, \rmLabelingFunction(s,a,s'))+\gamma\max_{a'\in \mdpActions}q^{\ast\hat{\delta}(\hat{\mealyCommonState}, \rmLabelingFunction(s,a,s'))}_{k-1}(s',a')]
\end{split}\\                                          
&= q^{\ast\hat{\mealyCommonState}}_k(s, a) \nonumber                                                
\end{align}    
where the equality (c) comes from the fact that $\mealyCommonState \sim \hat{\mealyCommonState}$ and $\delta(\mealyCommonState, \rmLabelingFunction(s,a,s'))\sim\hat{\delta}(\hat{\mealyCommonState}, \rmLabelingFunction(s,a,s'))$ (according to Lemma \ref{lemma:equivalentSuccessors}).

Therefore, it is proven by induction that Lemma \ref{lemma_q} holds.


\end{proof}


With Lemma \ref{lemma_q}, we now proceed to prove Theorem \ref{th_q}. According to Lemma \ref{lemma_q}, if $\mealyCommonState\sim\hat{\mealyCommonState}$, then for every $s\in \mdpStates$ and $a\in \mdpActions$, $q^{\ast\mealyCommonState}_k(s,a)=q^{\ast\hat{\mealyCommonState}}_k(s,a)$ for every $k$. When $k\rightarrow\infty$, $q^{\ast}_k(\mealyCommonState,s,a)$ and $q^{\ast}_k(\hat{\mealyCommonState},s,a)$ converge to the fixed points as follows. 
\begin{align}
\begin{split}
q^{\ast\mealyCommonState}(s, a)=&\sum_{\mealyCommonState'\in \mealyStates}\sum_{s'\in S}T(s, \mealyCommonState, a, s', \mealyCommonState')\\&\times [\mealyOutput(\mealyCommonState, \rmLabelingFunction(s,a,s'))
+\gamma\max_{a'\in \mdpActions}q^{\ast\mealyCommonState'}(s',a')], \nonumber \\
q^{\ast\hat{\mealyCommonState}}(s, a)=&\sum_{\hat{\mealyCommonState}'\in\hat{\mealyStates}}\sum_{s'\in \mdpStates}T(s, \hat{\mealyCommonState}, a, s', \hat{\mealyCommonState}')\\&\times [\hat{\mealyOutput}(\hat{\mealyCommonState}, \rmLabelingFunction(s,a,s'))
+\gamma\max_{a'\in \mdpActions}q^{\ast\hat{\mealyCommonState}'}(s', a')].
\end{split}
\label{q_fixed}
\end{align}    

Therefore, for every $s\in \mdpStates$ and $a\in \mdpActions$, $q^{\ast\mealyCommonState}(s,a)=q^{\ast\hat{\mealyCommonState}}(s,a)$.

\iftoggle{long}{  
\section{Details in \Traffic}
\label{sec:app_traffic}
We provide the detailed results in \traffic. Figure \ref{fig:ex:road-map-grid} shows the gridded map of the roads in a residential area. The set of actions is $\mdpActions = \set{Straight, Left, Right, Stay}$, corresponding to going straight, turning left, turning right and staying in place. We use a simplified version of transitions at the intersections. For example, at (1,7), if the vehicle stays, then it ends at (1, 7); if the vehicle goes straight, then it ends at (4, 7) at the next step; if 
the vehicle turns left, then it ends at (3, 9) at the next step; and if the vehicle turns right, then it ends at (2, 6) at the next step. The vehicle will make a U-turn if it reaches the end of any road. For example, if the vehicle reaches (10, 7), then it will reach (10, 8) at the next step.

\begin{figure}[th]
	\centering                              
	\includegraphics[width=10cm]{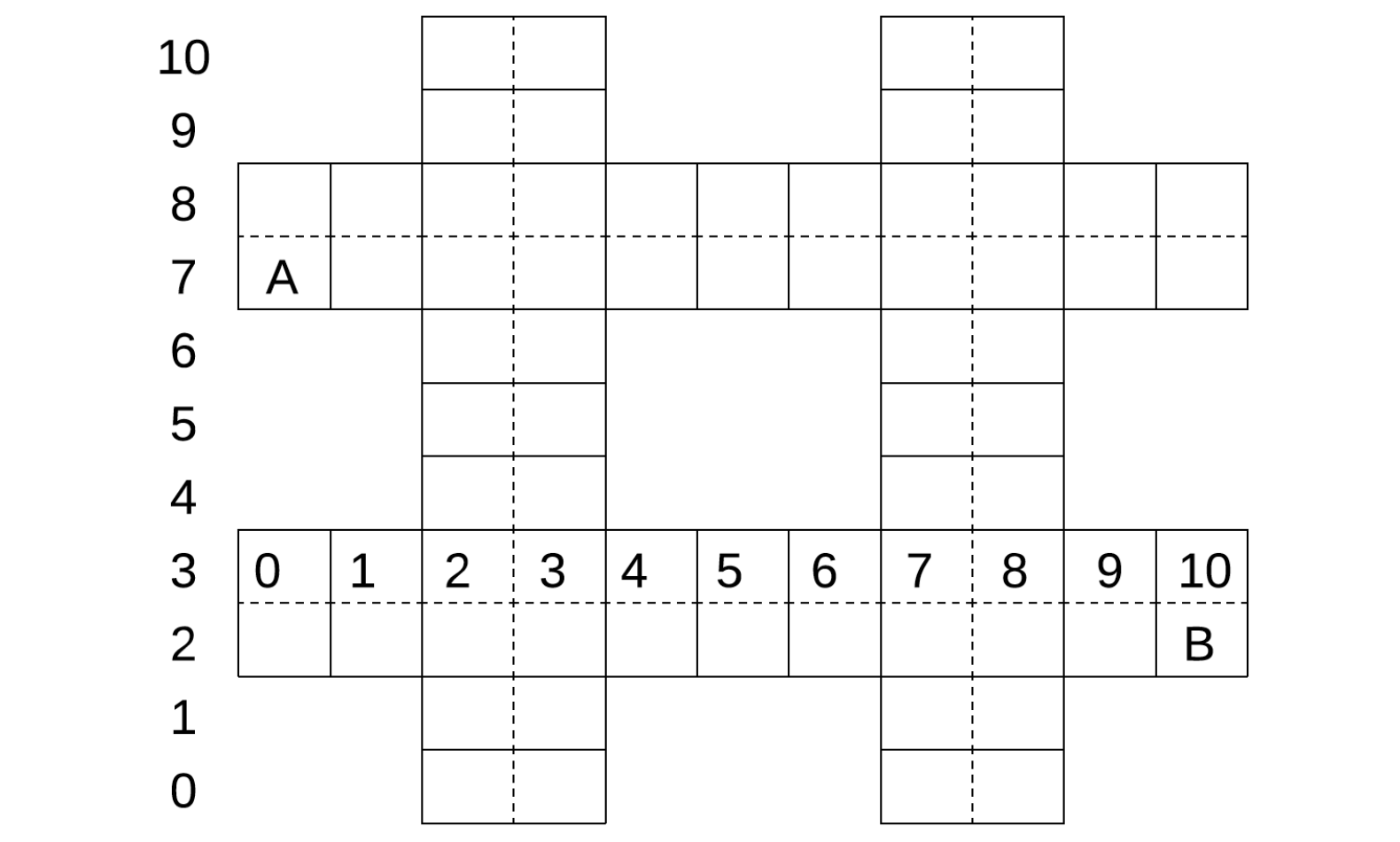}                                         
	\caption{Gridded map of a residential area in \traffic.} \label{fig:ex:road-map-grid}
\end{figure}   

Figure \ref{case0} shows the cumulative rewards of 10 independent simulation runs averaged for every 10 training steps for the \traffic. 

\begin{figure*}[t]
	\centering
	\begin{subfigure}[b]{0.3\textwidth}  
		\centering
		\includegraphics[width=\textwidth]{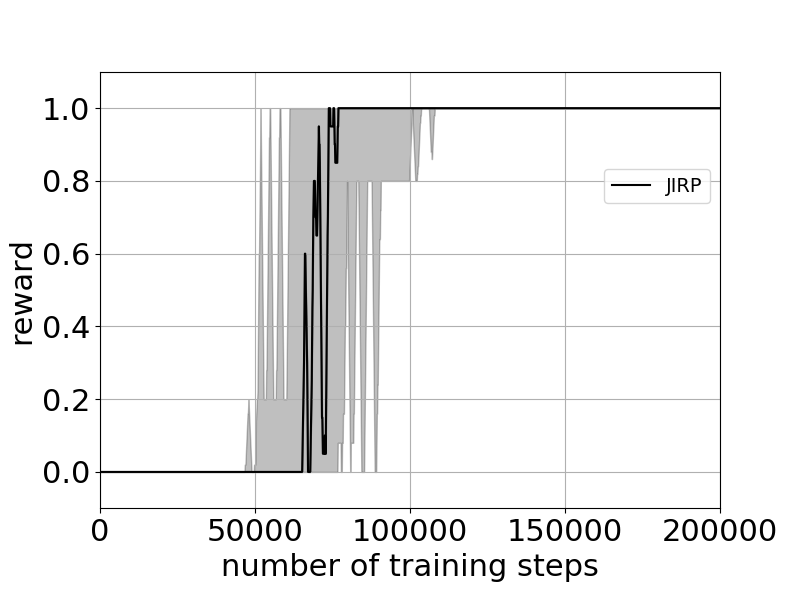}
		\caption{}
	\end{subfigure}
	\begin{subfigure}[b]{0.3\textwidth}
		\centering
		\includegraphics[width=\textwidth]{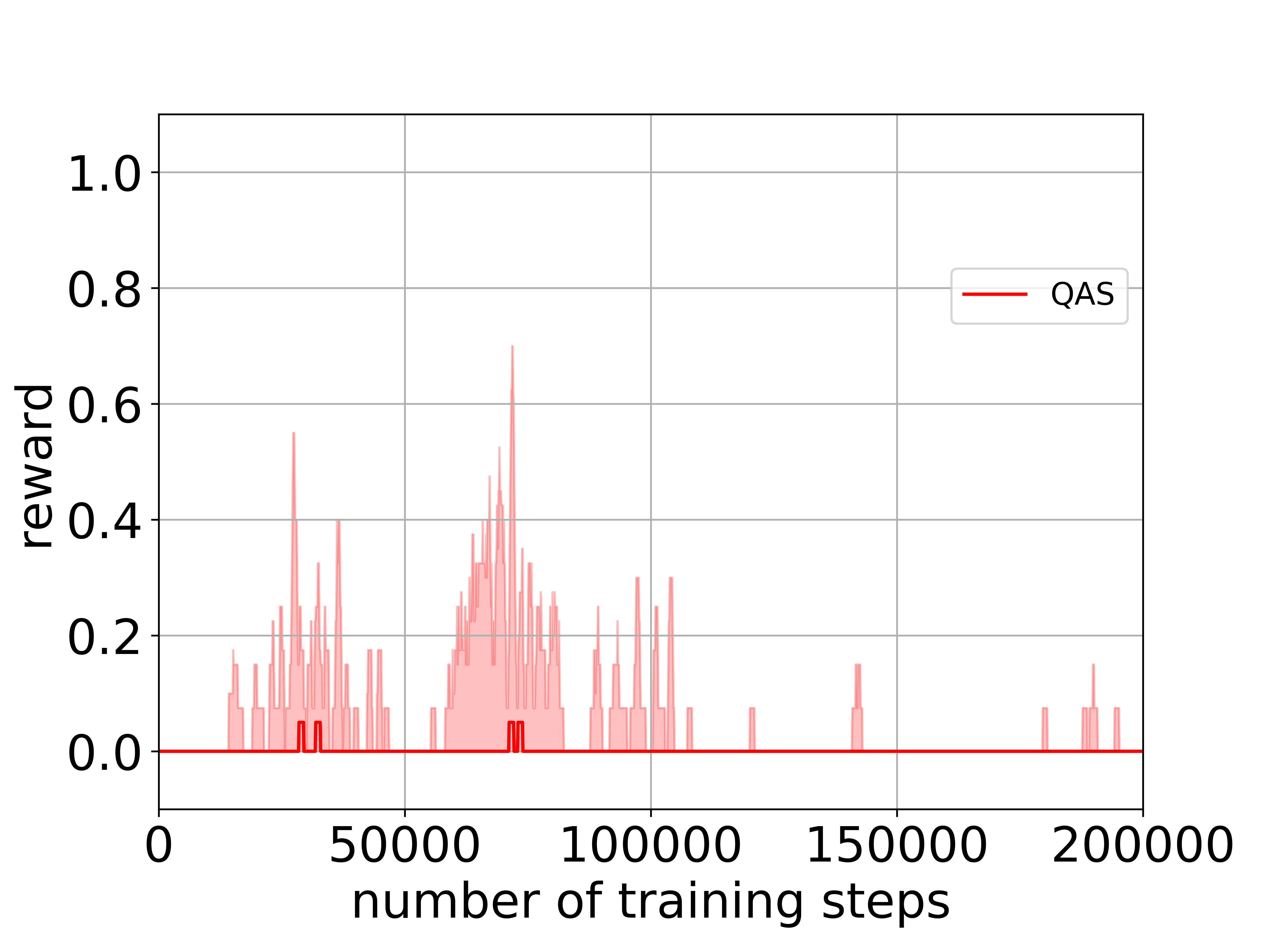}
		\caption{}
	\end{subfigure}
	\begin{subfigure}[b]{0.3\textwidth}
		\centering
		\includegraphics[width=\textwidth]{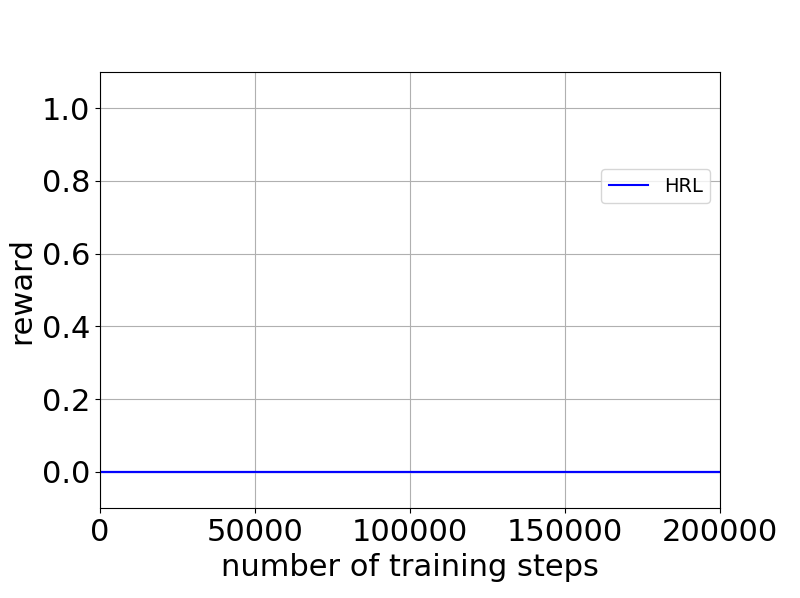}
		\caption{}
	\end{subfigure}
	\caption{Cumulative rewards of 10 independent simulation runs averaged for every 10 training steps in the \traffic: (a) \methodA; (b) \methodB; (c) \methodC.}  
	\label{case0}
\end{figure*}

\section{Details in \Office}
\label{sec:app_office}
We provide the detailed results in the \office. Figure \ref{office_map} shows the map in the \office. We use the triangle to denote the initial position of the agent.
We consider the following four tasks:\\
\textbf{\OfficeA}: get coffee at $\textrm{c}$ and deliver the coffee to the office $\textrm{o}$;\\
\textbf{\OfficeB}: get mail at $\textrm{m}$ and deliver the coffee to the office $\textrm{o}$;\\
\textbf{\OfficeC}: go to the office $\textrm{o}$, then get coffee at $\textrm{c}$ and go back to the office $\textrm{o}$, finally go to mail at $\textrm{m}$; \\
\textbf{\OfficeD}: get coffee at $\textrm{c}$ and deliver the coffee to the office $\textrm{o}$, then come to get coffee at $\textrm{c}$ and deliver the coffee to the frontdesk $\textrm{d}$.
\begin{figure}[t]
	\centering
	\includegraphics[width=0.45\textwidth]{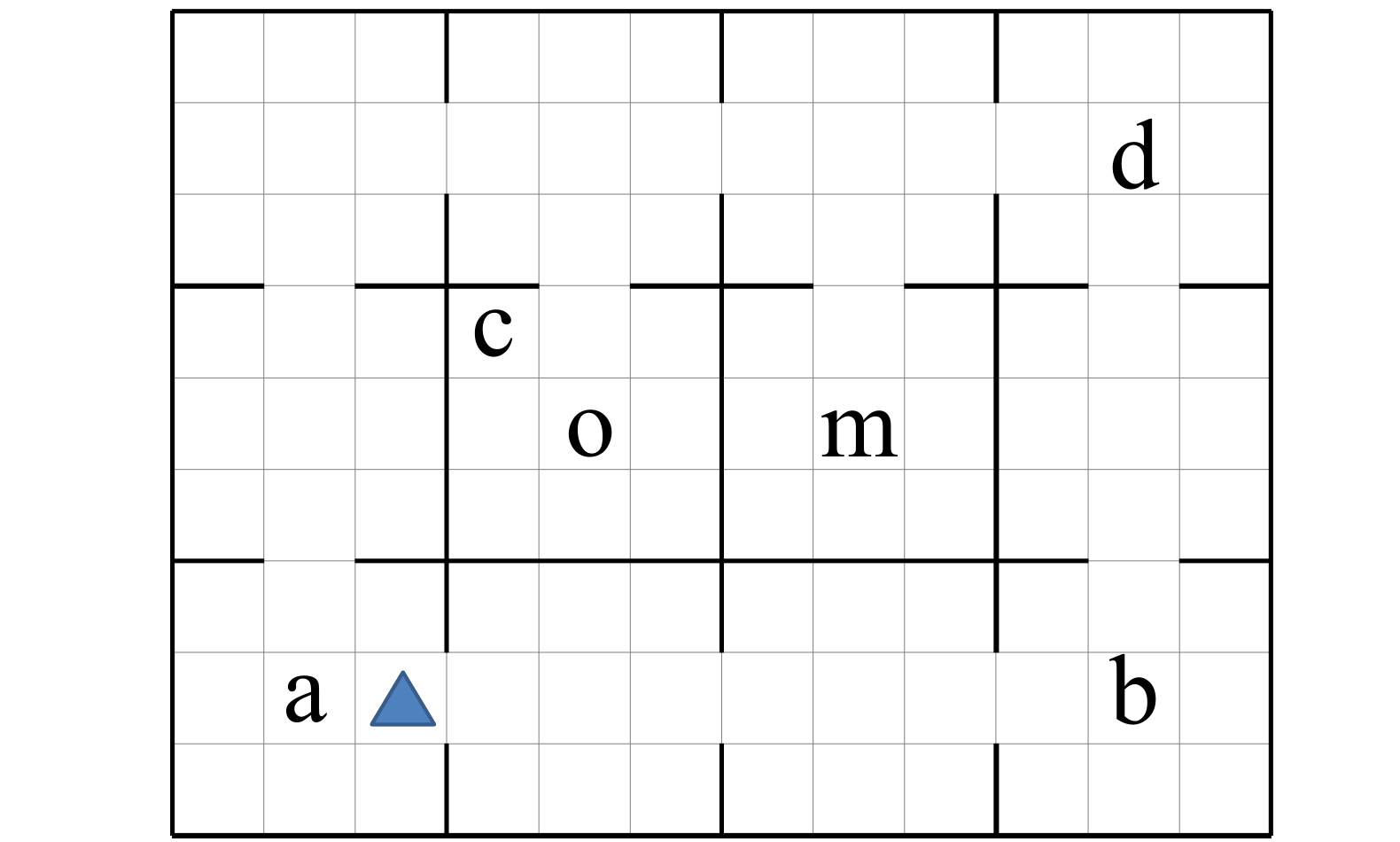}
	\caption{The map in the \office.}
	\label{office_map}
\end{figure}

\subsection{\OfficeA}
For \officeA, Figure \ref{rm_c1_t1} shows the inferred hypothesis reward machine in the last iteration of \algoName{}. Figure \ref{case1_task1} shows the cumulative rewards of 10 independent simulation runs averaged for every 10 training steps for \officeA. 

\begin{figure}[H]
	\centering
	\begin{tikzpicture}[shorten >=1pt,node distance=2cm,on grid,auto] 
	\node[state,initial] (q_0)   {$\mealyCommonState_0$}; 
	\node[state] (q_1) [right=of q_0] {$\mealyCommonState_1$}; 
	\node[state,accepting] (q_2) [right=of q_1] {$\mealyCommonState_2$}; 
	\path[->] 
	(q_0) edge  node {($\textrm{c}$, 0)} (q_1)
	edge [loop above] node {($\lnot \textrm{c}$, 0)} ()
	(q_1) edge  node  {($\textrm{o}$, 1)} (q_2)
	edge [loop above] node {($\lnot \textrm{o}$, 0)} ();
	\end{tikzpicture}
	\caption{The inferred hypothesis reward machine for \officeA~ in the last iteration of \algoName{}.}  
	\label{rm_c1_t1}
\end{figure}
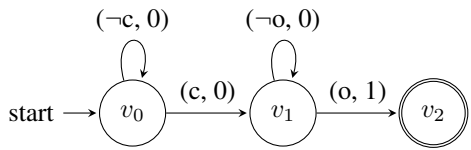

\begin{figure*}[t]
	\centering
	\begin{subfigure}[b]{0.3\textwidth}  
		\centering
		\includegraphics[width=\textwidth]{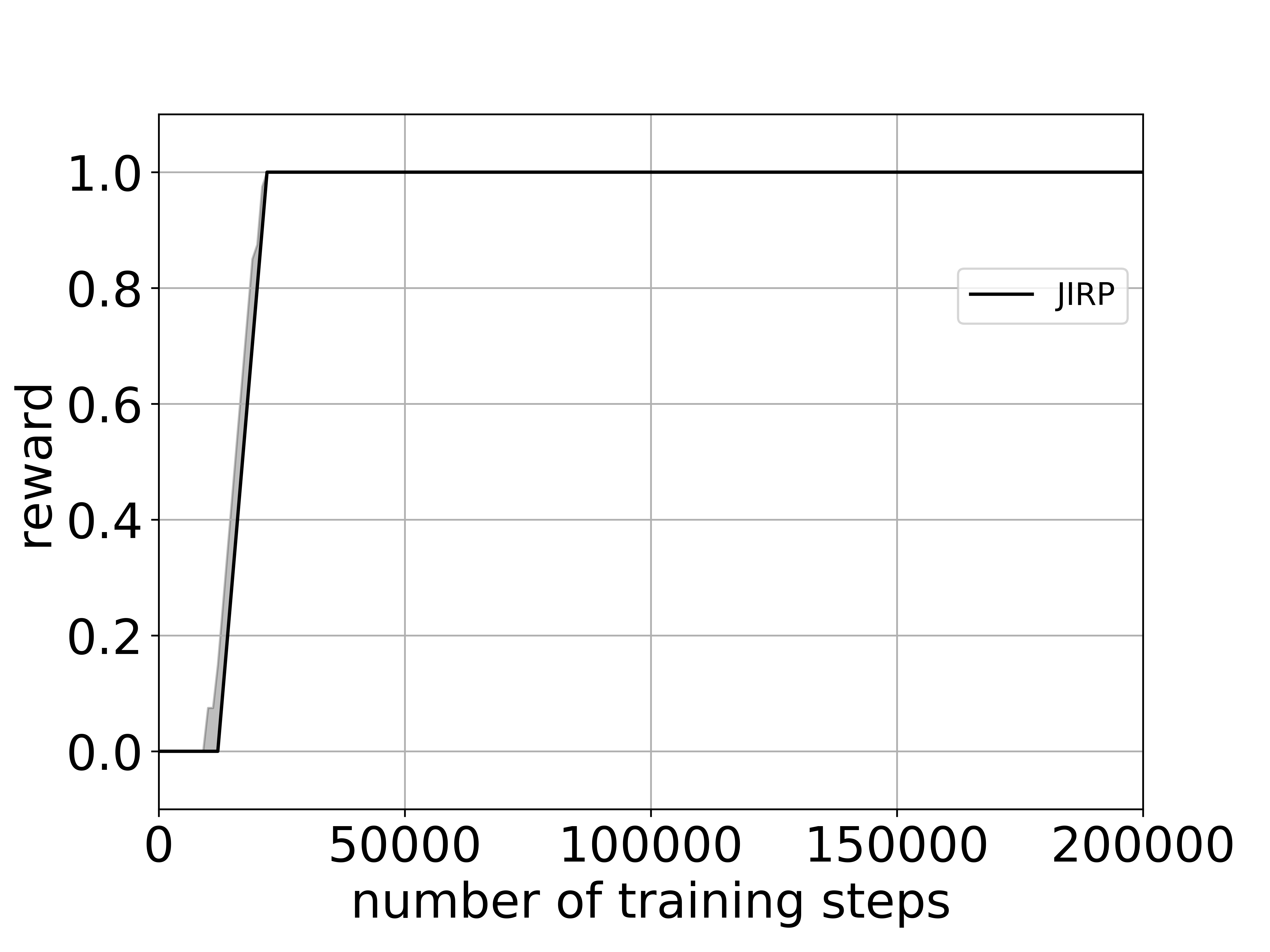}
		\caption{}
	\end{subfigure}
	\begin{subfigure}[b]{0.3\textwidth}
		\centering
		\includegraphics[width=\textwidth]{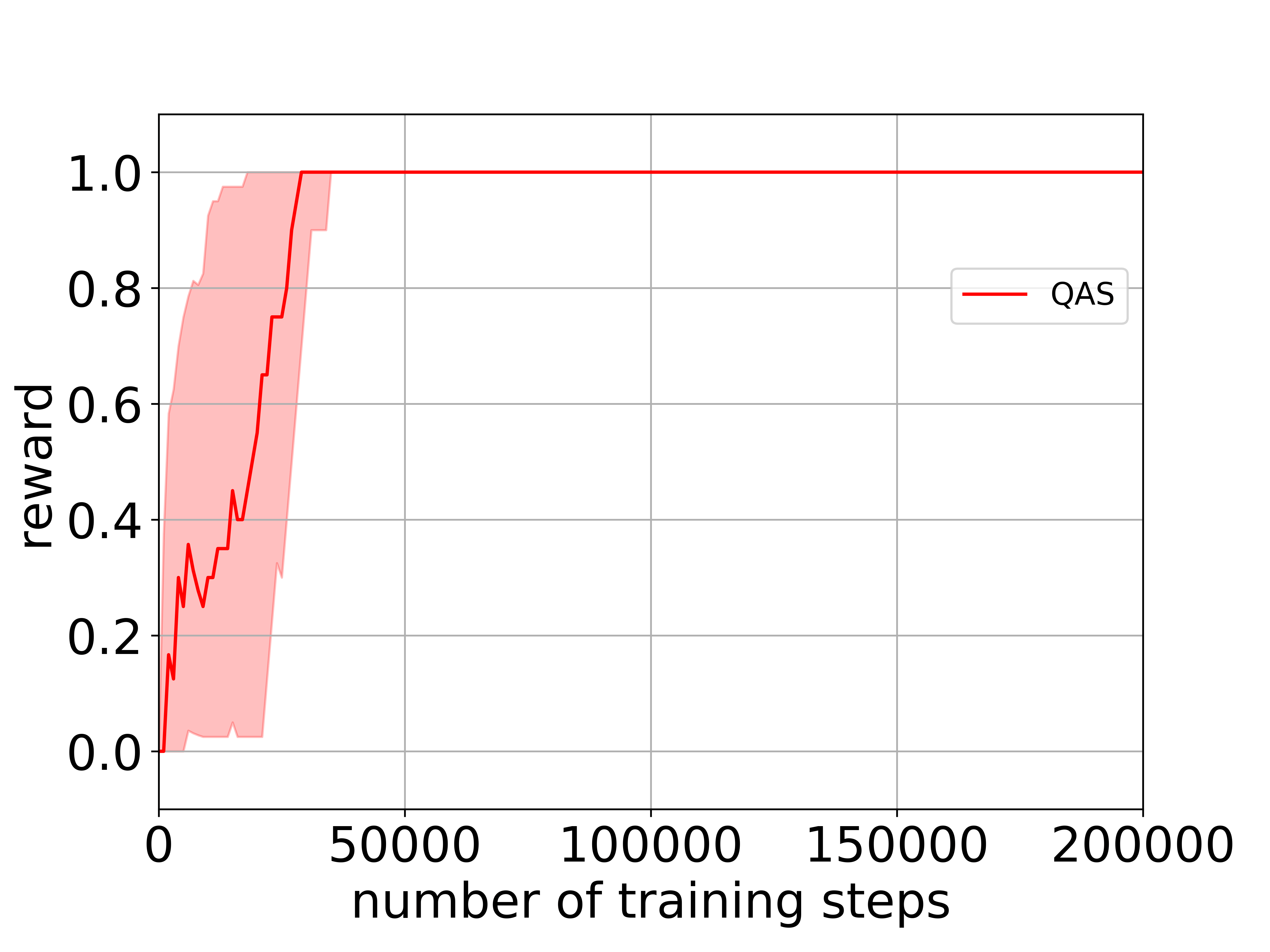}                                         
		\caption{}
	\end{subfigure}
	\begin{subfigure}[b]{0.3\textwidth}
		\centering
		\includegraphics[width=\textwidth]{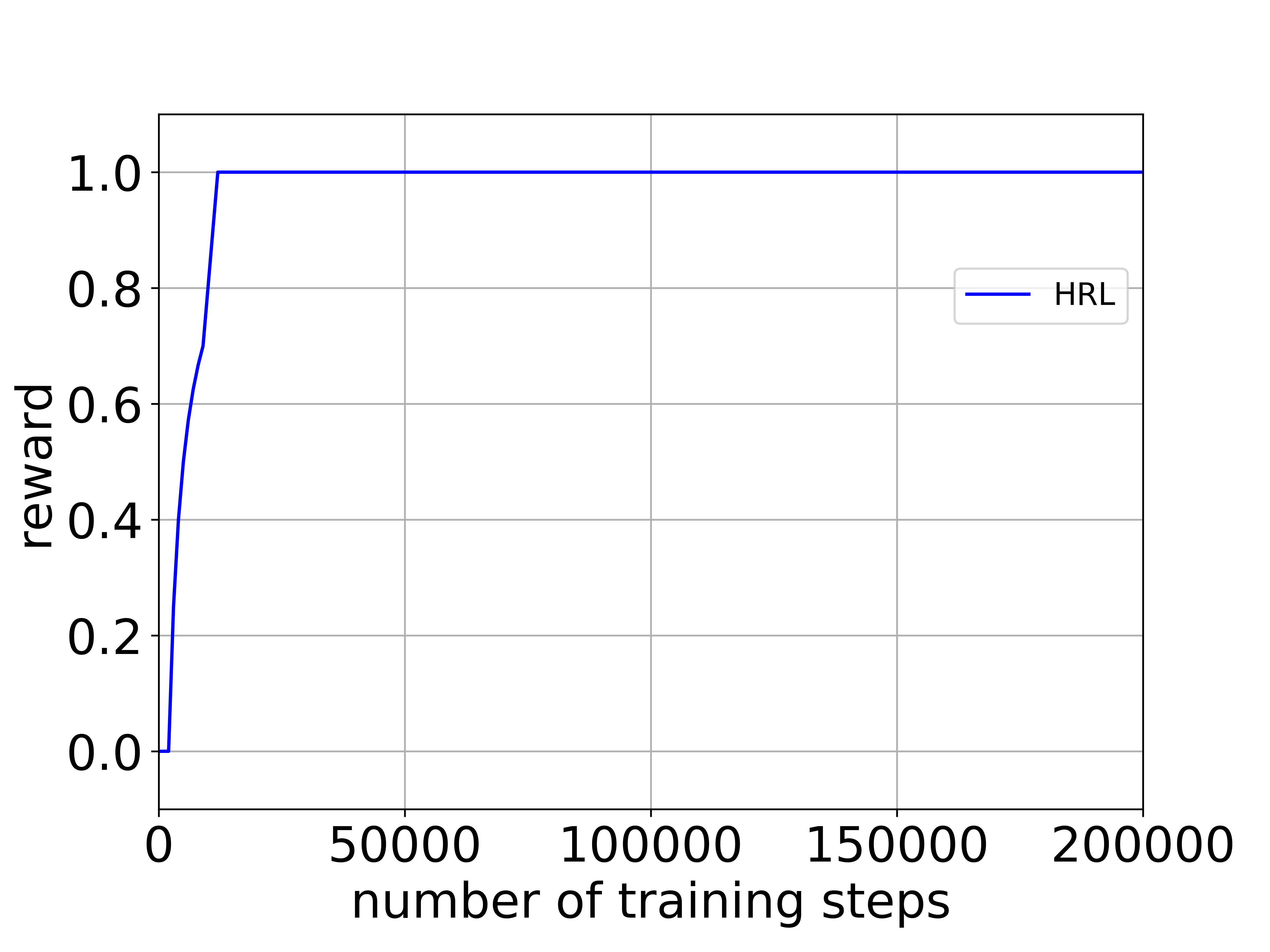}
		\caption{}
	\end{subfigure}
	\caption{Cumulative rewards of 10 independent simulation runs averaged for every 10 training steps for \officeA ~in the \office: (a) \methodA; (b) \methodB; (c) \methodC.}  
	\label{case1_task1}
\end{figure*}

\subsection{\OfficeB}
For \officeB, Figure \ref{rm_c1_t2} shows the inferred hypothesis reward machine in the last iteration of \algoName{}. Figure \ref{case1_task2} shows the cumulative rewards of 10 independent simulation runs averaged for every 10 training steps for \officeB.

\begin{figure}[H]
	\centering
	\begin{tikzpicture}[shorten >=1pt,node distance=2cm,on grid,auto] 
	\node[state,initial] (q_0)   {$\mealyCommonState_0$}; 
	\node[state] (q_1) [right=of q_0] {$\mealyCommonState_1$}; 
	\node[state,accepting] (q_2) [right=of q_1] {$\mealyCommonState_2$}; 
	\path[->] 
	(q_0) edge  node {($\textrm{m}$, 0)} (q_1)
	edge [loop above] node {($\lnot \textrm{m}$, 0)} ()
	(q_1) edge  node  {($\textrm{o}$, 1)} (q_2)
	edge [loop above] node {($\lnot \textrm{o}$, 0)} ();
	\end{tikzpicture}
	\caption{The inferred hypothesis reward machine for \officeB~ in the last iteration of \algoName{}.}  
	\label{rm_c1_t2}
\end{figure}
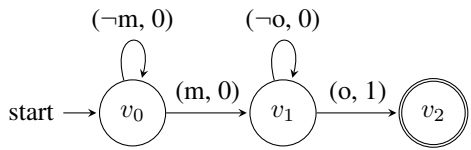

\begin{figure*}[t]
	\centering
	\begin{subfigure}[b]{0.3\textwidth}  
		\centering
		\includegraphics[width=\textwidth]{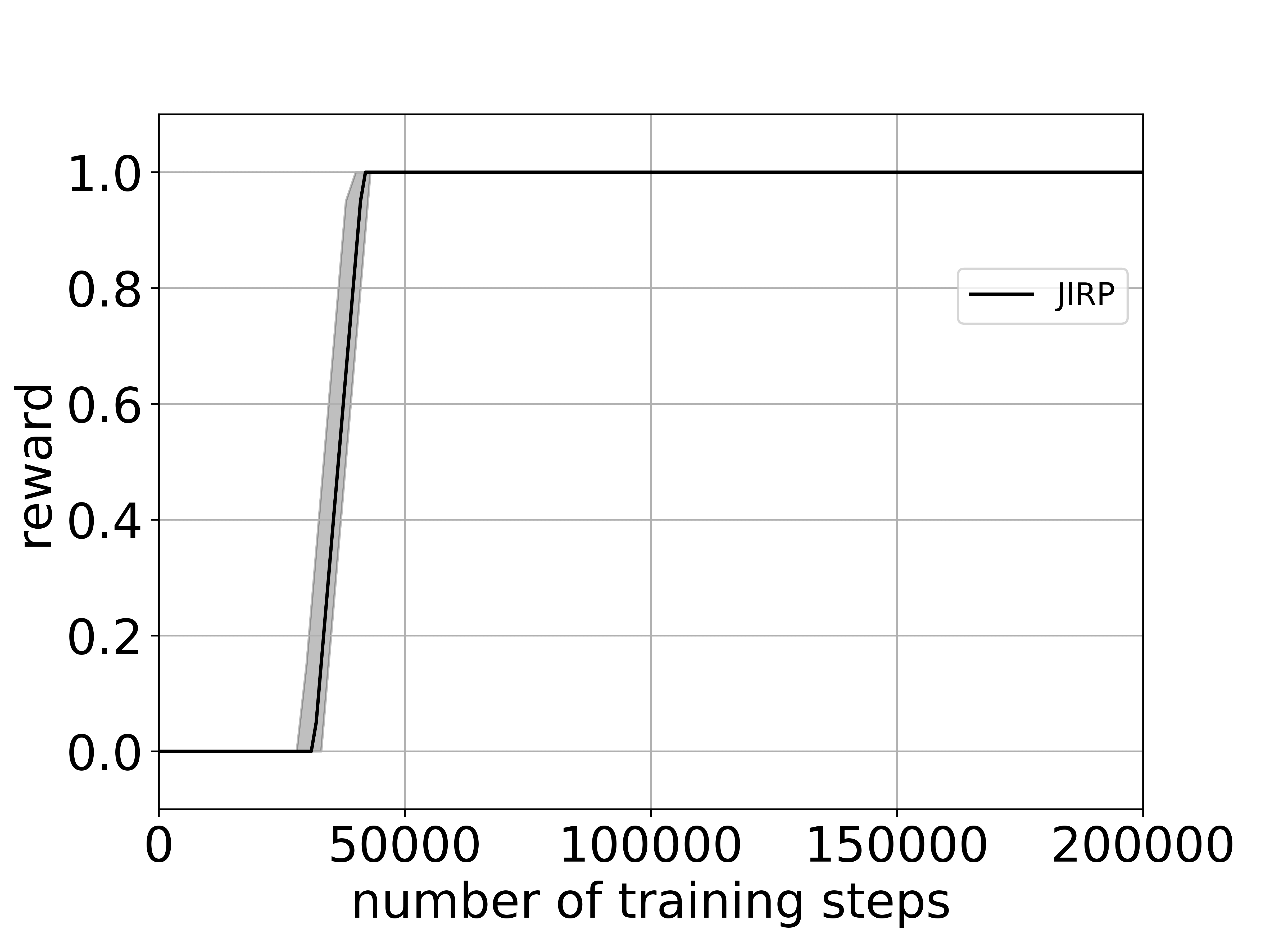}
		\caption{}
	\end{subfigure}
	\begin{subfigure}[b]{0.3\textwidth}
		\centering
		\includegraphics[width=\textwidth]{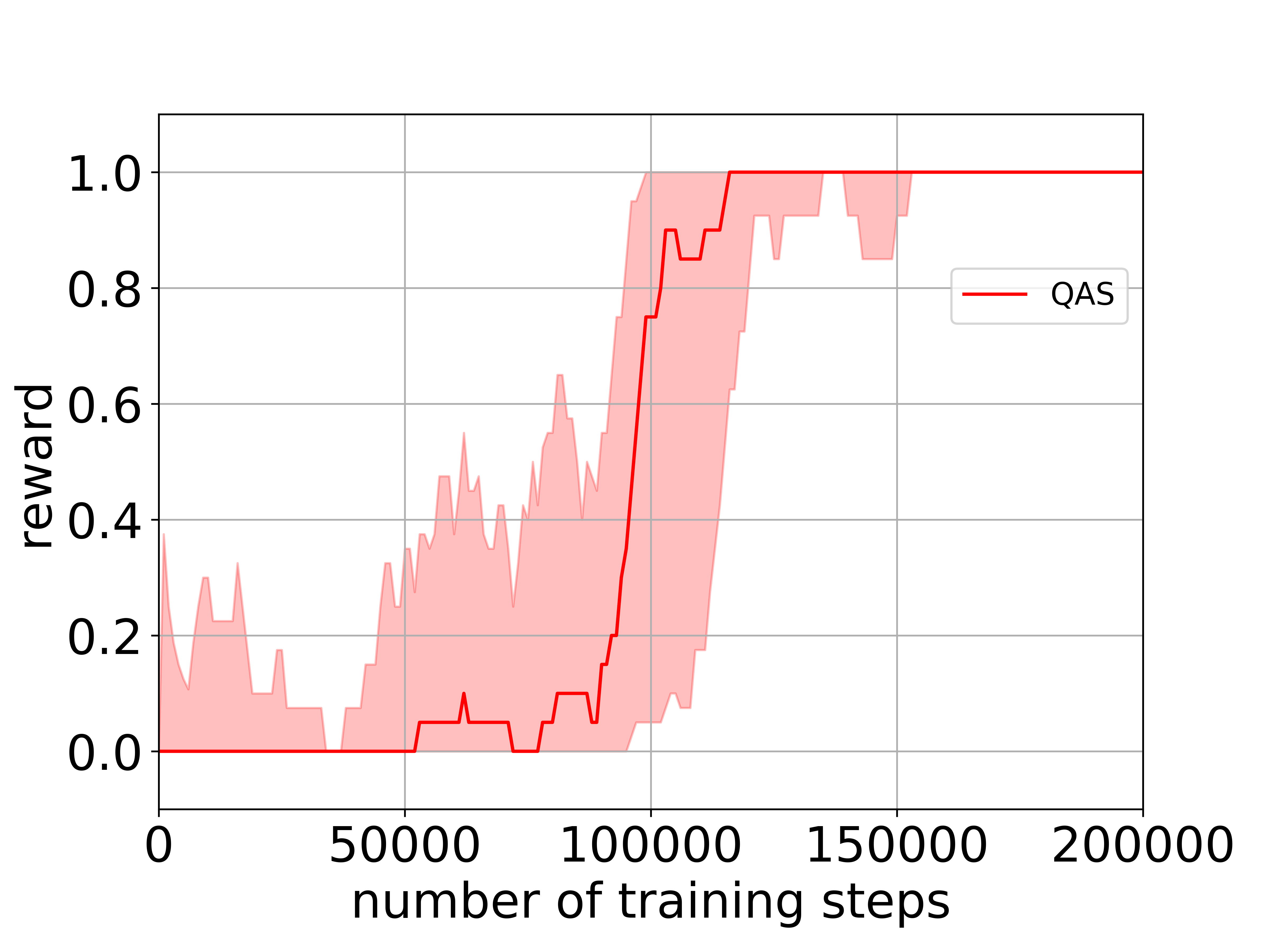}
		\caption{}
	\end{subfigure}
	\begin{subfigure}[b]{0.3\textwidth}
		\centering
		\includegraphics[width=\textwidth]{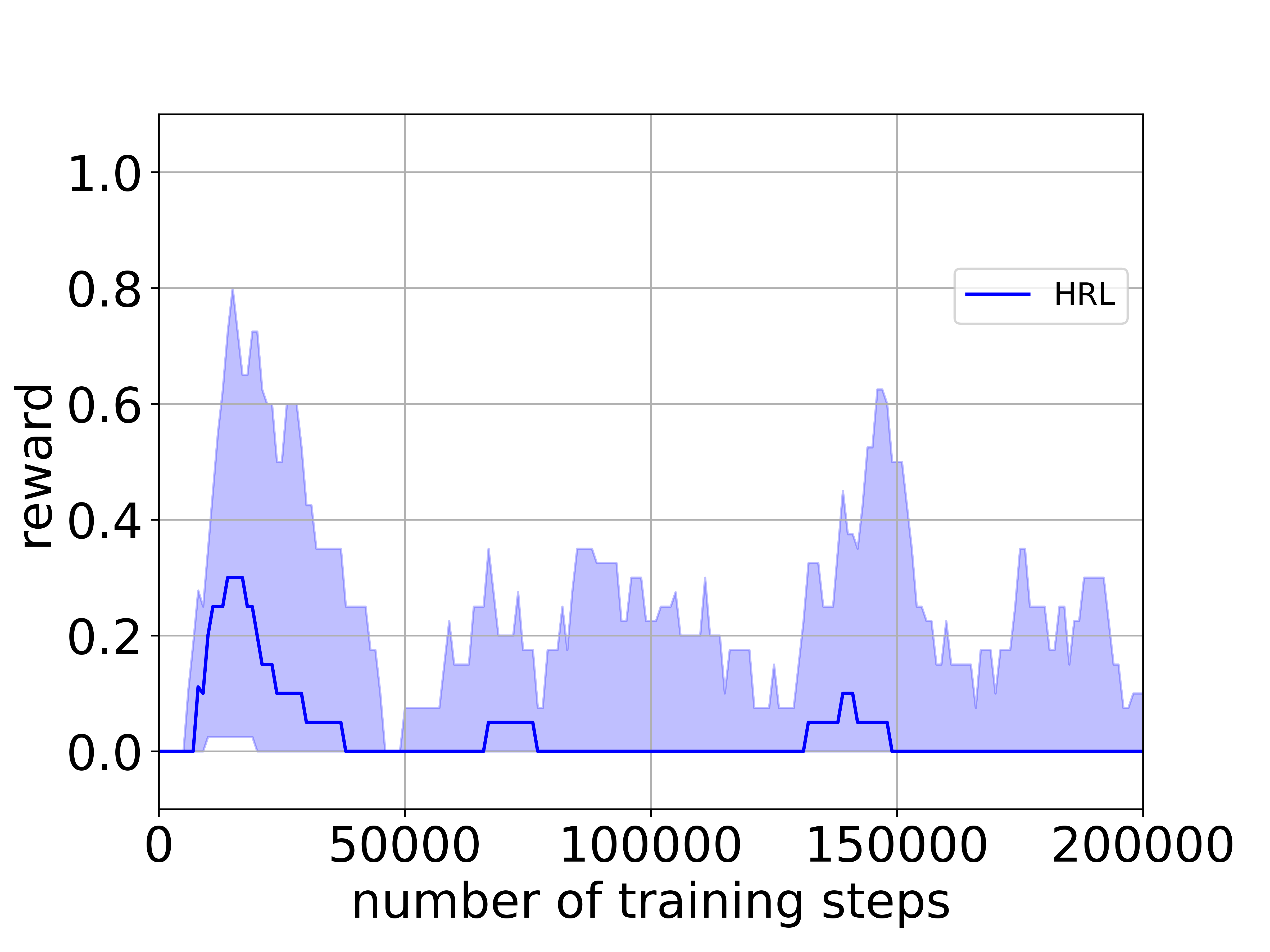}
		\caption{}
	\end{subfigure}
	\caption{Cumulative rewards of 10 independent simulation runs averaged for every 10 training steps for \officeB ~in the \office: (a) \methodA; (b) \methodB; (c) \methodC.}          
	\label{case1_task2}
\end{figure*}

\subsection{\OfficeC}
For \officeC, Figure \ref{rm_c1_t3} shows the inferred hypothesis reward machine in the last iteration of \algoName{}. Figure \ref{case1_task3} shows the cumulative rewards of 10 independent simulation runs averaged for every 10 training steps for \officeC.


\begin{figure}[H]
	\centering
	\begin{tikzpicture}[shorten >=1pt,node distance=2cm,on grid,auto] 
	\node[state,initial] (0) at (1, 0) {$\mealyCommonState_0$};
	\node[state] (1) at (2, -2) {$\mealyCommonState_1$};
	\node[state] (2) at (4, 2) {$\mealyCommonState_2$};
	\node[state] (3) at (-1, 2) {$\mealyCommonState_3$};
	\node[state, accepting] (4) at (1, 4) {$\mealyCommonState_4$};
	\draw[<-, shorten <=1pt] (0.west) -- +(-.4, 0);
	\draw[->] (0) to[loop above] node[align=center] {$(\lnot \textrm{o}, 0)$} ();
	\draw[->] (0) to[right] node[sloped, below, align=center] {$(\textrm{o}, 0)$} (1);
	\draw[->] (1) to[bend right=60] node[sloped, below, align=center] {$(\textrm{c}, 0)$} (2);
	\draw[->] (1) to[bend right=60] node[sloped, below, align=center] {$(\textrm{m}\vee\textrm{o}, 0)$} (0);
	\draw[->] (2) to[right] node[sloped, below, align=center] {$(\textrm{c}\vee\textrm{m}, 0)$} (0);
	\draw[->] (2) to[right] node[sloped, above, align=center] {$(\textrm{o}, 0)$} (3);
	\draw[->] (3) to[loop above] node[align=center] {$(\textrm{b}\vee\textrm{c}\vee\textrm{o}, 0)$} ();
	\draw[->] (3) to[bend right=60] node[sloped, below, align=center] {$(\textrm{a}\vee\textrm{d}, 0)$} (1);
	\draw[->] (3) to[right] node[sloped, below, align=center] {$(\textrm{m}, 1)$} (4);
	\end{tikzpicture}
	\caption{The inferred hypothesis reward machine for \officeC\ in the last iteration of \algoName{}.}  
	\label{rm_c1_t3}
\end{figure}

\begin{figure*}[t]
	\centering
	\begin{subfigure}[b]{0.3\textwidth}  
		\centering
		\includegraphics[width=\textwidth]{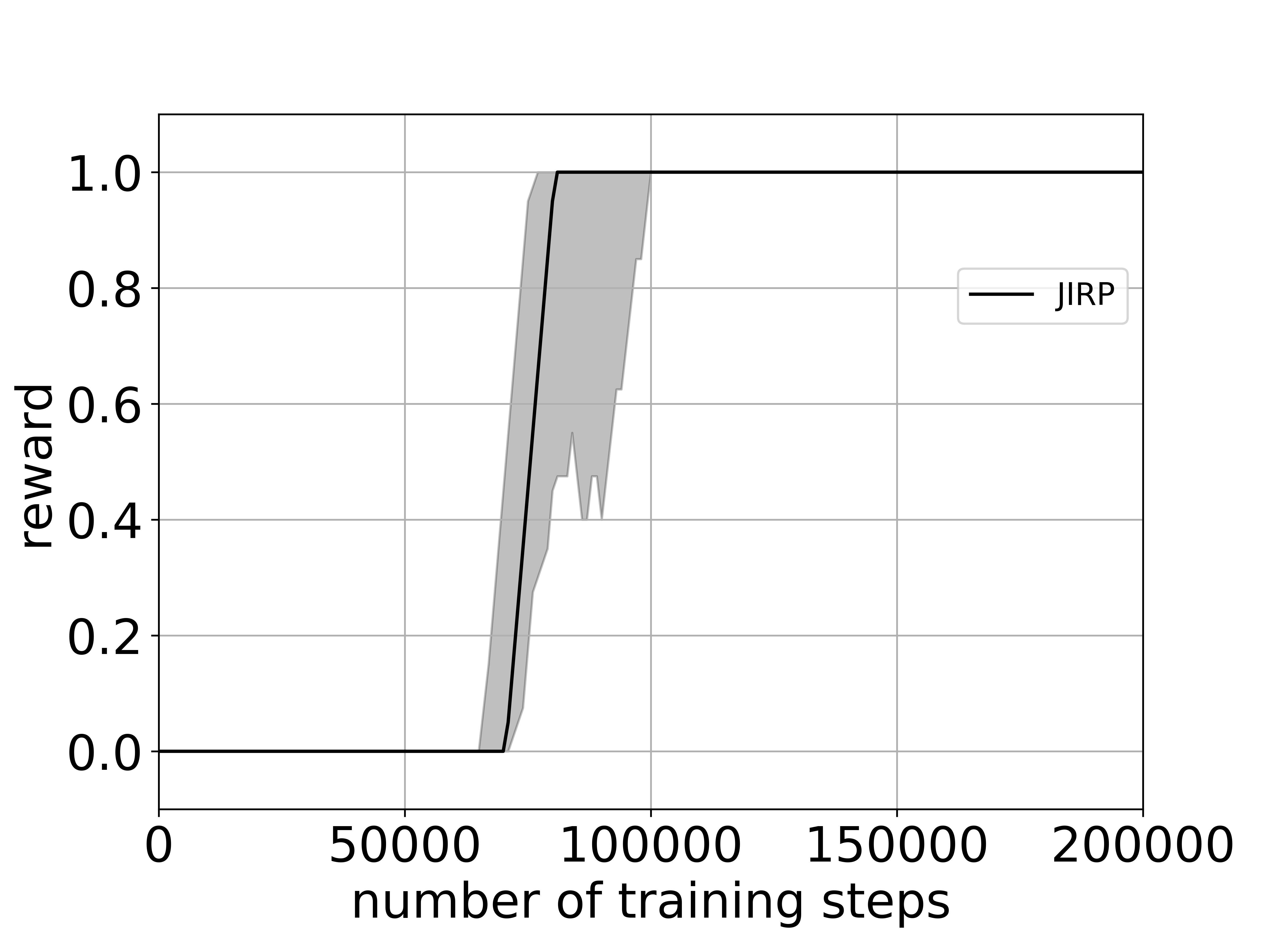}
		\caption{}
	\end{subfigure}
	\begin{subfigure}[b]{0.3\textwidth}
		\centering
		\includegraphics[width=\textwidth]{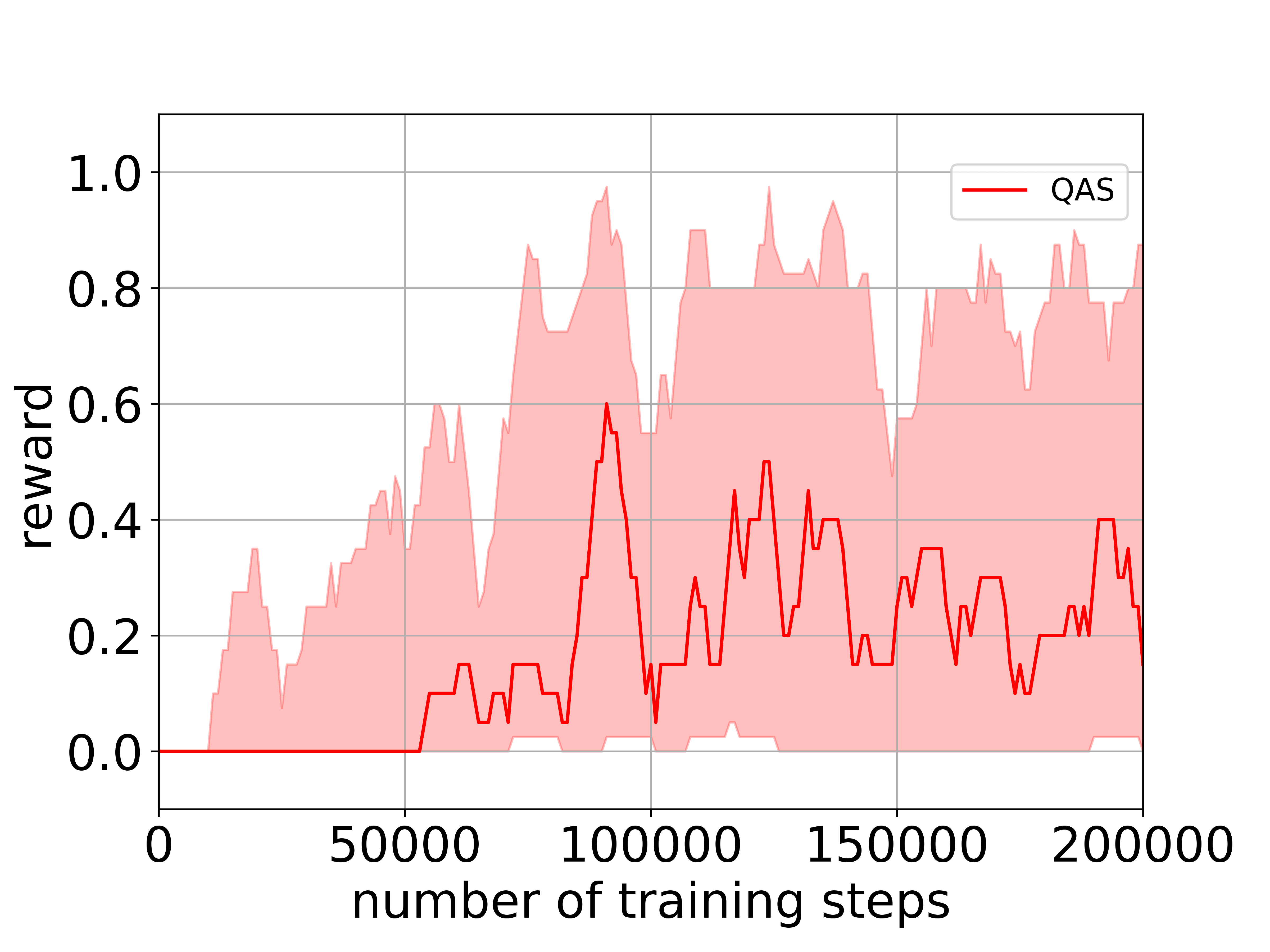}
		\caption{}
	\end{subfigure}
	\begin{subfigure}[b]{0.3\textwidth}
		\centering
		\includegraphics[width=\textwidth]{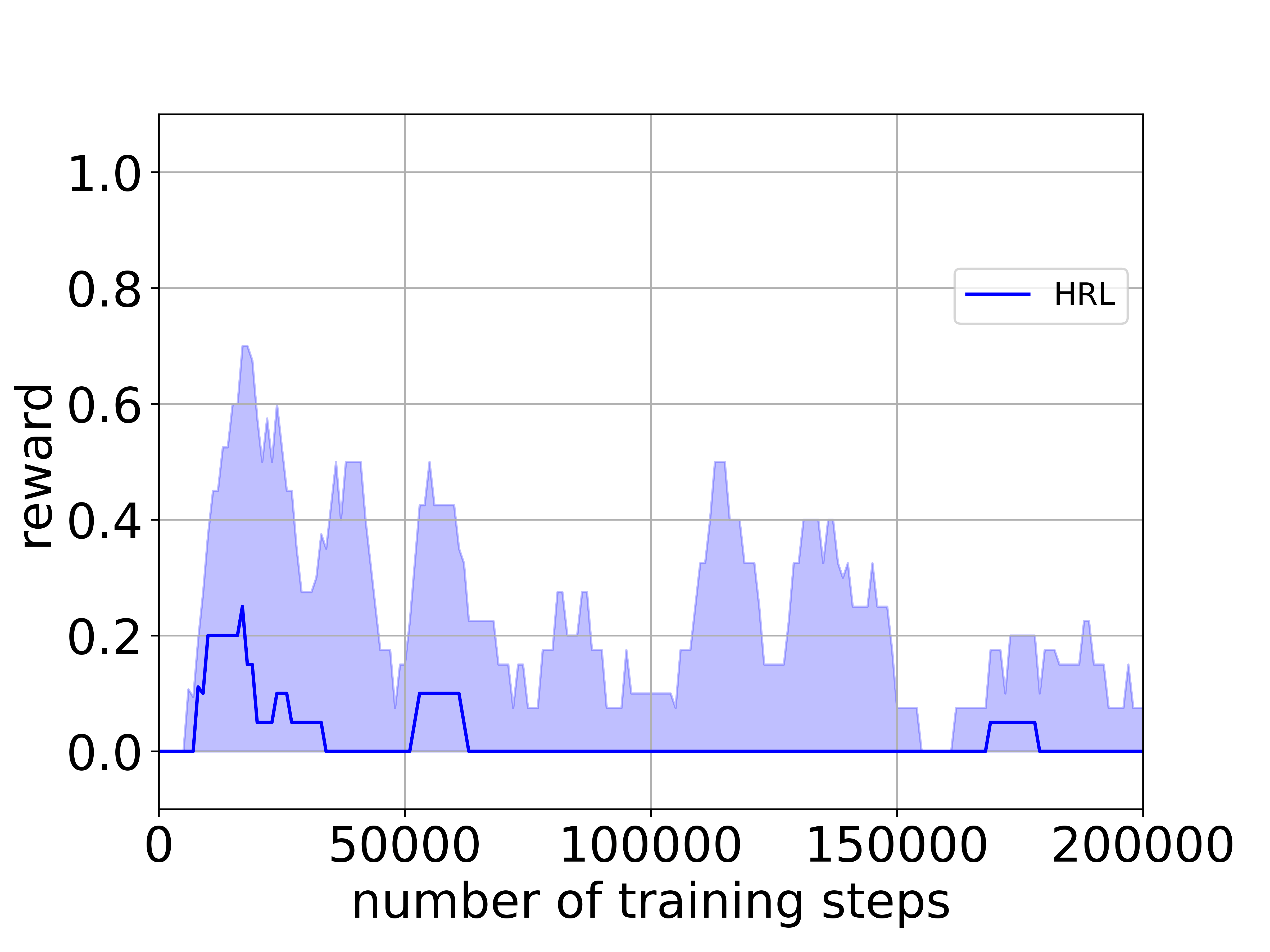}
		\caption{}
	\end{subfigure}
	\caption{Cumulative rewards of 10 independent simulation runs averaged for every 10 training steps for \officeC\ in the \office: (a) \methodA; (b) \methodB; (c) \methodC.}  
	\label{case1_task3}
\end{figure*}

\subsection{\OfficeD}
For \officeD, Figure \ref{case1_task4} shows the inferred hypothesis reward machine in the last iteration of \algoName{}. Figure \ref{case1_task4} shows the cumulative rewards of 10 independent simulation runs averaged for every 10 training steps for \officeD. 

\begin{figure}[H]
	\centering
	\begin{tikzpicture}[shorten >=1pt,node distance=2cm,on grid,auto] 
	\node[state,initial] (0) at (0.5, -1.5) {$\mealyCommonState_0$};
	\node[state] (1) at (3.5, -1.5) {$\mealyCommonState_1$};
	\node[state] (2) at (4, 3) {$\mealyCommonState_2$};
	\node[state] (3) at (-2, 3) {$\mealyCommonState_3$};
	\node[state] (4) at (0.5, 2) {$\mealyCommonState_4$};
	\node[state, accepting] (5) at (2, 3) {$\mealyCommonState_5$};
	\draw[<-, shorten <=1pt] (0.west) -- +(-.4, 0);
	\draw[->] (0) to[loop above] node[align=center] {$(\lnot \textrm{c}, 0)$} ();
	\draw[->] (0) to[bend right=40] node[sloped, above, align=center] {$(\textrm{c}, 0)$} (1);
	\draw[->] (1) to[bend right=40] node[sloped, below, align=center] {$(\textrm{o}, 0)$} (2);
	\draw[->] (1) to[loop above] node[align=center] {$(\lnot \textrm{o}, 0)$} ();
	\draw[->] (2) to[loop above] node[align=center] {$(\textrm{a}\vee\textrm{m}\vee\textrm{o}, 0)$} ();
	\draw[->] (2) to[bend right=50] node[sloped, above, align=center] {$(\textrm{c}, 0)$} (3);
	\draw[->] (3) to[loop above] node[align=center] {$(\textrm{a}\vee\textrm{b}\vee\textrm{o}, 0)$} ();
	\draw[->] (3) to[bend left=50] node[sloped, below, align=center] {$(\textrm{d}, 1)$} (5);
	\draw[->] (3) to[bend right=50] node[sloped, above, align=center] {$(\textrm{m}, 0)$} (0);	
	\draw[->] (3) to[right] node[sloped, below, align=center] {$(\textrm{c}, 0)$} (4);
	\draw[->] (4) to[right] node[sloped, below, align=center] {$(\textrm{d}, 1)$} (5);
	\draw[->] (4) to[bend right=40] node[sloped, above, align=center] {$(\textrm{a}, 0)$} (3);
	\draw[->] (4) to[loop below] node[align=center] {$(\lnot \textrm{a}\vee\lnot \textrm{d}, 0)$} ();
	\draw[->] (2) to[right] node[sloped, below, align=center] {$(\textrm{b}\vee\textrm{d}, 0)$} (0);
	\end{tikzpicture}
	\caption{The inferred hypothesis reward machine for \officeD\ in the last iteration of \algoName{}.}  
	\label{rm_c1_t4}
\end{figure}

\begin{figure*}[t]
	\centering
	\begin{subfigure}[b]{0.3\textwidth}  
		\centering
		\includegraphics[width=\textwidth]{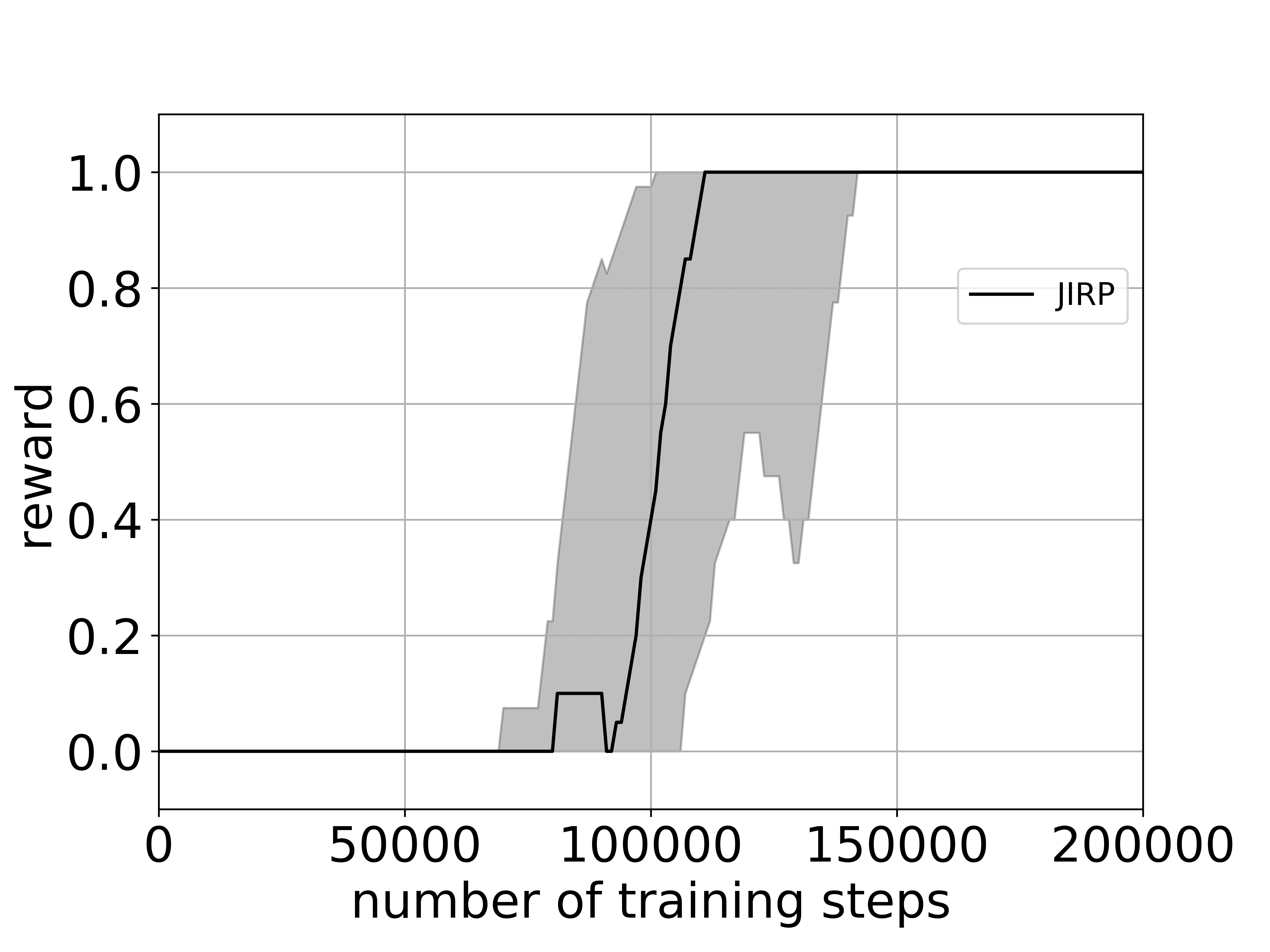}
		\caption{}
	\end{subfigure}
	\begin{subfigure}[b]{0.3\textwidth}
		\centering
		\includegraphics[width=\textwidth]{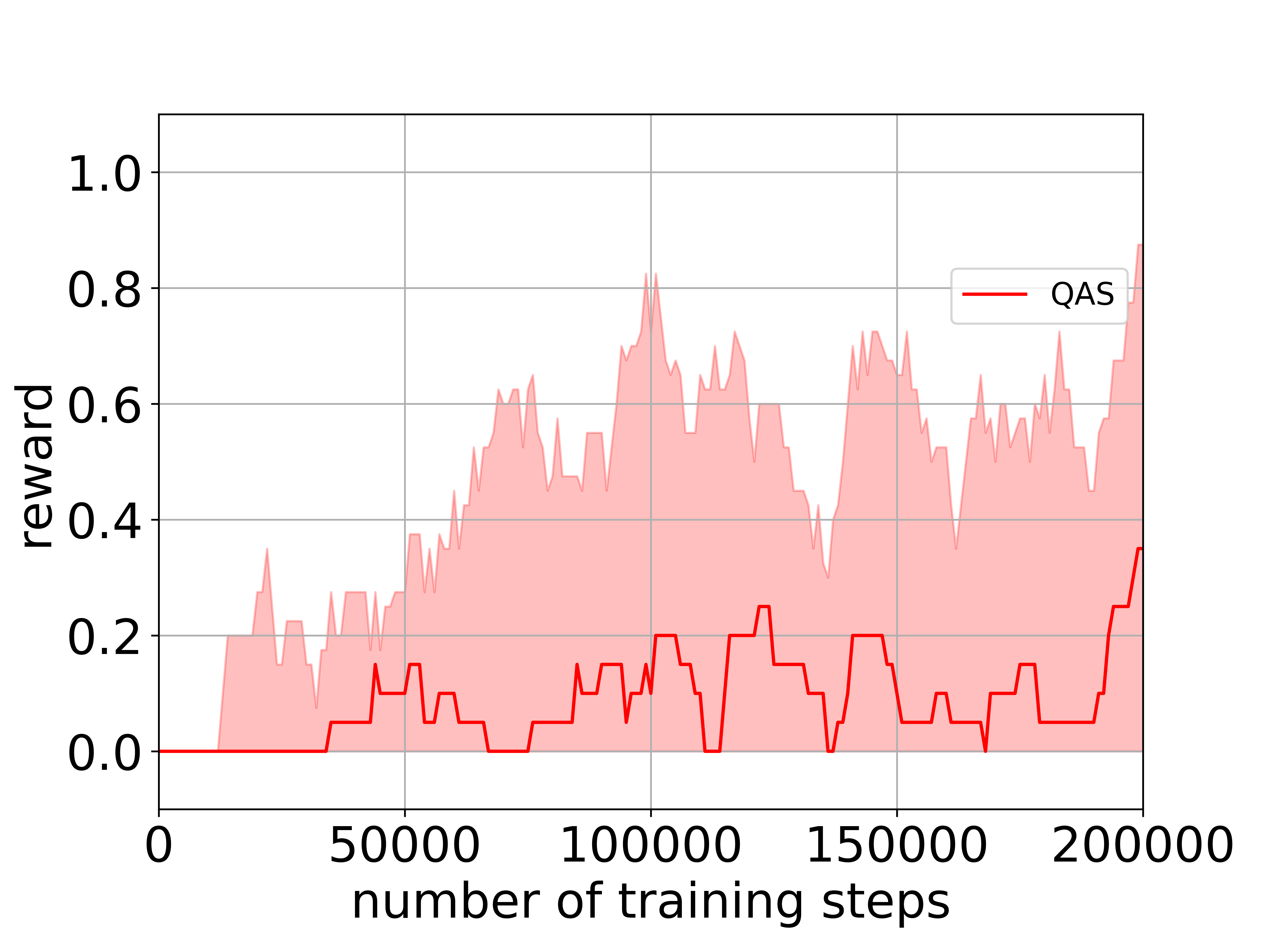}
		\caption{}
	\end{subfigure}
	\begin{subfigure}[b]{0.3\textwidth}
		\centering
		\includegraphics[width=\textwidth]{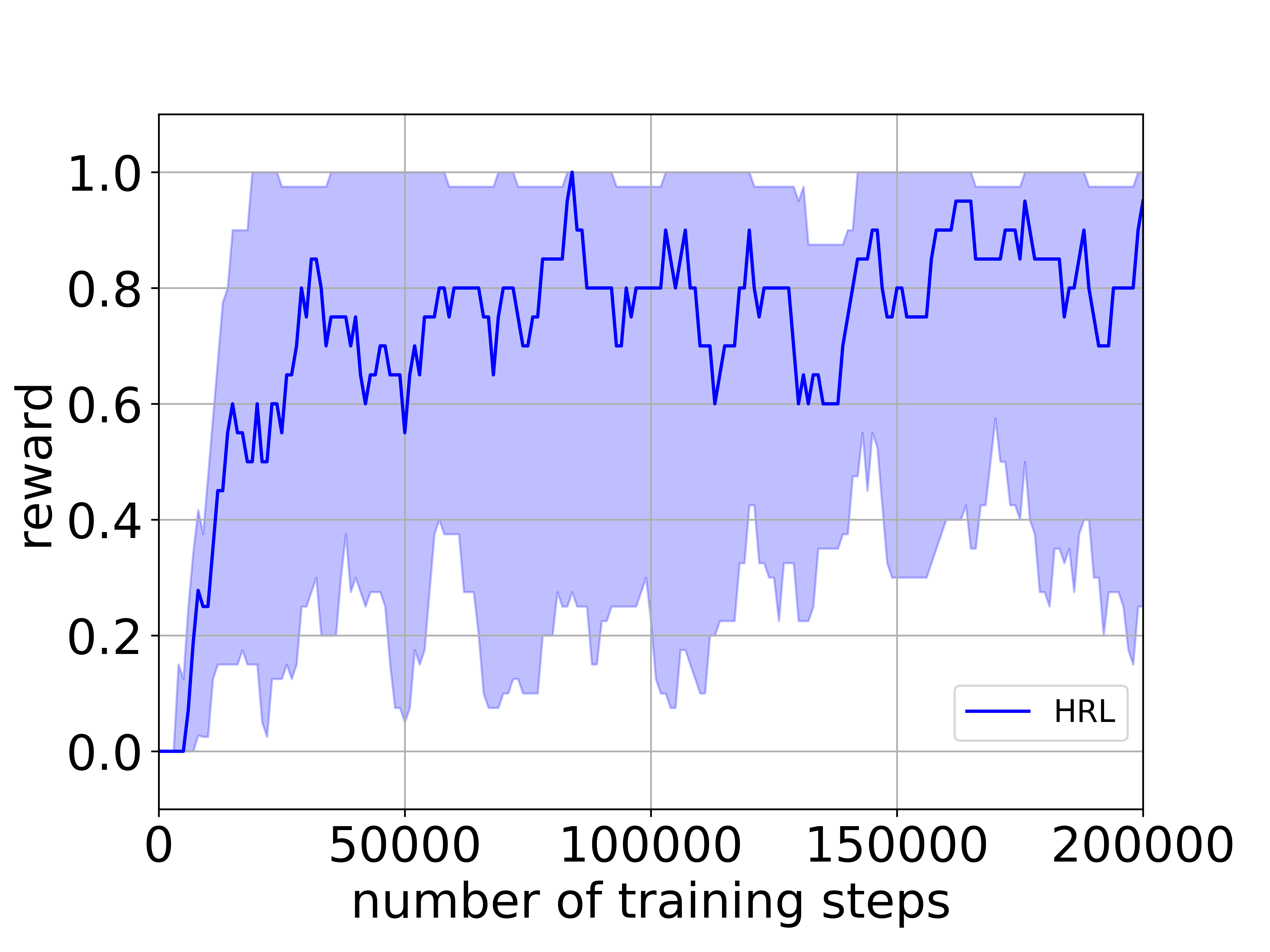}
		\caption{}
	\end{subfigure}
	\caption{Cumulative rewards of 10 independent simulation runs averaged for every 10 training steps for \officeD\ in the \office: (a) \methodA; (b) \methodB; (c) \methodC.}  
	\label{case1_task4}
\end{figure*}

\section{Details in \Craft}
\label{sec:app_craft}
We provide the detailed results in \craft. Figure \ref{craft_map} shows the map in the \office. We use the triangle to denote the initial position of the agent.
We consider the following four tasks:\\
\textbf{\CraftA}: make plank: get wood $\textrm{w}$, then use toolshed $\textrm{t}$ (toolshed cannot be used before wood is gotten);\\
\textbf{\CraftB}: make stick: get wood $\textrm{w}$, then use workbench $\textrm{h}$
(workbench can be used before wood is gotten);\\
\textbf{\CraftC}: make bow: go to workbench $\textrm{h}$, get wood $\textrm{w}$, then go to workbench $\textrm{h}$ and use factory $\textrm{f}$ (in the listed order);\\
\textbf{\CraftD}: make bridge: get wood $\textrm{w}$, get iron $\textrm{i}$, then get wood $\textrm{w}$ and use factory $\textrm{f}$ (in the listed order).

\begin{figure}[t]
	\centering
	\includegraphics[width=0.5\textwidth]{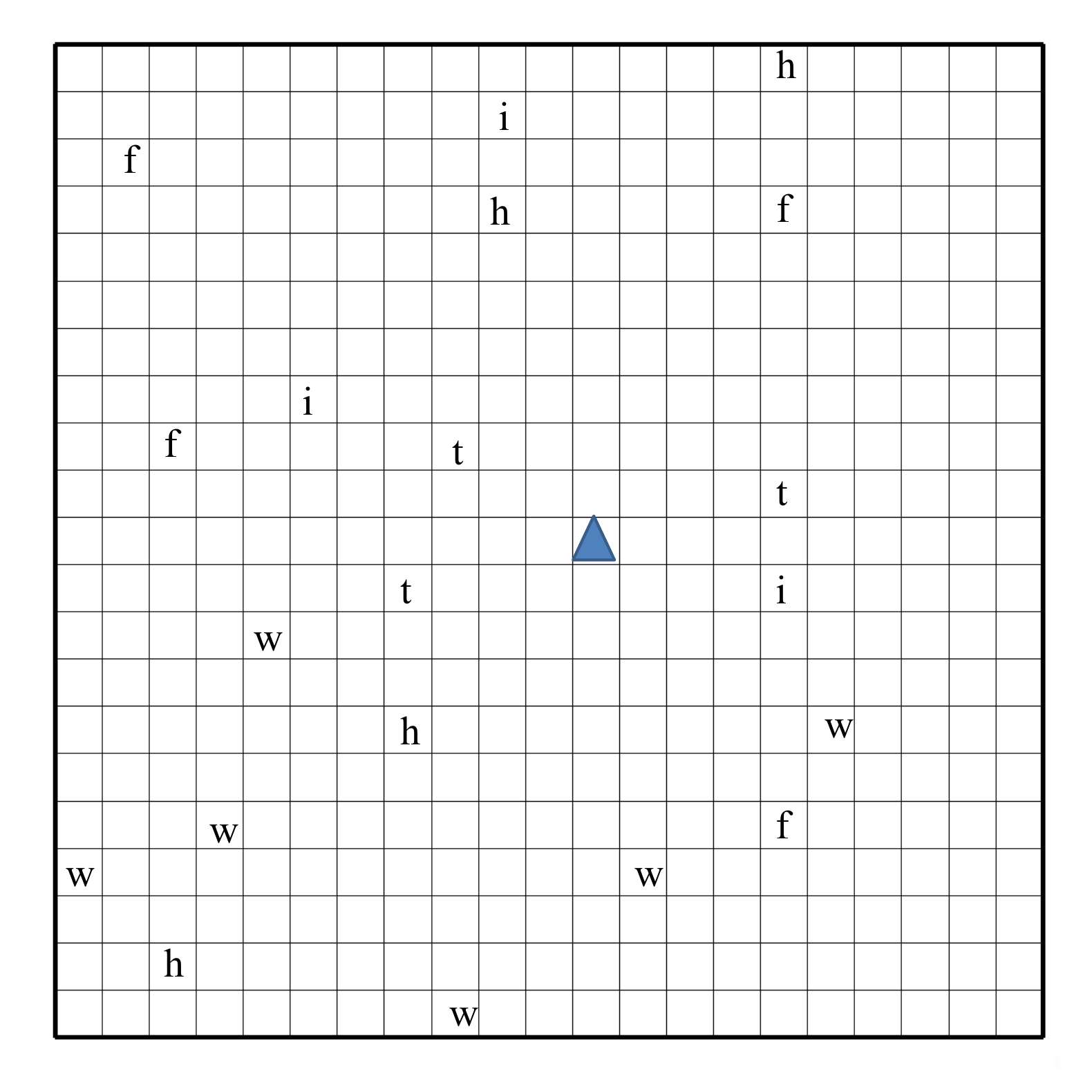}
	\caption{The map in the \craft.}
	\label{craft_map}
\end{figure}

\subsection{\CraftA}
For \craftA, Figure \ref{rm_c2_t1} shows the inferred hypothesis reward machine in the last iteration of \algoName{}. Figure \ref{case2_task1} shows the cumulative rewards of 10 independent simulation runs averaged for every 10 training steps for \craftA. 

\begin{figure}[H]
	\centering
	\begin{tikzpicture}[shorten >=1pt,node distance=2cm,on grid,auto] 
	\node[state,initial] (q_0)   {$\mealyCommonState_0$}; 
	\node[state] (q_1) [right=of q_0] {$\mealyCommonState_1$}; 
	\node[state] (q_3) at (2, -2) {$\mealyCommonState_3$}; 
	\node[state,accepting] (q_2) [right=of q_1] {$\mealyCommonState_2$}; 
	\path[->] 
	(q_0) edge  node {($\textrm{w}$, 0)} (q_1)
	edge [loop above] node {($\lnot\textrm{w}\wedge\lnot\textrm{t}$, 0)} ()    
	(q_0) edge  node [left] {($\textrm{t}$, 0)} (q_3)
	(q_1) edge  node  {($\textrm{t}$, 1)} (q_2)
	edge [loop above] node {($\lnot\textrm{t}$, 0)} ();
	\end{tikzpicture}
	\caption{The inferred hypothesis reward machine for \craftA~ in the last iteration of \algoName{}.}  
	\label{rm_c2_t1}
\end{figure}

\begin{figure*}[t]
	\centering
	\begin{subfigure}[b]{0.3\textwidth}  
		\centering
		\includegraphics[width=\textwidth]{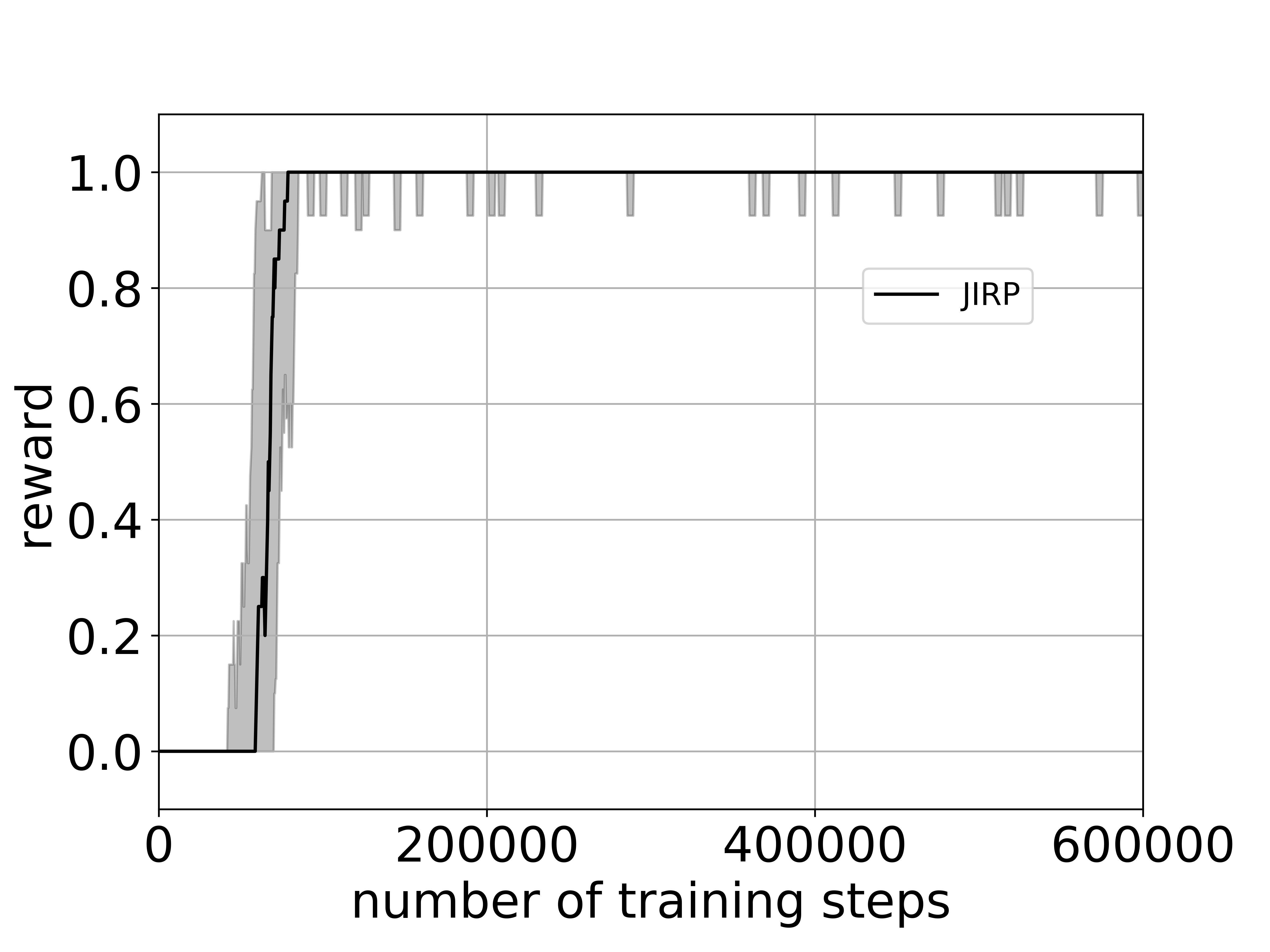}
		\caption{}
	\end{subfigure}
	\begin{subfigure}[b]{0.3\textwidth}
		\centering
		\includegraphics[width=\textwidth]{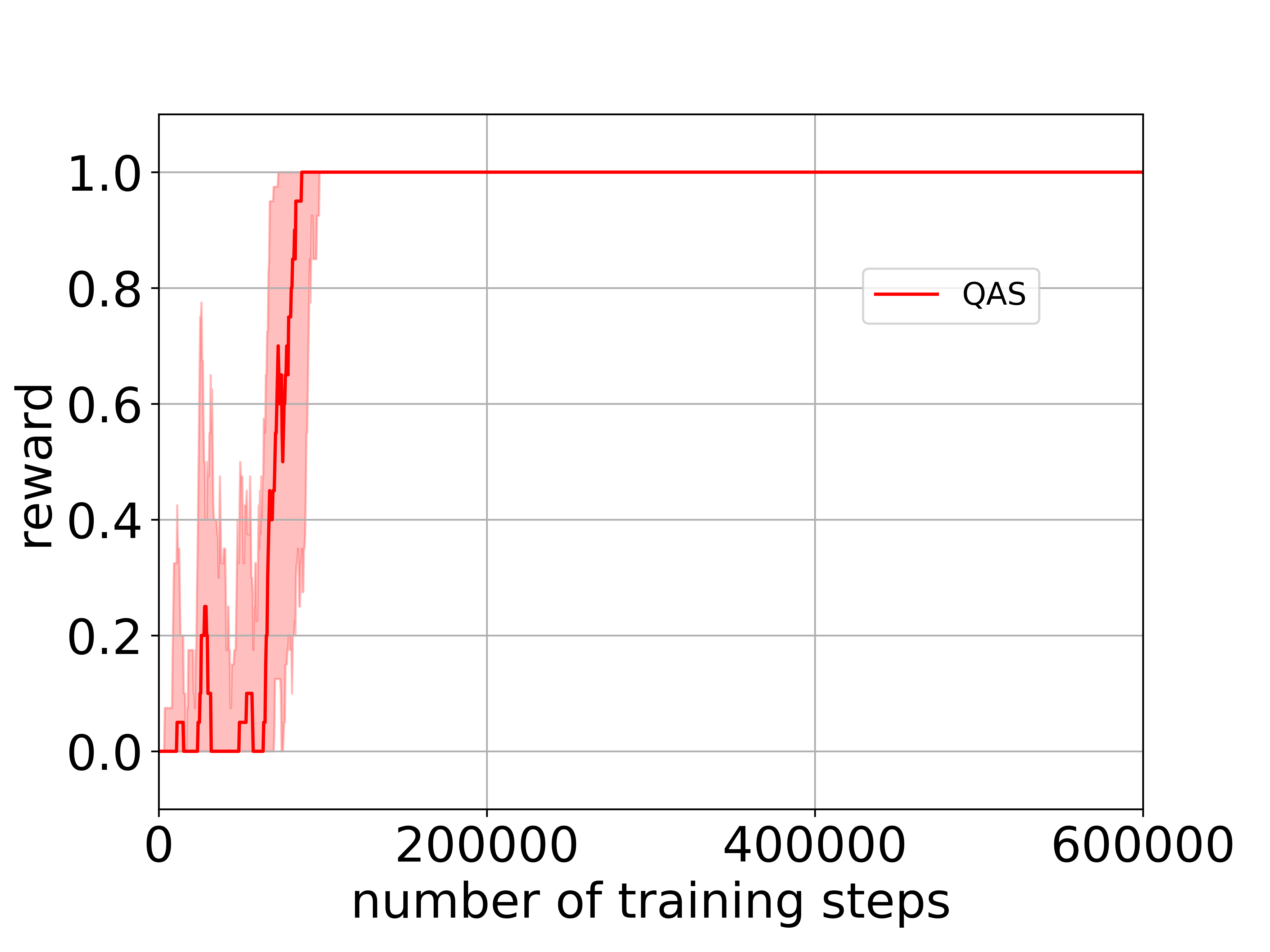}
		\caption{}
	\end{subfigure}
	\begin{subfigure}[b]{0.3\textwidth}
		\centering
		\includegraphics[width=\textwidth]{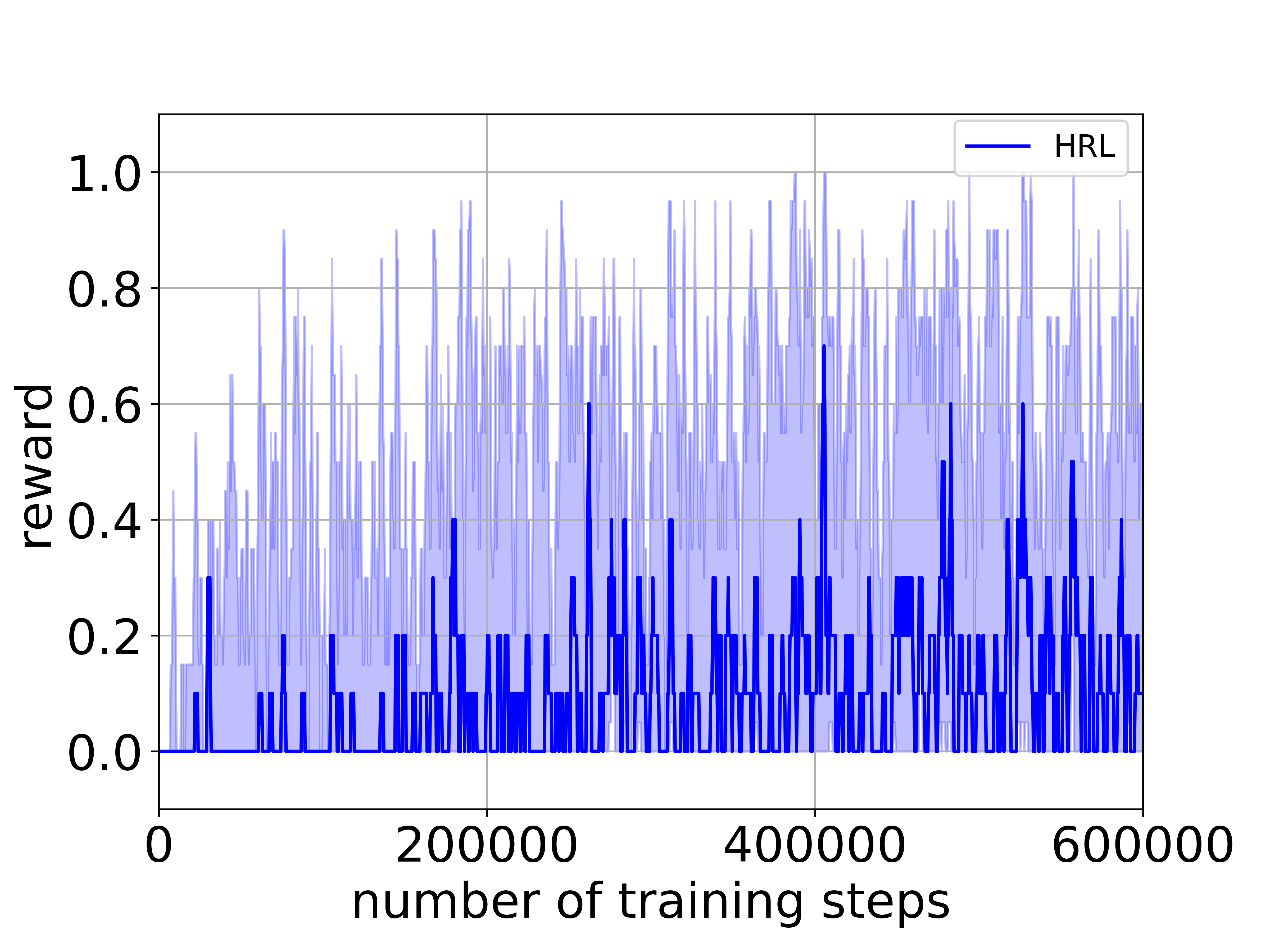}
		\caption{}
	\end{subfigure}
	\caption{Cumulative rewards of 10 independent simulation runs averaged for every 10 training steps for \craftA\ in the \craft: (a) \methodA; (b) \methodB; (c) \methodC.}  
	\label{case2_task1}
\end{figure*}

\subsection{ \CraftB}
For \craftB, Figure \ref{rm_c2_t2} shows the inferred hypothesis reward machine in the last iteration of \algoName{}. Figure \ref{case2_task2} shows the cumulative rewards of 10 independent simulation runs averaged for every 10 training steps for \craftB.                   

\begin{figure}[H]
	\centering
	\begin{tikzpicture}[shorten >=1pt,node distance=2cm,on grid,auto] 
\node[state,initial] (q_0)   {$\mealyCommonState_0$}; 
\node[state] (q_1) [right=of q_0] {$\mealyCommonState_1$}; 
\node[state,accepting] (q_2) [right=of q_1] {$\mealyCommonState_2$}; 
\path[->] 
(q_0) edge  node {($\textrm{w}$, 0)} (q_1)
edge [loop above] node {($\lnot\textrm{w}$, 0)} ()    
(q_1) edge  node  {($\textrm{h}$, 1)} (q_2)
edge [loop above] node {($\lnot\textrm{h}$, 0)} ();
\end{tikzpicture}
	\caption{The inferred hypothesis reward machine for \craftB~ in the last iteration of \algoName{}.}  
	\label{rm_c2_t2}
\end{figure}

\begin{figure*}[t]
	\centering
	\begin{subfigure}[b]{0.3\textwidth}  
		\centering
		\includegraphics[width=\textwidth]{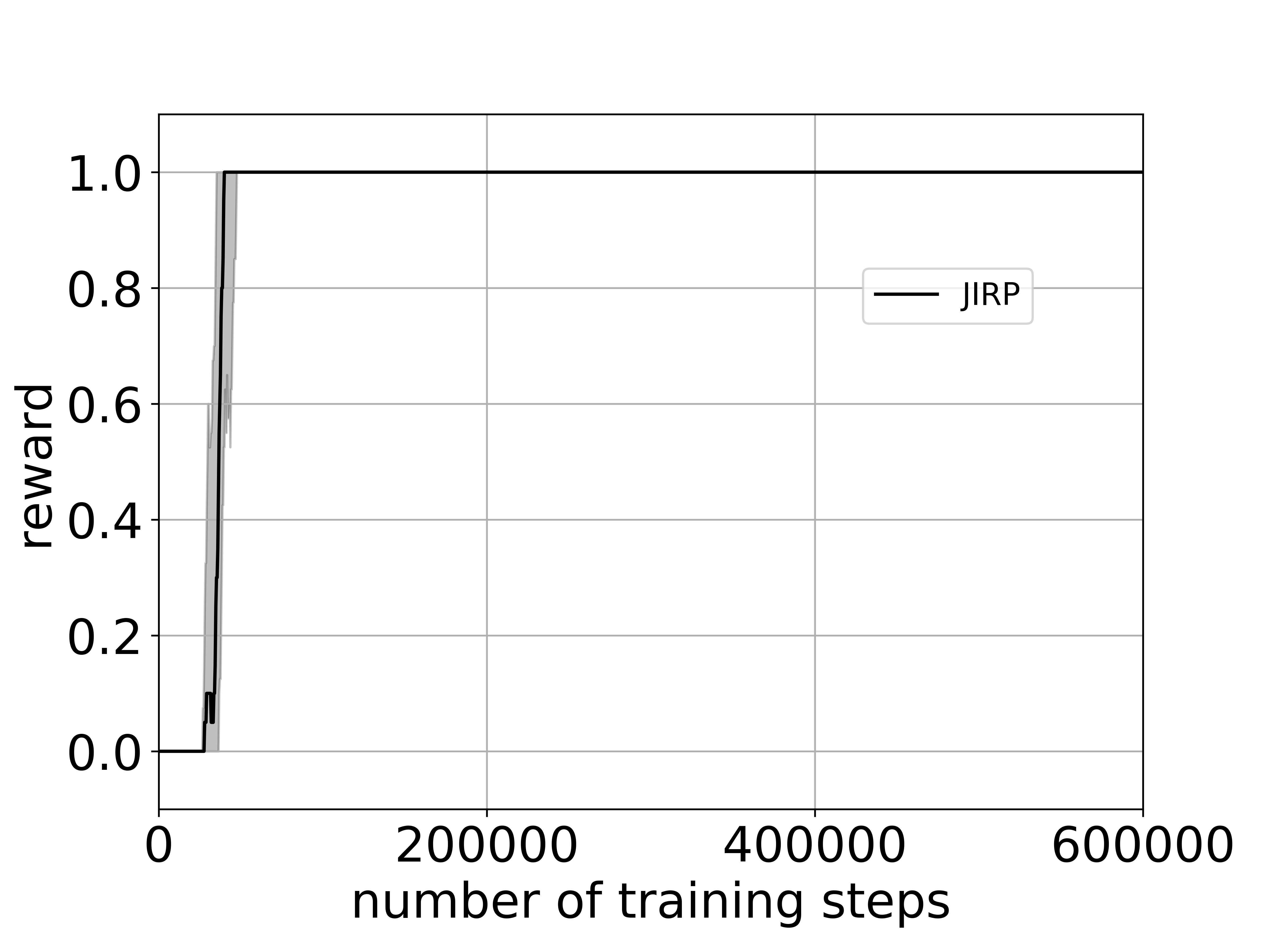}
		\caption{}
	\end{subfigure}
	\begin{subfigure}[b]{0.3\textwidth}
		\centering
		\includegraphics[width=\textwidth]{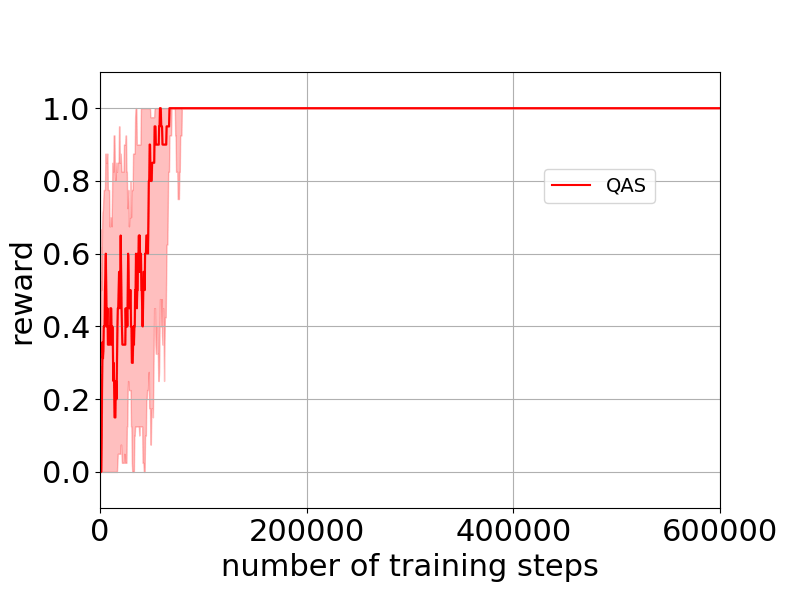}
		\caption{}
	\end{subfigure}
	\begin{subfigure}[b]{0.3\textwidth}
		\centering
		\includegraphics[width=\textwidth]{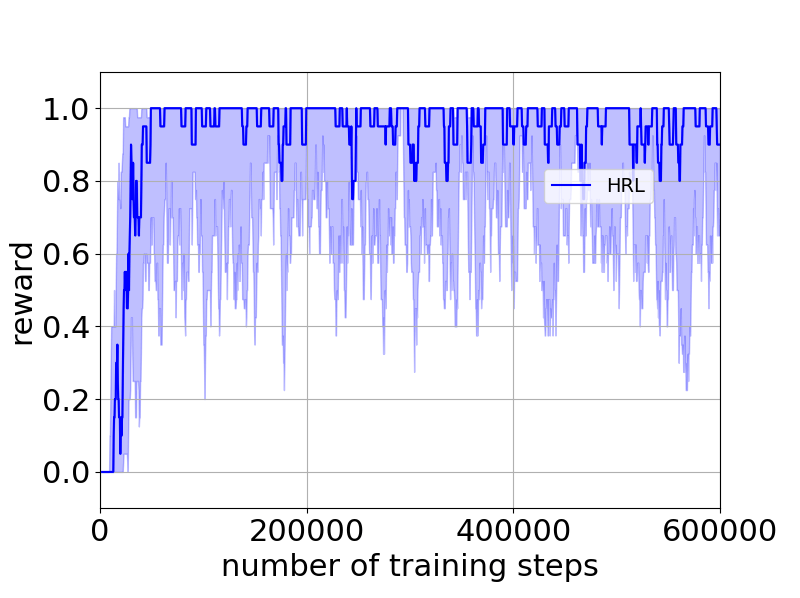}
		\caption{}
	\end{subfigure}
	\caption{Cumulative rewards of 10 independent simulation runs averaged for every 10 training steps for \craftB\ in the \craft: (a) \methodA; (b) \methodB; (c) \methodC.}  
	\label{case2_task2}
\end{figure*}

\subsection{ \CraftC}
For \craftC, Figure \ref{rm_c2_t3} shows the inferred hypothesis reward machine in the last iteration of \algoName{}. Figure \ref{case2_task3} shows the cumulative rewards of 10 independent simulation runs averaged for every 10 training steps for \craftC. 

\begin{figure}[H]
	\centering
	\begin{tikzpicture}[shorten >=1pt,node distance=2cm,on grid,auto] 
	\node[state,initial] (0) at (0.5, 0) {$\mealyCommonState_0$};
	\node[state] (1) at (3.5, 0) {$\mealyCommonState_1$};
	\node[state] (2) at (4, 4) {$\mealyCommonState_2$};
	\node[state] (3) at (-2, 3) {$\mealyCommonState_3$};
	\node[state, accepting] (4) at (0, 3) {$\mealyCommonState_4$};
	\draw[<-, shorten <=1pt] (0.west) -- +(-.4, 0);
	\draw[->] (0) to[loop above] node[align=center] {$(\lnot \textrm{h}, 0)$} ();
	\draw[->] (0) to[bend right=40] node[sloped, above, align=center] {$(\textrm{h}, 0)$} (1);
	\draw[->] (1) to[bend right=40] node[sloped, below, align=center] {$(\textrm{w}, 0)$} (2);
	\draw[->] (1) to[loop above] node[align=center] {$(\lnot\textrm{w}, 0)$} ();
	\draw[->] (1) to[bend right=40] node[sloped, above, align=center] {$(\textrm{h}\vee\textrm{f}, 0)$} (0);
	\draw[->] (2) to[loop above] node[align=center] {$(\lnot\textrm{h}, 0)$} ();
	\draw[->] (2) to[bend right=50] node[sloped, below, align=center] {$(\textrm{h}, 0)$} (3);
	\draw[->] (3) to[loop below] node[align=center] {$(\lnot\textrm{f}, 0)$} ();
	\draw[->] (3) to[right] node[sloped, below, align=center] {$(\textrm{f}, 1)$} (4);
	\draw[->] (3) to[right] node[sloped, below, align=center] {$(\textrm{t}, 0)$} (0);
	\end{tikzpicture}
	\caption{The inferred hypothesis reward machine for \craftC~ in the last iteration of \algoName{}.}  
	\label{rm_c2_t3}
\end{figure}

\begin{figure*}[t]
	\centering
	\begin{subfigure}[b]{0.3\textwidth}  
		\centering
		\includegraphics[width=\textwidth]{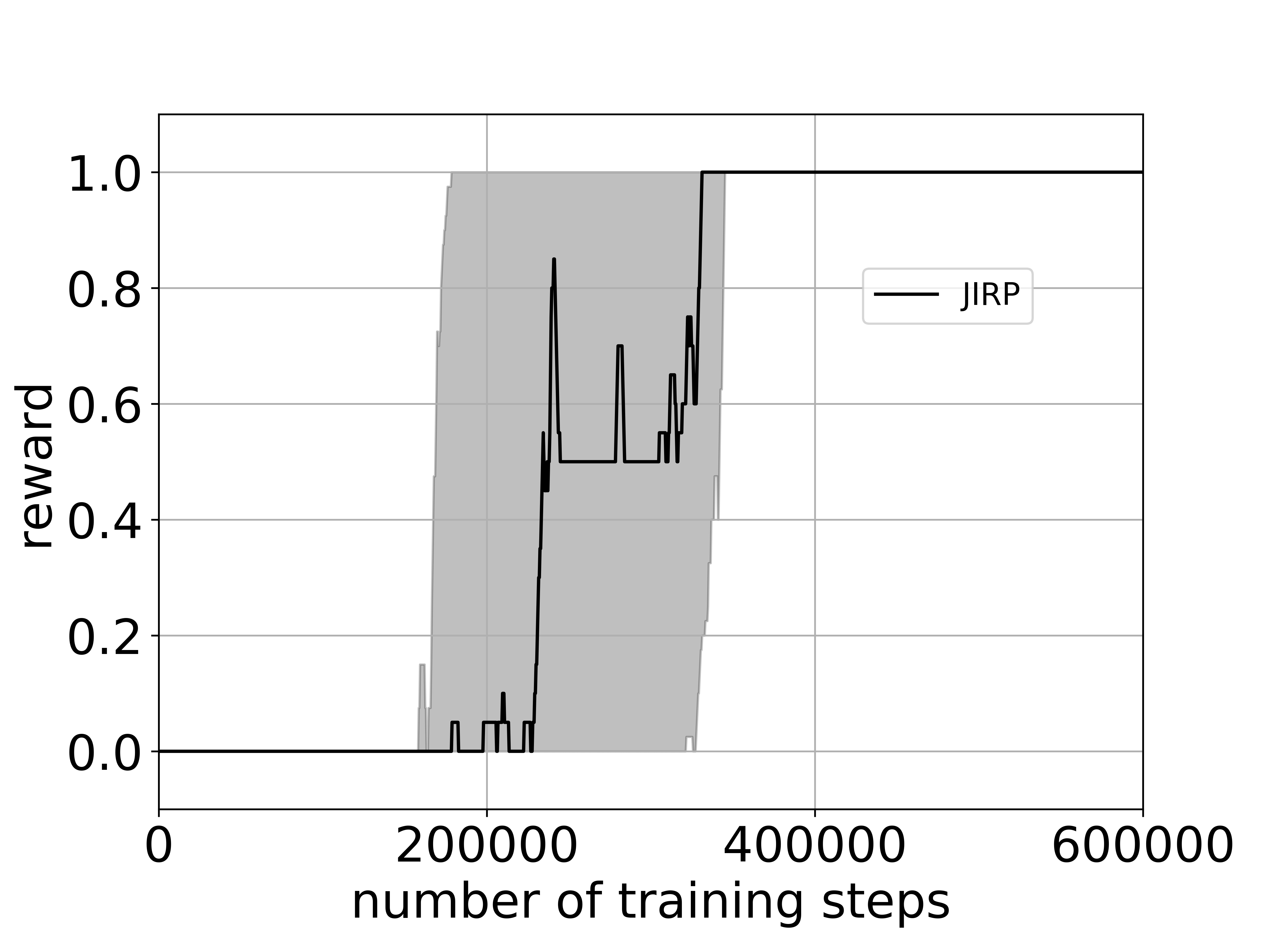}
		\caption{}
	\end{subfigure}
	\begin{subfigure}[b]{0.3\textwidth}
		\centering
		\includegraphics[width=\textwidth]{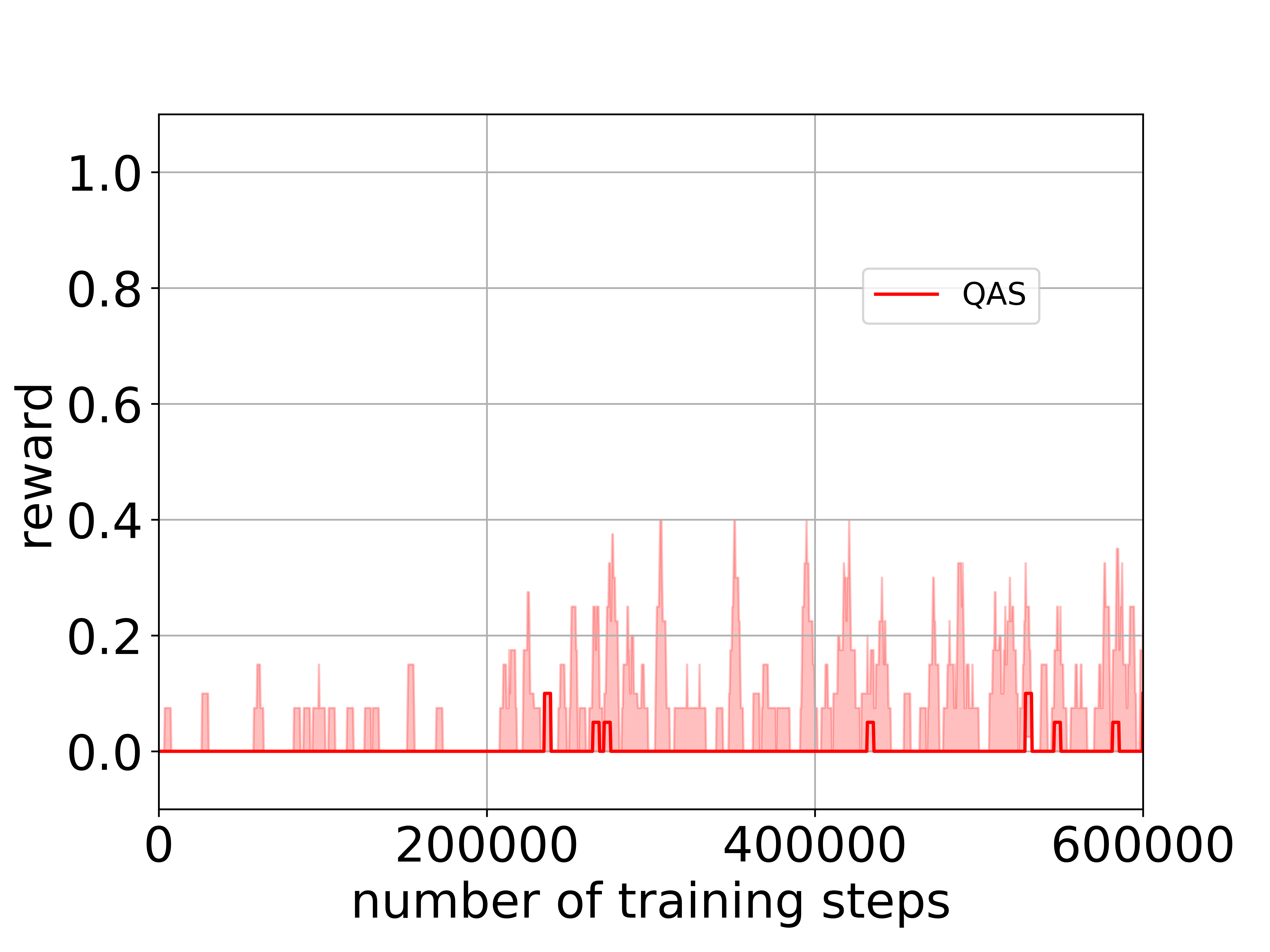}
		\caption{}
	\end{subfigure}
	\begin{subfigure}[b]{0.3\textwidth}
		\centering
		\includegraphics[width=\textwidth]{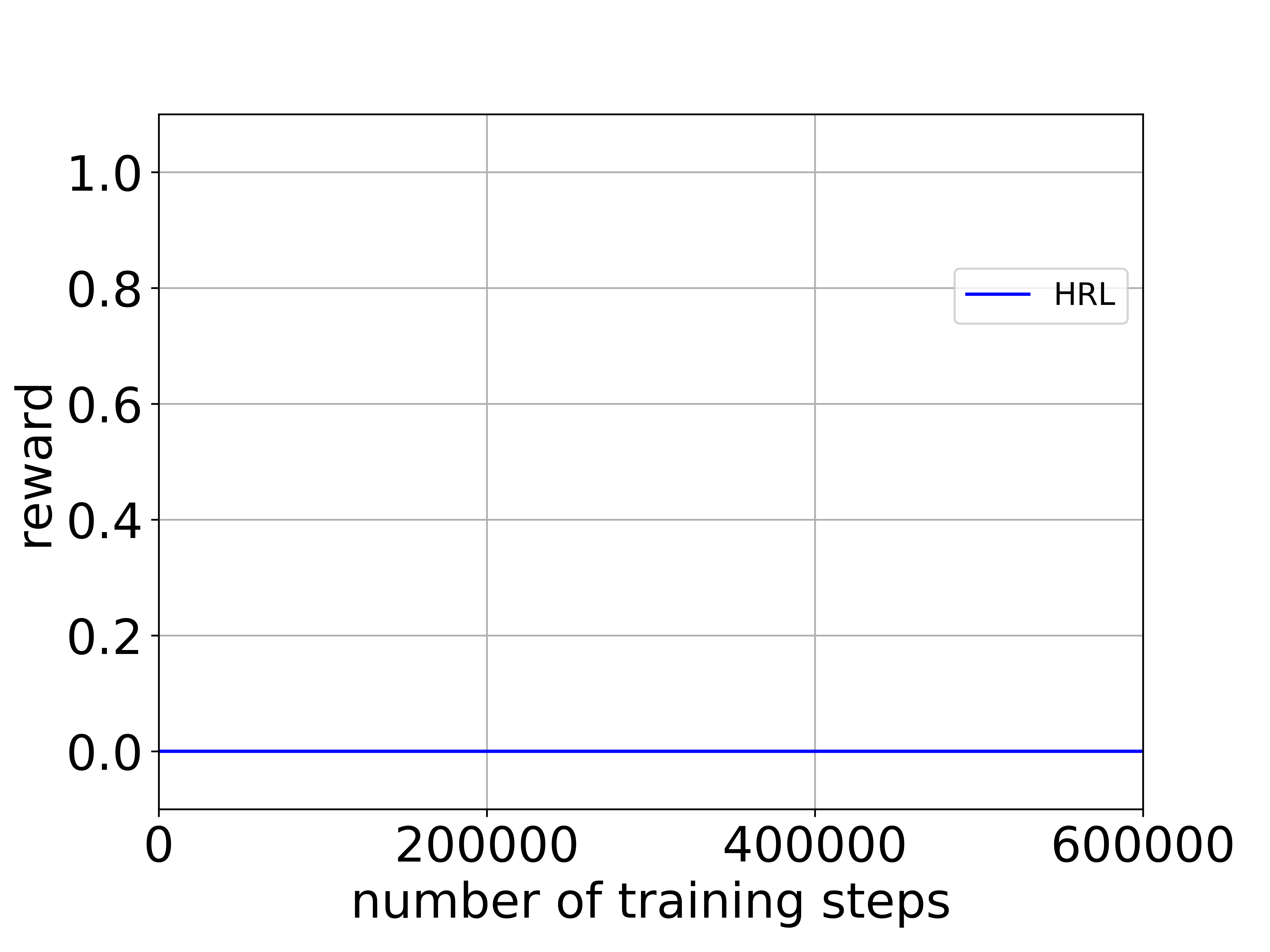}
		\caption{}
	\end{subfigure}
	\caption{Cumulative rewards of 10 independent simulation runs averaged for every 10 training steps for \craftC\ in the \craft: (a) \methodA; (b) \methodB; (c) \methodC.}  
	\label{case2_task3}
\end{figure*}

\subsection{ \CraftD}
For \craftD, Figure \ref{rm_c2_t4} shows the inferred hypothesis reward machine in the last iteration of \algoName{}. Figure \ref{case2_task4} shows the cumulative rewards of 10 independent simulation runs averaged for every 10 training steps for \craftD. 

\begin{figure}[H]
	\centering
	\begin{tikzpicture}[shorten >=1pt,node distance=2cm,on grid,auto] 
	\node[state,initial] (0) at (0.5, 0) {$\mealyCommonState_0$};
	\node[state] (1) at (3.5, 0) {$\mealyCommonState_1$};
	\node[state] (2) at (-1.5, 3) {$\mealyCommonState_2$};
	\node[state] (3) at (0.5, 4) {$\mealyCommonState_3$};
	\node[state, accepting] (4) at (3.5, 4) {$\mealyCommonState_4$};
	\draw[<-, shorten <=1pt] (0.west) -- +(-.4, 0);
	\draw[->] (0) to[loop above] node[align=center] {$(\lnot \textrm{w}, 0)$} ();
	\draw[->] (0) to[bend right=40] node[sloped, above, align=center] {$(\textrm{w}, 0)$} (1);
	\draw[->] (1) to[bend right=40] node[sloped, below, align=center] {$(\textrm{i}, 0)$} (2);
	\draw[->] (1) to[loop right] node[align=center] {$(\textrm{w}\vee\textrm{f}, 0)$} ();
	\draw[->] (1) to[bend right=40] node[sloped, above, align=center] {$(\textrm{h}\vee\textrm{t}, 0)$} (0);
	\draw[->] (2) to[loop above] node[align=center] {$(\textrm{f}\vee\textrm{t}\vee\textrm{i}, 0)$} ();
	\draw[->] (2) to[right] node[sloped, above, align=center] {$(\textrm{w}, 0)$} (3);
	\draw[->] (3) to[loop above] node[align=center] {$(\lnot\textrm{f}, 0)$} ();
	\draw[->] (3) to[right] node[sloped, below, align=center] {$(\textrm{f}, 1)$} (4);
	\draw[->] (2) to[right] node[sloped, below, align=center] {$(\textrm{h}, 0)$} (0);
	\end{tikzpicture}
	\caption{The inferred hypothesis reward machine for \craftD~ in the last iteration of \algoName{}.}  
	\label{rm_c2_t4}
\end{figure}

\begin{figure*}[t]
	\centering
	\begin{subfigure}[b]{0.3\textwidth}  
		\centering
		\includegraphics[width=\textwidth]{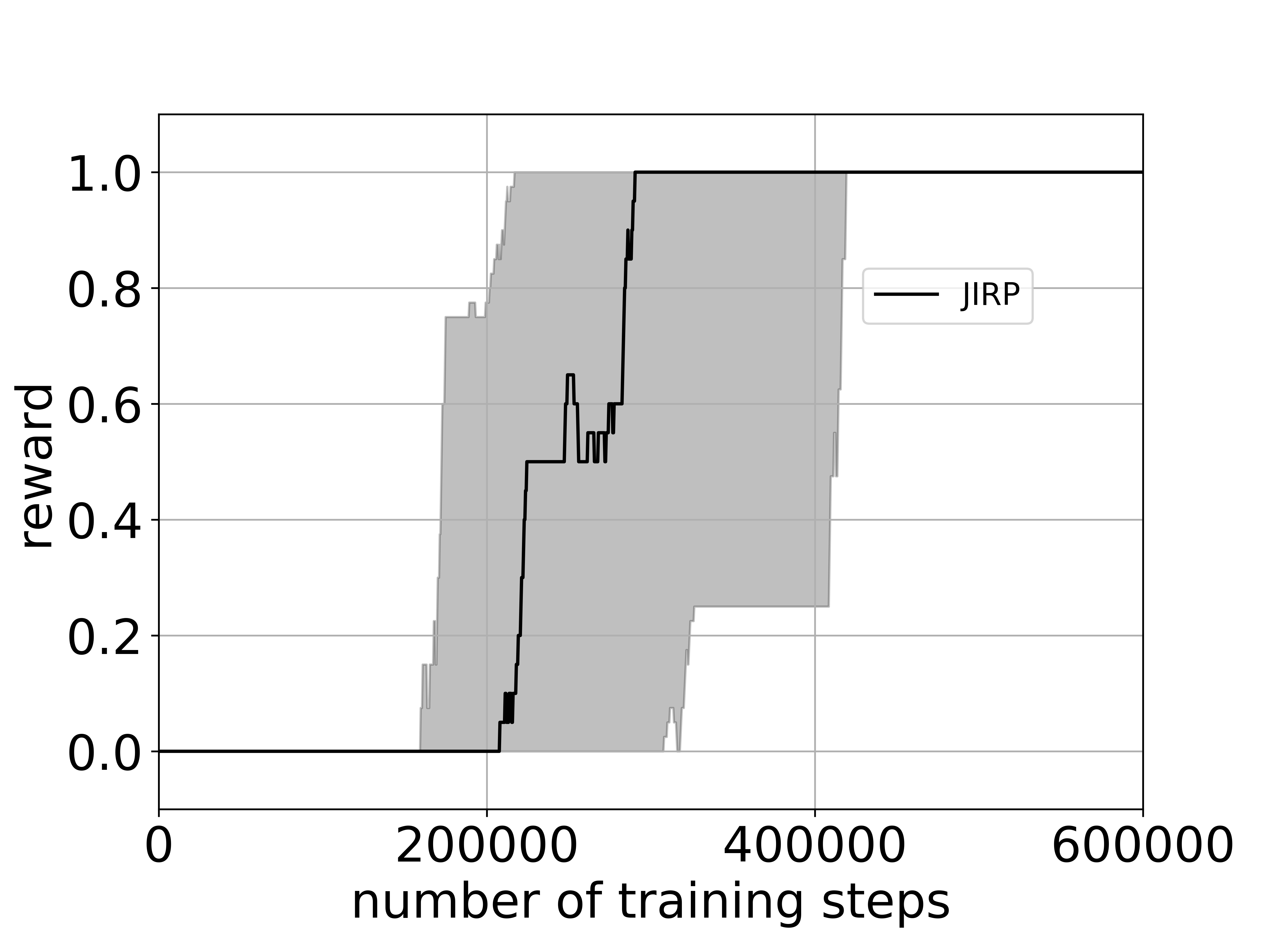}
		\caption{}
	\end{subfigure}
	\begin{subfigure}[b]{0.3\textwidth}
		\centering
		\includegraphics[width=\textwidth]{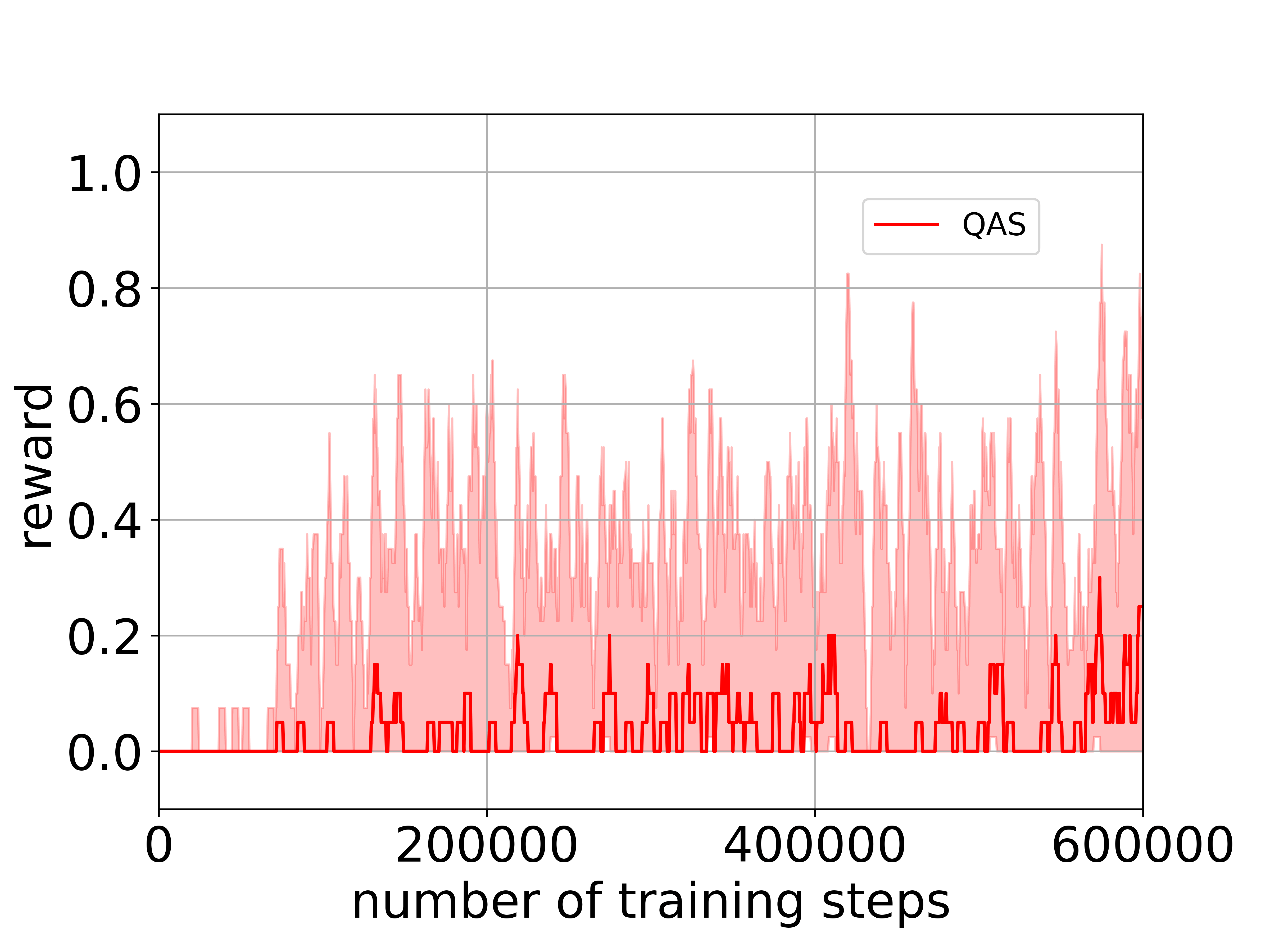}
		\caption{}
	\end{subfigure}
	\begin{subfigure}[b]{0.3\textwidth}
		\centering
		\includegraphics[width=\textwidth]{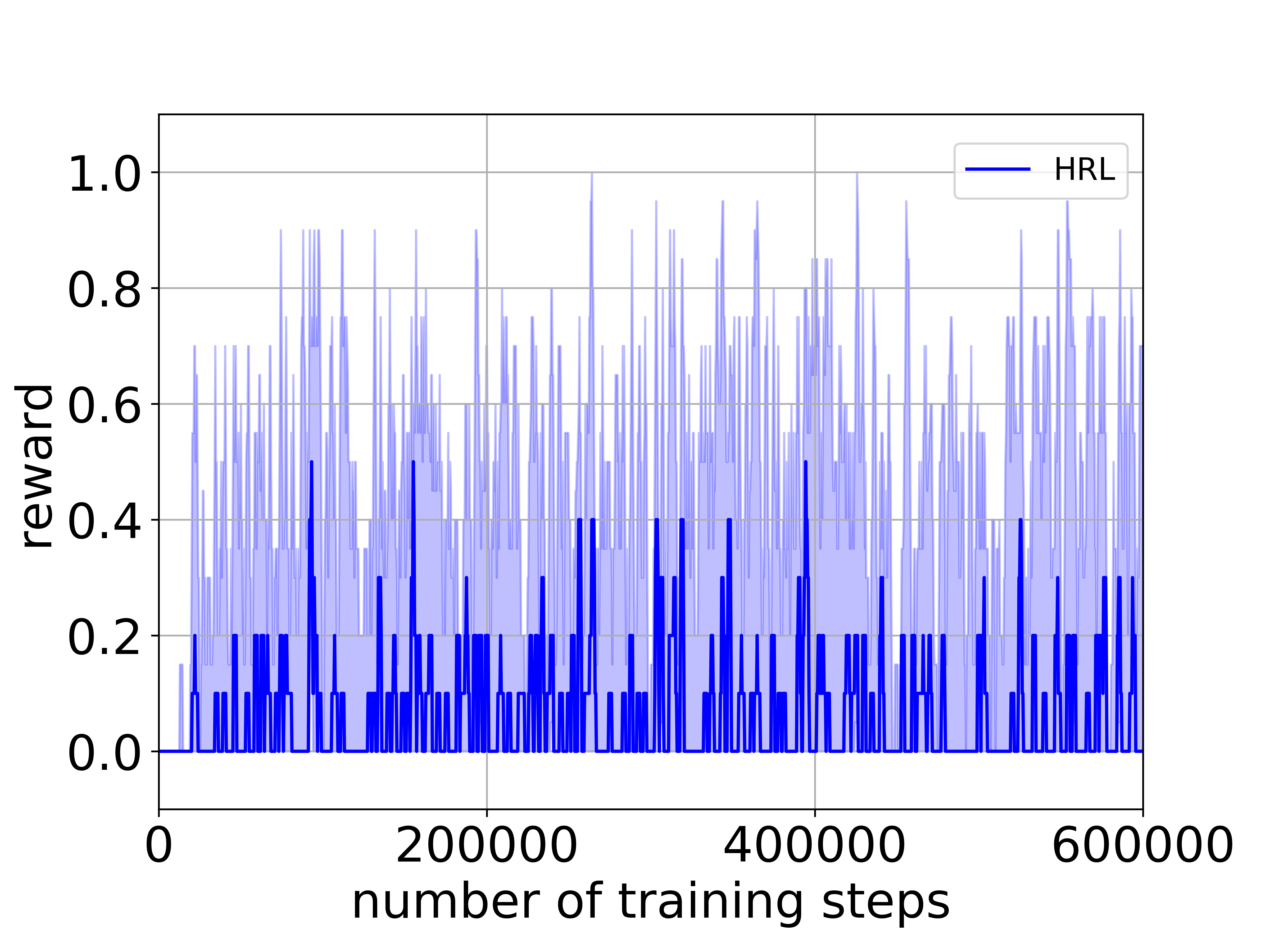}
		\caption{}
	\end{subfigure}
	\caption{Cumulative rewards of 10 independent simulation runs averaged for every 10 training steps for \craftD\ in the \craft: (a) \methodA; (b) \methodB; (c) \methodC.}  
	\label{case2_task4}
\end{figure*}

}{}


}{}

\end{document}